\def\eqref#1{equation~\ref{#1}}
\def\1{\bm{1}}
\def\1{\mathbf{1}}
\newcommand{\shorteq}{%
	\settowidth{\@tempdima}{-}
	\resizebox{\@tempdima}{\height}{=}%
}
\definecolor{betterblue}{rgb}{0,0,0.65}
\newcommand{\kibitz}[2]{\ifnum\Comments=1{\textcolor{#1}{{\footnotesize #2}}}\fi}
\theoremstyle{plain}
\newlist{todolist}{itemize}{2}
\setlist[todolist]{label=$\square$}
\newcommand{\indep}{\perp \!\!\! \perp}
\theoremstyle{plain}
\newtheorem{theorem}{Theorem}
\newtheorem{proposition}{Proposition}
\newtheorem{lemma}{Lemma}
\theoremstyle{definition}
\newcounter{assume}
\newtheorem{assumption}[assume]{Assumption}
\theoremstyle{remark}
\newtheorem{remark}{Remark}
\newtheorem*{theorem*}{Theorem}
\newcommand{\argmax}{\operatornamewithlimits{argmax}}
\newcommand{\argmin}{\operatornamewithlimits{argmin}}
\begin{document}

\twocolumn[

\aistatstitle{Optimal Downsampling for Imbalanced Classification with Generalized Linear Models}

\aistatsauthor{ Yan Chen \And Jose Blanchet \And  Krzysztof Dembczynski \And Laura Fee Nern \And Aaron Flores}

\aistatsaddress{ Duke University \And  Stanford University \And Yahoo \And Yahoo \And Yahoo} 

]

\begin{abstract}
    Downsampling or under-sampling is a technique that is utilized in the context of large and highly imbalanced classification models. We study optimal downsampling for imbalanced classification using generalized linear models (GLMs). We propose a pseudo maximum likelihood estimator and study its asymptotic normality in the context of increasingly imbalanced populations relative to an increasingly large sample size. 
    We provide theoretical guarantees for the introduced estimator. Additionally, we compute the optimal downsampling rate using a criterion that balances statistical accuracy and computational efficiency. Our numerical experiments, conducted on both synthetic and empirical data, further validate our theoretical results, and demonstrate that the introduced estimator outperforms commonly available alternatives.    
\end{abstract}

\section{Introduction}\label{intro}
The problem of training a machine learning classification model with imbalanced populations (or, equivalently, predicting rare events given contextual data) arises in a wide range of applications such online advertisement, healthcare, insurance or fraud detection\cite{basha2022review,haixiang2017learning,krawczyk2016learning,sisodia2022data,sisodia2023hybrid,razzaghi2015fast,huda2016hybrid,li2017adaptive,hassan2016modeling,more2017review,yan2015deep}). 
The rule of thumb in these imbalanced settings is downsampling or under-sampling \cite{lee2022downsampling,zhang2014downsampled,wu2009low,elad1997restoration,taha2021multilabel,li2017rare,basha2022review,haixiang2017learning,krawczyk2016learning,ksieniewicz2018undersampled}. For example, in online advertising the conversion rate is usually less than $0.1\%$ out of tens of millions of impressions, making downsampling a necessity for training. The downsampling process involves sampling a proportion of the majority population according to a suitable sampling rate, called the downsampling rate. 
This technique involves sampling a proportion of the majority population according to a suitable sampling rate, called the downsampling rate. After that, a model is directly trained on the down-sampled set of instances. If the model is used directly on test data, it will be expected to predict according to the original distribution, so a correction is needed either during training or during the prediction phase. There are multiple ways in which one can correct for downsampling, these methods are well documented in the literature, one can either apply a proper rescaling of the estimated conditional probabilities~\cite{Elkan_2001}, one can reweight the samples in training procedure, or one can apply an additional correction step (using, for example, isotonic regression~\cite{Niculescu-Mizil_Caruana_2005}). All of the above techniques will correct biases induced by downsampling, but obviously they have different mean-squared error.

Perhaps surprisingly, given how prevalent downsampling is in practice, as far as we know, a systematic study of the ``best" estimator (at least in theory) and its practical implications for applying downsampling has not been studied. The most recent paper that partially addresses these issues is \cite{wang2020logistic}, where the author explored the asymptotic property of downsample estimator under imbalanced classification. But analysis there was constrained within logistic regression model and didn't include any guidelines for selecting downsample rate.
This motivates our study. While most of our discussion focuses on generalized linear models, our results also offer insights into the application of downsampling in classification tasks using neural networks. 

Our goal is to offer new estimators that are not only consistent and asymptotically normal, but are guided by optimal design in terms of variance. It is well known that the maximum likelihood estimator attains the Cramer-Rao lower bound and therefore it is optimal in this sense. We then use this minimum variance insight to help users select an optimal downsampling rate. 

\paragraph{Contributions and Overview of Results.} Our contributions are as follows:
\begin{enumerate}
    \item[1)] Provide an explicit characterization of the maximum likelihood estimator in highly imbalanced data. 
    \vspace{-0.3cm}
    \item[2)] Introduce new MLE-informed estimators that offer significant improvements relative to the available alternatives.
    \vspace{-0.3cm}
    \item[3)] Provide guidance for the choice of optimal downsampling rates. 
\end{enumerate}

In order to formally capture highly imbalanced data, we consider an asymptotic regime of the form $\mathbb{P}(Y=1|X=x)=1-F(\tau_n+\theta_{*}^T x)\equiv\bar{F}(\tau_n+\theta_{*}^T x)$, where $F$ is the \textit{cumulative distribution function} (c.d.f.) of an underlying latent variable. The highly imbalanced data setup is captured by the location parameter $\tau_n\rightarrow\infty$ so that $n(1-F(\tau_n))\rightarrow\infty$. By doing this, we capture a wide range of situations for which data is imbalanced relative to the sample size. We do not think that the data is becoming increasingly imbalanced as the sample size grows, rather, the asymptotic statistics provide tools for approximate inference that are applied for a fixed sample size and, in the same spirit, our scaling introduces approximations that are valid within a wide range of imbalanced proportions.   
This creates a regime of data imbalance in which the minority proportion converges to zero as the location parameter $\tau_n$ becomes extreme. This setup is useful in practice. For example, if $n=10^6$, $\tau_n=0.6\log(n)\approx8$, and $\theta_*^Tx\geq1$. If $F$ follows a logistic regression model such that $F(\tau_n+\theta_*^Tx)=e^{\tau_n+\theta_*^Tx}/(1+e^{\tau_n+\theta_*^Tx})$, then the conversion rate $\mathbb{P}(Y=1|X=x)=1/(1+e^{\tau_n+\theta_*^Tx})<0.1\%$. So by appropriately controlling the growth rate of $\tau_n$, we can accurately recover the ratio of the rare events that are the focus of our analysis.

Firstly, we show that the prediction score from down-sample still follows a GLM structure, such that $\mathbb{P}(\tilde{Y}=1|\tilde{X}=x)=1-G(\tau_0+\theta_{*}^T x)\equiv\bar{G}(\tau_0+\theta_{*}^T x)$, where $(\tilde{Y},\tilde{X})$ is the random variable induced from downsample data, $\alpha$ is the downsample rate, and $G(z)=\frac{\alpha F(z)}{1-(1-\alpha)F(z)}$ is also a c.d.f. Based on this discovery, we propose a \textit{pseudo maximum likelihood estimator} (pseudo MLE) which can be computed directly on the downsample from imbalanced data by (\ref{pseudo-MLE}). 

Secondly, we obtain the asymptotic normality of the proposed pseudo MLE (see Theorem \ref{thm:rate-of-convergence-infinity} and Theorem \ref{thm:generalize:asymptotics}). The estimator is unbiased and has generally larger variance than the full-sample estimator (with zero downsampling rate). But for some small values of $\alpha$ the asymptotic variance stays the same as that of the full-sampling estimator, meaning that downsampling with these $\alpha$ results in no efficiency loss at all. We conduct numerical experiments and apply our estimator to logistic regression for different values of $\tau_n$. We use the mean squared error metric and compare the pseudo MLE with two commonly used alternative estimators: the inverse-weighting estimator (\ref{eq:inverse-weighting}) employed by \cite{wang2020logistic}, and the conditional MLE (\ref{eq:conditional-MLE}). The findings are that our estimator outperforms both of them when $\tau_n$ is large, but as $\tau_n$ decreases, it loses its advantage gradually, which is consistent with our theoretical conclusions. 

Additionally, we introduce a notion that trades the statistical error induced by the downsampling mechanism with the computational benefits derived by downsampling. We adapt the framework of \cite{glynn1992asymptotic} to introduce a concept of \textit{efficiency cost} that reflects the trade-off between statistical error and the computational cost. Our reasoning is as follows.

Suppose that there is a given computational routine to be applied in the training process. We assume that the routine produces an estimator with an $\epsilon$ error with complexity $O(\log(1/\epsilon)) \times n \times (\alpha(1-p_1)+p_1)$, where $n$ is the size of the original training set, $\alpha$ is the downsampling rate and $p_1$ is the minority population rate. The contribution of $\log(1/\epsilon)$ arises when training via gradient descent of a smooth convex loss \cite{rakhlin2011making,foster2019complexity,bubeck2015convex,johnson2013accelerating} and is used as an illustration of our reasoning (and is applicable to logistic regression, for example); the discussion that follows can be adapted to the cost incurred using other methods with different assumptions. In the instance we adopted here, consequently, with a computational budget of size $b = c\times(\alpha(1-p_1)+p_1)n\log(n)\log(\log(n))$ for some $c>0$ we obtain an overall estimator with a mean squared error $\sigma(\alpha)^2/n+o(1/n)$, where $\sigma(\alpha)^2/n$ is the mean squared error of the asymptotic downsampled estimator. We can write $n$ in terms of the budget, obtaining, for large $n$, $n = b(1+o(1))/(c\times(\alpha(1-p_1)+p_1)\log(b)\log(\log(b)))$. Therefore, minimizing the total error subject to a given budget constraint is asymptotically equivalent to just minimizing $\sigma(\alpha)^2\times(\alpha(1-p_1)+p_1)$ as a function of $\alpha$. We provide expressions to guarantee this optimal choice of $\alpha$ in practice in Section~\ref{section:efficiency} and we apply our theoretical findings to logistic regression in Section~\ref{sec:application:logistic regression}. 

Finally, we apply our estimator to real-world imbalanced datasets using both logistic regression and neural network models. The numerical results indicate that our estimator performs well in this experiment. More details can be found in Section \ref{sec:empirical} and Appendix \ref{appendix:additional simulation}. 

\paragraph{Related Work} The concept of Generalized Linear Model (GLM) has a long history dating back to as early as its introduction by \cite{nelder1972generalized} in the 1970s. 
Specifically, GLM plays an important role in classification tasks both in theory and applications \cite{gorriz2021connection, dobson2018introduction,deng2022model}. For example, a statistical test was developed by \cite{gorriz2021connection} based on GLM for pattern classification in machine learning applications. A high-dimensional GLM classification problem was explored by \cite{hsu2021proliferation} via support vector machine classifiers. GLM classifiers were also applied by \cite{ding2005classification,arnold2020reflection} in healthcare and computational biology.  

Particularly, classification for imbalanced data has generated the interest of many researchers \cite{japkowicz2000learning,king2001logistic,chawla2004special,chawla2010data,douzas2017self,lemaavztre2017imbalanced,estabrooks2004multiple,fithian2014local,han2005borderline,mathew2017classification,rahman2013addressing,drummond2003c4,owen2007infinitely}. In this context, also referred to as a rare event setting, one class within a population has significantly fewer instances than the other. Specifically, for binary labeled data, cases (the label group with fewer observations) are observed much less frequently than controls (the other label group). One example is online advertising data, where typically only 1 conversion is observed out of $1{,}000$ to $1{,}000{,}000$ impressions \cite{lee2012estimating,lee2021comparison,shah2021impacts,jiang2021multi}. An extensive survey of the area is given by \cite{chawla2004special}, where significant attention was given to downsampling methods, in which some available controls are removed for a balanced subsample. Another approach is oversampling \cite{douzas2017self,mathew2017classification,wang2020logistic,shelke2017review,yan2019oversampling} where additional instances are generated as cases.

Our work focuses on exploring the effects of downsampling \cite{fithian2014local,wang2020logistic,chawla2004special}, in which we maintain all the cases and uniformly sample an equal number of controls to make a balanced subsample dataset. This is also referred to as ``case-control sampling'' by previous literature \cite{fithian2014local}. A method of subsampling for logistic regression proposed by \cite{fithian2014local} was to adjust the class balance locally in feature space via an accept-reject scheme. 

The findings of a recent paper \cite{wang2020logistic} suggest that the available information in rare event data for binary logistic regression is at the scale of the number of positive examples, and downsampling a small proportion of the controls may induce an estimator with identical asymptotic distribution to the full-data maximum likelihood estimator (MLE). Besides, a different perspective was provided by \cite{drummond2003c4} for the performance analysis of sampling technique by considering cost sensitivity of misclassification. Motivated by the previous work, we study the imbalanced binary classification problem with generalized linear model structure. Similar to \cite{wang2020logistic}, we derived the asymptotic distribution of the induced downsampling estimator, and come to a similar conclusion that downsampling a small proportion of the controls by selecting downsampling rate within a certain range results in identical statistical efficiency as the full-sample MLE. 

However, our paper is more general than \cite{wang2020logistic} in the following aspects. Firstly, we consider generalized linear model as a more general and commonly seen family of models. Secondly, we use a new covariate-adapted estimator based on the correct maximum likelihood estimator for the downsample distribution, which turned out to be more efficient than the estimator used by \cite{wang2020logistic} under rare-event setup. Thirdly, we also provide clear guidance on the selection of downsample rate.

\paragraph{Notation} Given a thrice differentiable function $f$, we use $f^{(1)}$ or $f^{\prime}$, $f^{(2)}$ or $f^{\prime\prime}$, and $f^{(3)}$ or $f^{\prime\prime\prime}$ to denote the first-order, second-order or the third-order derivatives. We use $\bar{f}(\cdot)$ to denote $1-f(\cdot)$. For a sequence of random variables $Z_1,Z_2,\ldots$ in a metric space $\mathcal{Z}$, we say $Z_n\overset{d}{\rightarrow}Z$ if $Z_n$ converges in distribution to $Z$, and $Z_n\overset{p}{\rightarrow}Z$ if $Z_n$ converges in probability to $Z$. Given matrix $A, B$, $A\succ B$ indicates $A-B$ is positive definite. Given target parameter $\theta\in\mathbb{R}^d$ and its estimator $\hat{\theta}\in\mathbb{R}^d$, we use mean-squared-error or mean squared estimation error to refer to  {\footnotesize$\mathbb{E}\left[\norm{\hat{\theta}-\theta}^2\right]$}. We denote $\mu(\cdot)$ as the density of $X$ and $\mathcal{X}$ as the covariate space. 

\section{Problem Setup}
We observe an i.i.d.\ generated dataset $\{(Y_i,X_i)\}_{i=1}^{n}$, where $X_i\in\mathbb{R}^d, d\geq1$, $Y_i\in\{0,1\}$. We use $\mathbb{P}$ to denote the joint probability distribution of $(X_i,Y_i)$, and $\mathbb{E}[\cdot]$ to denote the expectation induced by $\mathbb{P}$. The data generating process follows \textit{Generalized Linear Model} (GLM), such that given a latent random variable $Z$, the observed binary label $Y$ given covariate $X$ is defined as {\footnotesize$Y = \mathbf{1}\left(Z>\tau_n+\theta_*^TX\right)$}, with {\footnotesize$\Bar{F}_Z(\tau_n+\theta_*^Tx)=\mathbb{P}\left(Y=1\big|X=x\right)$}, where $\bar{F}(\cdot):=\bar{F}_Z(\cdot)=1-F_Z(\cdot)$ and $F_Z(\cdot)$ is the \textit{cumulative distribution function} (c.d.f.) of $Z$. We are interested in studying the impact of downsampling for the unbalanced dataset with $\mathbb{P}(Y=1)\ll\mathbb{P}(Y=0)$. To do this we keep all the positive samples (i.e., those with $Y_i=1$), and uniformly sample a proportion $\alpha\in(0,1]$ of the samples with $Y_i=0$, so that we get a more balanced downsample $\{(\tilde{Y}_i,\tilde{X}_i)\}_{i=1}^N$ of size {\footnotesize$N=np_1+(1-p_1)n\alpha$}, where $p_1$ is the ratio of positive samples. We use $\tilde{\mu}$ to denote the joint density of downsample random pairs $(\tilde{Y},\tilde{X})$. 

Specifically, we consider the case where $\tau_n$ is a known sequence such that $\tau_n\rightarrow\infty$ as $n\rightarrow\infty$. This corresponds to the rare-event setup where $\bar{F}(\tau_n+\theta_{*}^T x)\rightarrow0$ for any $x\in\mathcal{X}$, where $\mathcal{X}$ is bounded in $\mathbb{R}^d$. 
The assumption that $\tau_n\rightarrow\infty$ is without loss of generality by noting that if $\tau_n\rightarrow-\infty$, then $\mathbb{P}(Y=0)\approx0$ and $\mathbb{P}(Y=1)\approx1$, so the analysis is similar to the case $\tau_n\rightarrow$ by switching the role of $\mathbb{P}(Y=0)$ and $\mathbb{P}(Y=1)$. We consider the following data generation process: Given sample size $n$ and $\tau_n$, we observe i.i.d. data $\{X_i,Y_i\}_{i=1}^n$ with 
$$\mathbb{P}(Y_i=1|X_i)=\bar{F}(\tau_n+\theta_*^T X_i)=1-F(\tau_n+\theta_*^T X_i),$$
where $\theta_*$ is the estimation target. 

\section{Proposed Estimator}\label{sec:calibration}
In this section, we first demonstrate that under Generalized Linear Model (GLM) setup, the induced variables by downsampling still follow a GLM model. Based on this finding, we propose a \textit{pseudo maximum likelihood estimator} which can be computed directly from downsample. 

Given joint law $\mathbb{P}$ of the full-sample random variable $(Y,X)$, and denote the induced joint law of downsample random variable $(\tilde{Y},\tilde{X})$ as $\tilde{P}$, there exists a monotone function $m$ such that $\mathbb{P}(Y=1|X;\theta)=m(\tilde{P}(Y=1|X;\theta))$, where the probability distributions are parametrized by $\theta\in\Theta$. Note that this justifies the correction method for downsampling by imposing a monotone transformation on the predicted scores based on the model trained from the downsample. Besides, we explicitly formulate the downsample maximum likelihood estimator (MLE), where the target parameter for the downsample MLE coincides with the one from the full-sample model. 

One common practice for estimation on imbalanced dataset is that people usually fit the parameter as if the downsample model still belongs to the same class of models as $F(\cdot)$, after which they obtain $F(\tau_n+\hat{\theta}_1^T X)$ as the score from the model trained on the downsample, and then use isotonic regression, i.e. utilizing a monotone mapping $g(\cdot)$ so that $g(F(\tau_n+\hat{\theta}_1^T X))$ is obtained as the final output of their prediction score given any covariate $X_i$, where given $\tau_n$ known, $\hat{\theta}_1$ is achieved by solving the following problem:
\begin{equation}\label{eq:downsample:original-model}
\!\!\!\!\!
\begin{array}{r@{}l}    
\hat{\theta}_1&=\argmax_{\theta_1}\frac{1}{N}\sum_{i=1}^N\tilde{Y}_i\log(1-F(\tau_n+\theta_1^T \tilde{X}_i))\\
&\quad+(1-\tilde{Y}_i)\log F(\tau_n+\theta_1^T \tilde{X}_i).
\end{array}
\end{equation}
Lemma \ref{lemma:counterexample:downsample-use-original} in Appendix \ref{appendix:calibration:downsampling} shows a counterexample to illustrate why this method can lead to biased prediction score for some covariates. Specifically, Lemma \ref{lemma:counterexample:downsample-use-original} constructs a counterexample where $\hat{\theta}_1$ leads to biasedness. And if $h(x)=\tilde{\theta}_1^T x$ maps two distinct $x,x^\prime$ to the same value while $\theta_*^T x\neq\theta_*^T x^\prime$, and the c.d.f. $F(\cdot)$ is strictly increasing, then the true model has different probabilities for $x,x^\prime$ whereas the model trained on the downsample maps them to the same prediction score, thus both parameter estimator and the induced prediction score model are biased. 

\begin{proposition}[Downsample Prediction Score]\label{lemma:downsample GLM}
The prediction score of the downsampled random variables follows 
$\mathbb{P}(\tilde{Y}_i=1|\tilde{X}_i)=\frac{\mathbb{P}(Y_i=1|X_i=1)}{\alpha+(1-\alpha)\mathbb{P}(Y_i=1|X_i=1)}.$
Hence 
{\footnotesize$\{(\tilde{Y}_i,\tilde{X}_i)\}_{i=1}^N$} still follows GLM, with the conditional probability {\footnotesize$\mathbb{P}(\tilde{Y}_i=1|\tilde{X}_i)=\bar{G}(\tau_n+\theta_1^T\tilde{X}_i)$}, and $\bar{G}(z)=1-G(z)$, where  
\begin{equation}\label{eq:G}
    G(z):=\frac{\alpha F(z)}{1-(1-\alpha)F(z)}
\end{equation}
is the c.d.f. of some random variable. 
\end{proposition}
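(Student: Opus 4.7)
The plan is to prove the two claims of Proposition~\ref{lemma:downsample GLM} by: (i) applying Bayes' rule to the selection indicator that describes the downsampling mechanism, and (ii) doing an algebraic simplification to match the claimed GLM form, followed by a check that $G$ is a bona fide c.d.f.

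First, I would introduce an auxiliary Bernoulli variable $S_i$ encoding whether observation $i$ is retained by the downsampling procedure: $S_i = 1$ deterministically when $Y_i = 1$, and $S_i \mid (Y_i = 0, X_i) \sim \mathrm{Bern}(\alpha)$, with $S_i$ conditionally independent of $X_i$ given $Y_i$. Then by construction $(\tilde{Y}_i, \tilde{X}_i)$ has the conditional law of $(Y_i, X_i)$ given $S_i = 1$. Bayes' rule gives
\begin{equation*}
\mathbb{P}(\tilde{Y}_i = 1 \mid \tilde{X}_i = x) \;=\; \frac{\mathbb{P}(S_i = 1 \mid Y_i = 1, X_i = x)\,\mathbb{P}(Y_i = 1 \mid X_i = x)}{\mathbb{P}(S_i = 1 \mid X_i = x)}.
\end{equation*}
The numerator equals $\mathbb{P}(Y_i = 1 \mid X_i = x)$ since positives are retained with probability one, while the denominator decomposes as $\mathbb{P}(Y_i = 1 \mid X_i = x) + \alpha\,\mathbb{P}(Y_i = 0 \mid X_i = x)$. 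Rearranging yields exactly the first displayed identity of the proposition.

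Next, I would substitute the GLM form $\mathbb{P}(Y_i = 1 \mid X_i = x) = \bar{F}(\tau_n + \theta_*^T x) = 1 - F(\tau_n + \theta_*^T x)$ into the expression just derived. Writing $z = \tau_n + \theta_*^T x$ and simplifying the ratio,
\begin{equation*}
\frac{1 - F(z)}{\alpha + (1-\alpha)(1 - F(z))} \;=\; \frac{1 - F(z)}{1 - (1-\alpha) F(z)} \;=\; 1 - \frac{\alpha F(z)}{1 - (1-\alpha) F(z)} \;=\; \bar{G}(z),
\end{equation*}
which is the claimed GLM conditional probability with the same linear index $\tau_n + \theta_*^T x$.

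Finally, to confirm $G$ is a c.d.f., I would verify monotonicity, the correct limits at $\pm\infty$, and right-continuity. Viewing $G$ as a composition with $F$, the map $u \mapsto \alpha u / (1-(1-\alpha)u)$ is strictly increasing on $[0,1]$ (its derivative $\alpha / (1-(1-\alpha)u)^2$ is positive for $\alpha \in (0,1]$), sends $0 \mapsto 0$ and $1 \mapsto 1$, and is continuous on $[0,1]$; thus composition with the non-decreasing, right-continuous $F$ preserves these properties. This step is essentially routine once the algebraic identity is established. The main obstacle, such as it is, is conceptual rather than technical: one must carefully set up the selection-indicator representation of downsampling so that Bayes' rule applies cleanly and the conditional independence of $S_i$ and $X_i$ given $Y_i$ (which is precisely what uniform downsampling of the negatives buys us) is used correctly in the denominator.
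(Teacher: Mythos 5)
Your proposal is correct and takes essentially the same route as the paper, which represents the retained pair as $\mathbf{1}(Y=1)(Y,X)+\mathbf{1}(Y=0)\mathbf{1}(U\leq\alpha)(Y,X)$ and divides the induced joint law by the marginal of $\tilde{X}$ — your Bayes'-rule computation with the selection indicator $S_i$ is the same calculation in different notation, and both arguments finish with the same algebraic identification of $\bar{G}$ and a routine check that $G$ is a c.d.f. (Your check via monotonicity of $u\mapsto\alpha u/(1-(1-\alpha)u)$ on $[0,1]$ is marginally cleaner than the paper's, which differentiates $\bar{G}$.)
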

Proposition \ref{lemma:downsample GLM} illustrates that the downsample random variables generated from GLM still follow a GLM, where $F(z)=\frac{G(z)}{\alpha+(1-\alpha)G(z)}$, indicating that the true class probabilities are the monotonic transformations of the predicted class probabilities conditional on the downsample. Based upon this, the joint likelihood of the downsample random variables is as follows.

\begin{proposition}[Downsample Joint Likelihood of $(\tilde{Y},\tilde{X})$]\label{lemma:identification-theta}
Given any $\tau_n$, the maximum likelihood estimator for the target estimand $\theta_*$ (i.e. the true value of the unknown parameter) is 
{\begin{equation}\label{eq:identification}
    \hat{\theta}:=\argmax_{\theta_1}\frac{1}{N}\sum_{i=1}^N\ell(\tilde{X}_i,\tilde{Y}_i,\theta_1;\tau_n),
\end{equation}}
where 
{\begin{equation}\label{eq:likelihood-downsample}
\begin{array}{rl}
&\quad\displaystyle\ell(\tilde{x},\tilde{y},\theta_1;\tau_n)\\
&=\displaystyle\tilde{y}\log\bar{F}(\tau_n+\theta_1^T \tilde{x})+(1-\tilde{y})\log\alpha F(\tau_n+\theta_1^T \tilde{x})\\
&\quad\displaystyle-\log\int_{\mathcal{X}}\left[1-(1-\alpha)F(\tau_n+\theta_1^T x)\right]\mu(x)dx.
\end{array}
\end{equation}}

\end{proposition}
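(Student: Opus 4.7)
The plan is to factor the downsample joint density of $(\tilde X,\tilde Y)$ into the conditional probability $\tilde P(\tilde Y=\tilde y\mid \tilde X=\tilde x)$ already supplied by Proposition~\ref{lemma:downsample GLM} and the induced marginal density $\tilde\mu$ of $\tilde X$, then take logs and observe that a clean cancellation collapses the expression to the stated $\ell$. This recovers the MLE \eqref{eq:identification} on the downsampled data.

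First I would derive $\tilde\mu$. Because the downsampling rule keeps every sample with $Y=1$ and keeps each sample with $Y=0$ independently with probability $\alpha$, the probability that an observation with covariate $x$ survives is $\bar F(\tau_n+\theta_*^T x)+\alpha F(\tau_n+\theta_*^T x)=1-(1-\alpha)F(\tau_n+\theta_*^T x)$. Bayes' rule then gives
$$\tilde\mu(x)=\frac{\mu(x)\bigl[1-(1-\alpha)F(\tau_n+\theta_*^T x)\bigr]}{\int_{\mathcal X}\mu(x')\bigl[1-(1-\alpha)F(\tau_n+\theta_*^T x')\bigr]\,dx'}.$$
Treating this as the marginal of the parametric working model (with $\theta_*$ replaced by the free parameter $\theta_1$), I would then combine with the conditional law from Proposition~\ref{lemma:downsample GLM} and substitute the explicit formulas $\bar G(z)=\bar F(z)/[1-(1-\alpha)F(z)]$ and $G(z)=\alpha F(z)/[1-(1-\alpha)F(z)]$. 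A key observation is that for both $\tilde y=0$ and $\tilde y=1$ the conditional piece $\tilde y\log\bar G+(1-\tilde y)\log G$ contributes the additive term $-\log[1-(1-\alpha)F(\tau_n+\theta_1^T x)]$, which exactly cancels the $+\log[1-(1-\alpha)F(\tau_n+\theta_1^T x)]$ coming from the numerator of $\tilde\mu(x)$. After dropping the $\theta_1$-independent term $\log\mu(x)$, what survives is precisely $\ell$ as in \eqref{eq:likelihood-downsample}, and averaging over the sample gives \eqref{eq:identification}.

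Identifying $\theta_*$ as the target estimand is then immediate: by Proposition~\ref{lemma:downsample GLM} the working model is correctly specified at $\theta_1=\theta_*$, so standard MLE arguments for correctly specified likelihoods make $\theta_*$ the unique population maximizer of $\mathbb E[\ell(\tilde X,\tilde Y,\theta_1;\tau_n)]$ under the usual identifiability condition on the pair $(F,\mu)$. The main obstacle is purely bookkeeping in the cancellation step: one has to treat $\tilde y=0$ and $\tilde y=1$ simultaneously and recognize that, despite appearing in different branches, the coefficients of $-\log[1-(1-\alpha)F]$ from the conditional and $+\log[1-(1-\alpha)F]$ from the marginal match exactly. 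Everything else is direct algebra.
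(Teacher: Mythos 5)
Your proposal is correct and follows essentially the same route as the paper: factor the downsampled joint law as the conditional $\bar G/G$ from Proposition~\ref{lemma:downsample GLM} times the marginal $\tilde\mu(x)\propto[1-(1-\alpha)F(\tau_n+\theta_*^Tx)]\mu(x)$ (the paper's Lemma~\ref{lemma:distribution-downsample-covariate}, which you rederive via the acceptance probability), observe that the $\log[1-(1-\alpha)F]$ terms cancel in both branches of $\tilde y$, drop the $\theta_1$-free $\log\mu(x)$, and identify the denominator with $\mathbb{P}(Y=1)+\alpha\mathbb{P}(Y=0)=\int_{\mathcal X}[1-(1-\alpha)F(\tau_n+\theta_1^Tx)]\mu(x)\,dx$ in the working model. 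Your closing remark about correct specification and the population maximizer matches the paper's appeal to the law of large numbers and $\theta_*\in\argmax M(\theta_1;\tau_n)$, so nothing further is needed.
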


Proposition \ref{lemma:identification-theta} exhibits the empirical maximum likelihood estimator of downsample GLM with respect to the joint distribution of $(\tilde{Y},\tilde{X})$. However, we note that (\ref{eq:likelihood-downsample}) utilizes the marginal density of full-sample $X$, which is expensive to estimate for large-scale imbalanced data. To tackle this, we first note that the density of down-sample $\tilde{X}$ can be written as $\tilde{\mu}(x) = \frac{[1-(1-\alpha)F(\tau_n+\theta_*^T x)]\mu(x)}{\mathbb{P}(Y=1)+\alpha\mathbb{P}(Y=0)}$ (see Lemma \ref{lemma:distribution-downsample-covariate} in Appendix \ref{appendix:calibration:downsampling}), where $\mu(\cdot)$ is the density of full-sample $X$. Thus the last term in (\ref{eq:likelihood-downsample}) can be rewritten as 
{$$\begin{array}{rl}
&\quad\displaystyle \log\int_{\mathcal{X}}\left[1-(1-\alpha)F(\tau_n+\theta_1^T x)\right]\mu(x)dx\\
\\
&=\displaystyle \log\int_{\mathcal{X}}[1-(1-\alpha)F(\tau_n+\theta_1^T x)]\times\\
&\quad\quad\quad\quad\displaystyle \frac{\mathbb{P}(Y=1)+\alpha\mathbb{P}(Y=0)}{1-(1-\alpha)F(\tau_n+\theta_*^T x)}\tilde{\mu}(x)dx.
\end{array}$$}
Note that when $\tau_n\rightarrow\infty$, $1-(1-\alpha)F(\tau_n+\theta_*^Tx)\rightarrow\alpha$, and $\mathbb{P}(Y=1)+\alpha\mathbb{P}(Y=0)=\mathbb{E}[1-(1-\alpha)F(\tau_n+\theta_*^TX)]\rightarrow\alpha$, thus as $n\rightarrow\infty$, we have
\begin{equation}\label{eq:approximation}
\begin{array}{rl}
\frac{\log\int_{\mathcal{X}}[1-(1-\alpha)F(\tau_n+\theta_1^T x)]\frac{\mathbb{P}(Y=1)+\alpha\mathbb{P}(Y=0)}{1-(1-\alpha)F(\tau_n+\theta_*^T x)}\tilde{\mu}(x)dx}{\log\int_{\mathcal{X}}[1-(1-\alpha)F(\tau_n+\theta_1^T x)]\tilde{\mu}(x)dx}\rightarrow1.
\end{array}
\end{equation}
Furthermore, by the law of large numbers, the sample average $\frac{1}{N}\sum_{i=1}^N[1-(1-\alpha)F(\tau_n+\theta_1^T \tilde{X}_i)]$ converges in probability to $\int_{\mathcal{X}}[1-(1-\alpha)F(\tau_n+\theta_1^T x)]\tilde{\mu}(x)dx$. 
Based upon the discovery above, we propose the following pseudo maximum likelihood estimator computed from downsample data:
\begin{equation}\label{pseudo-MLE}
\begin{array}{rl}
\hat{\theta}_*: = \argmax_{\theta_1}\frac{1}{N}\sum_{i=1}^N\tilde{\ell}(\tilde{X}_i,\tilde{Y}_i,\theta_1;\tau_n),
\end{array}
\end{equation}
where 
{\begin{equation}\label{pseudo-likelihood}
\begin{array}{rl}
&\quad\tilde{\ell}(\tilde{X}_i,\tilde{Y}_i,\theta_1;\tau_n)\\
&=\tilde{Y}_i\log(1-{F}(\tau_n+\theta_1^T \tilde{X}_i))\\
&\quad+(1-\tilde{Y}_i)\log\alpha F(\tau_n+\theta_1^T \tilde{X}_i)\\
&\quad-\log\left\{\frac{1}{N}\sum_{i=1}^N\left[1-(1-\alpha)F(\tau_n+\theta_1^T \tilde{X}_i)\right]\right\}.
\end{array}
\end{equation}}

In the following, we use $\hat{\theta}_*$ to denote the pseudo MLE computed from (\ref{pseudo-MLE}). 
\begin{remark}\label{rmk:tau_n}
Note that (\ref{eq:approximation}) holds only when $\tau_n\rightarrow\infty$, but the estimator $\hat{\theta}_*$ may be not consistent when $\tau_n<\infty$. So in practice the pseudo MLE should perform well for large values of $\tau_n$, and its benefit could disappear as $\tau_n$ decays. We verify this claim through numerical experiment on logistic regression models by comparing the performance of $\hat{\theta}_*$ under different values of $\tau_n$ in Section \ref{section:numerical}.
\end{remark}

Additionally, there are two alternative natural estimators for $\theta_*$, which are \textit{inverse-weighting estimator} and \textit{conditional maximum likelihood estimator} (a.k.a. \textit{conditional MLE}). The inverse-weighting estimator $\hat{\theta}_I$ is defined as 
\begin{equation}\label{eq:inverse-weighting}
\!\!\!\!\!%
\begin{array}{r@{}l}
\hat{\theta}^I&:=\argmax_{\theta_1}\!\frac{1}{N}\sum_{i=1}^N\tilde{Y_i}\log(1-F(\tau_n+\theta_1^T\tilde{X}_i))\!\\
&\quad\quad\quad\quad\quad\quad\quad\quad\quad+\frac{1-\tilde{Y_i}}{\alpha}\log F(\tau_n+\theta_1^T\tilde{X}_i),
\end{array}
\end{equation}
and the condtional MLE is defined as 
\begin{equation}\label{eq:conditional-MLE}
\!\!\!\!\!\!%
\begin{array}{r@{}l}
\hat{\theta}^C&:=\argmax_{\theta_1}\!\frac{1}{N}\sum_{i=1}^N\tilde{Y_i}\log(1-G(\tau_n+\theta_1^T\tilde{X}_i))\!\\
&\quad\quad\quad\quad\quad\quad\quad\quad+(1-\tilde{Y_i})\log G(\tau_n+\theta_1^T\tilde{X}_i),
\end{array}
\end{equation}
where $G(\cdot)$ is defined as in (\ref{eq:G}). By the theory of M-estimators \cite{van2000asymptotic}, both of the estimators are consistent, i.e. $\hat{\theta}^I\overset{p}{\rightarrow}\theta_*$ and $\hat{\theta}^C\overset{p}{\rightarrow}\theta_*$. We compare the performance of our proposed pseudo MLE $\hat{\theta}_*$ as (\ref{pseudo-MLE}) with both $\hat{\theta}^I$ and $\hat{\theta}^C$ through numerical experiments on synthetic data (see Section \ref{section:numerical}) and empirical data (see Section \ref{sec:empirical} and Appendix \ref{appendix:additional simulation}). We find that $\hat{\theta}_*$ outperforms $\hat{\theta}^I$ and $\hat{\theta}^C$ for large values of $\tau_n$, but it gradually under-performs as we decrease the value $\tau_n$, which verifies our conjecture as in remark \ref{rmk:tau_n}, implying its value under rare-event setup.

\section{Asymptotic Normality}\label{sec:asymptotic}
In this section, we discover that for the rare event setup, there is a certain range of downsample rate $\alpha\ll1$, as long as it doesn't go to zero too fast, downsampling with rate $\alpha$ maintains the statistical efficiency as that of the full-sampling estimator. To demonstrate our statement, we consider the case where {\footnotesize$\mathbb{P}(Y=1|X=x)=1-F(\tau_n+\theta_*^T X)\ \mbox{with}\ \  \tau_n\rightarrow\infty.$} 

\paragraph{Failure of Classical Maximum Likelihood Estimator Analysis} Given the definition of $\tilde{\ell}(\cdot,\cdot,\cdot;\tau_n)$ above, we note that {\footnotesize$\mathbb{E}\left[\tilde{\ell}(\tilde{X},\tilde{Y},\theta_*;\tau_n)\right]\rightarrow0$} regardless of $\theta_*$ if $\alpha>0$. Thus as $n\rightarrow\infty$, the criterion function doesn't rely on the value of $\theta_1$ in the varying-rate regime, hence the classical MLE theory cannot be applied here. A more detailed discussion is presented in the Appendix \ref{section:varying-rate-regime:appendix}. We now use an alternative way to derive the asymptotic normality of $\hat{\theta}_{*}$. 

\begin{assumption}\label{ass:F:regularity:basic}
    $F\in C^1(\mathbb{R})$ is strictly increasing, and the covariate space $\mathcal{X}\subset\mathbb{R}^d$ is compact. 
\end{assumption}

\begin{assumption}\label{ass:regularity}
The matrix $\mathbb{E}[XX^T]$ is nonsingular.
\end{assumption}

\begin{assumption}\label{ass:F-regularity}
$F$ is thrice differentiable and there exists a thrice differentiable function $h:\mathbb{R}\rightarrow\mathbb{R}_{+}$ such that the function {\footnotesize$\bar{F}(Z>\cdot+\tau_n)/\bar{F}(Z>\tau_n)$}
together with its first three derivatives converges uniformly over compact sets to $h(.)$ and its three first derivatives denoted as $h^{(1)}, h^{(2)}, h^{(3)}$, respectively.
\end{assumption}

The distributions that satisfy Assumption \ref{ass:F-regularity} include exponential distribution, logistic regression model, and etc. We now provide our first main result.

\begin{theorem}[Asymptotic Normality of MLE as $\tau_n\rightarrow\infty$]\label{thm:rate-of-convergence-infinity}
Suppose Assumptions \ref{ass:F:regularity:basic}, \ref{ass:regularity}, \ref{ass:F-regularity} hold, and for some $\Theta\subset\mathbb{R}^d$ as a neighborhood of $\theta_*$, $\{F(\tau_n+\theta_1^T x):\theta_1\in\Theta\}$ is differentiable in quadratic mean at $\theta_*$. Assume that $n(1-F(\tau_n))\rightarrow\infty$, {\footnotesize$\lim_{n\rightarrow\infty}\frac{(1-\alpha)^2(1-F(\tau_n))}{\alpha}=c,$} and 
{\begin{equation}\label{eq:positive-definite}
\begin{array}{rl}
&\quad\mathbb{E}\left[\frac{h^{(1)}(\theta_*^T X)^2}{h(\theta_*^T X)}XX^T\right]\\
&\succ c\mathbb{E}\left[h^{(1)}(\theta_*^T X)X\right]\mathbb{E}\left[h^{(1)}(\theta_*^T X)X^T\right],
\end{array}
\end{equation}}
then as $\tau_n\rightarrow\infty$, we have {\footnotesize$\sqrt{n(1-F(\tau_n))}(\hat{\theta}_{*}-\theta_*)\overset{d}{\rightarrow}\mathcal{N}\left(\mathbf{0},\mathbf{V}^{-1}\right),$}
where 
$$\begin{array}{rl}
\mathbf{V}&=\mathbb{E}\left[\frac{h^{(1)}(\theta_*^T X)^2}{h(\theta_*^T X)}XX^T\right]\\
&\quad-c\mathbb{E}\left[h^{(1)}(\theta_*^T X)X\right]\mathbb{E}\left[h^{(1)}(\theta_*^T X)X^T\right].
\end{array}$$
\end{theorem}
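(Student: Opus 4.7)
Because the expected pseudo-log-likelihood $\mathbb{E}[\tilde{\ell}(\tilde X,\tilde Y,\theta;\tau_n)]$ vanishes as $\tau_n\to\infty$ regardless of $\theta$ (as noted in the ``Failure'' paragraph above), the standard ``argmax plus Fisher information'' route is unavailable and the plan is to carry out the M-estimator analysis by hand at the rare-event scale $p_n:=1-F(\tau_n)$. Set $L_n(\theta):=\tfrac{1}{N}\sum_{i=1}^N\tilde{\ell}(\tilde X_i,\tilde Y_i,\theta;\tau_n)$, so that the first-order condition reads $\nabla L_n(\hat{\theta}_*)=0$. First I would establish consistency $\hat\theta_*\overset{p}{\to}\theta_*$ via a uniform LLN for a suitably rescaled $L_n$ on a compact neighborhood of $\theta_*$, using Assumption~\ref{ass:F:regularity:basic} (regularity of $F$ and compactness of $\mathcal X$), Assumption~\ref{ass:F-regularity} (existence of the tail limit $h$ with three derivatives), and the convergence $\tilde\mu\to\mu$ (Lemma~\ref{lemma:distribution-downsample-covariate}). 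A Taylor expansion then yields $\hat{\theta}_*-\theta_*=-[\nabla^2 L_n(\bar\theta)]^{-1}\nabla L_n(\theta_*)$ for some $\bar\theta$ between $\theta_*$ and $\hat\theta_*$, leaving two separate tasks: a CLT for $\sqrt{np_n}\,\nabla L_n(\theta_*)$ and an in-probability limit for a rescaled $\nabla^2 L_n(\bar\theta)$.

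\textbf{Score CLT.} Split the score into the per-sample piece $S_n^{(1)}(\theta)=\tfrac{1}{N}\sum_i[-\tilde Y_if_i/\bar F_i+(1-\tilde Y_i)f_i/F_i]\tilde X_i$ and the normalizer piece $S_n^{(2)}(\theta)=\nabla\log M_n(\theta)$ with $M_n(\theta)=\tfrac{1}{N}\sum_j[1-(1-\alpha)F_j]$. By Assumption~\ref{ass:F-regularity} one substitutes, uniformly on compacts, $\bar F(\tau_n+z)=p_nh(z)(1+o(1))$ and $f(\tau_n+z)=-p_nh^{(1)}(z)(1+o(1))$; combined with the formula $\bar G=\bar F/(\alpha+(1-\alpha)\bar F)$ from Proposition~\ref{lemma:downsample GLM}, the conditional mean and variance of $\tilde Y_i$ given $\tilde X_i$ reduce to $p_n h(\theta_*^T\tilde X_i)/\alpha\cdot(1+o(1))$ at leading order. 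The leading-order means of $S_n^{(1)}$ and $S_n^{(2)}$ at $\theta_*$ cancel, and a Lindeberg CLT at rate $\sqrt{np_n}$ (using $N\sim n\alpha$, $\tilde\mu\to\mu$, and the Bernoulli fluctuations of $\tilde Y_i$) produces the diagonal variance $A:=\mathbb E[h^{(1)}(\theta_*^TX)^2/h(\theta_*^TX)\,XX^T]$. A second-order expansion of $\log M_n$ around its limit $\alpha$, coupled with the cross-correlation between $S_n^{(1)}$ and $S_n^{(2)}$ induced by shared samples and the rate matching $(1-\alpha)^2p_n/\alpha\to c$, yields the outer-product correction $-c\,BB^T$ with $B:=\mathbb E[h^{(1)}(\theta_*^TX)X]$, giving $\sqrt{np_n}\,\nabla L_n(\theta_*)\overset{d}{\to}\mathcal N(0,\mathbf V)$.

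\textbf{Hessian and conclusion.} Differentiating once more and applying the same tail approximations together with a uniform LLN on a shrinking neighborhood of $\theta_*$ (using Assumptions~\ref{ass:F:regularity:basic}, \ref{ass:regularity}, \ref{ass:F-regularity} for continuity and finite second moments of the covariates), each structural piece of $-\nabla^2 L_n(\bar\theta)$, properly rescaled, converges in probability to the corresponding piece of $\mathbf V$; summing them gives $\mathbf V$. Condition~(\ref{eq:positive-definite}) then ensures $\mathbf V\succ 0$, so the Hessian is asymptotically invertible, and Slutsky's theorem applied to the Taylor representation delivers $\sqrt{np_n}(\hat{\theta}_*-\theta_*)\overset{d}{\to}\mathcal N(0,\mathbf V^{-1})$.

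\textbf{Main obstacle.} The delicate step is identifying the exact coefficient $c$ in front of $BB^T$. This correction does not appear in the ``true'' (integral) MLE at all and is present here only because $L_n$ uses the empirical average $M_n$ over $\tilde\mu$-distributed samples instead of the population integral over $\mu$. Getting the constant $c$ exactly right (rather than a bound) forces a careful second-order expansion of $\log M_n$ around $\alpha$, a leave-one-out analysis of the cross-covariance between $S_n^{(1)}$ and $S_n^{(2)}$, and precise tracking of the $O(p_n)$ gap between $\tilde\mu$ and $\mu$; this is where the third derivative of $h$ in Assumption~\ref{ass:F-regularity} is fully used.
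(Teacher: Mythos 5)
Your proposal reproduces the computational core of the paper's argument: the rescaling by $a_n=\sqrt{n(1-F(\tau_n))}$, the substitution $\bar F(\tau_n+z)=(1-F(\tau_n))h(z)(1+o(1))$ (and its derivatives) from Assumption~\ref{ass:F-regularity}, the decomposition of the score into a per-sample piece and the empirical-normalizer piece $\nabla\log M_n$, the identification of the limiting score covariance as $\mathbb{E}[h^{(1)}(\theta_*^TX)^2/h(\theta_*^TX)\,XX^T]-c\,\mathbb{E}[h^{(1)}(\theta_*^TX)X]\mathbb{E}[h^{(1)}(\theta_*^TX)X^T]$ via the variance of the normalizer term plus its cross-covariance with the per-sample term under the rate matching $(1-\alpha)^2(1-F(\tau_n))/\alpha\to c$, the Lindeberg check, and the observation that the Hessian limit equals $-\mathbf V$ so the sandwich collapses to $\mathbf V^{-1}$. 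This is exactly what the paper does, including the use of the third derivative of $h$ to control the Hessian remainder between $\theta_*$ and the evaluation point. Your diagnosis of the ``main obstacle'' (the $-cBB^T$ term arising only because the normalizer is an empirical average over the $\tilde\mu$-sample rather than a population integral) is also the right one.

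The one structural divergence is how you link the argmax to the score, and it is the one place where your proposal has a real gap. You propose: prove consistency first via a uniform LLN for a rescaled $L_n$, then apply a mean-value expansion of the first-order condition and Slutsky. The paper instead works directly with the localized process $H(w)=L_n(\theta_*+a_n^{-1}w)-L_n(\theta_*)$ and invokes the Hjort--Pollard basic corollary for minimisers of approximately quadratic (convex) processes, which delivers $a_n(\hat\theta_*-\theta_*)=\mathbf V^{-1}a_n^{-1}\nabla L_n(\theta_*)+o_P(1)$ without a separate consistency argument. This choice is not cosmetic: as you yourself note at the outset, $\mathbb{E}[\tilde\ell(\tilde X,\tilde Y,\theta;\tau_n)]\to 0$ for \emph{every} $\theta$, so the population criterion is asymptotically flat and the standard Wald/van der Vaart consistency theorem (Lemma~\ref{lemma:thm-vdv-M-estimator}) does not apply directly. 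To make your route rigorous you would have to exhibit the correct second-order rescaling of $L_n(\theta)-L_n(\theta_*)$ (dividing by $n(1-F(\tau_n))$, not by $N$), prove a \emph{uniform} LLN for that rescaled process over a fixed neighborhood of $\theta_*$, and verify that its limit has a well-separated maximum at $\theta_*$ --- none of which you sketch, and the last of which is essentially equivalent to the positive-definiteness condition~(\ref{eq:positive-definite}) holding in a whole neighborhood rather than at a point. The Hjort--Pollard device sidesteps all of this by exploiting the quadratic structure of $H$ and pointwise convergence of the remainder. So: correct architecture, correct limits, but the consistency step you defer to a ``suitably rescaled uniform LLN'' is precisely the step that the flatness phenomenon makes delicate, and it needs to be either carried out at the right scale or replaced by the localized-process argument the paper uses.
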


\begin{remark}
Theorem \ref{thm:rate-of-convergence-infinity} suggests that if $$\lim_{n\rightarrow\infty}\frac{(1-\alpha^2)(1-F(\tau_n))}{\alpha}=0,$$ then the resulting estimator is as efficient as the full-sample estimator (i.e. when $\alpha=1$). This happens when either $\alpha$ is bounded from below by a positive constant, or $\alpha\rightarrow0$ but $\frac{1-F(\tau_n)}{\alpha}\rightarrow0$,
which is consistent with the discovery of Remark 2 in \cite{wang2020logistic} for logistic regression under rare event case. 
\end{remark}

\begin{remark}\label{rmk:asymptotic-condition}
We demonstrate the necessity of condition (\ref{eq:positive-definite}) through a numerical illustration. 
\begin{figure}
    \centering
    \includegraphics[width=0.7\linewidth]{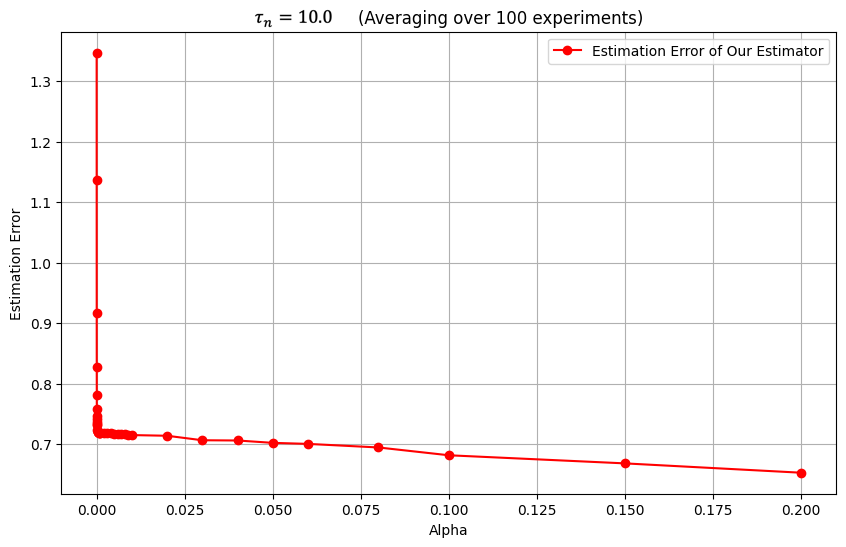}
    \caption{Estimation error for different $\alpha$.}
    \label{fig:illustration:condition:positive}
\end{figure}
Imagine that when $\alpha$ is too close to zero, the right hand side of (\ref{eq:positive-definite}) can blow up so the condition will be violated. We look at a case of logistic regression where $\tau_n=10$, $\theta_*=0.5$, and $F(\tau_n+\theta_*^Tx)=\frac{e^{\tau_n+\theta_*^Tx}}{1+e^{\tau_n+\theta_*^Tx}}$ and $X\sim\mathrm{Unif}[0,1]$. We generate $10^5$ random samples and $\mathbb{P}(Y=1)\approx 5.89\times10^{-5}$. The estimation error is computed as the average value of $|\theta_*-\hat{\theta}_{*}|$ from $500$ random experiments.  The $x$-axis corresponds to the $\alpha$ as the downsample rate from negative samples. 
Figure \ref{fig:illustration:condition:positive} shows that when $\alpha$ is too close to $0$, the estimation error is large. But when $\alpha$ falls in a proper range such that condition (\ref{eq:positive-definite}) holds, the estimation error for $\hat{\theta}_{*}$ stabilizes and is kept as a small value, consistent with the theoretical findings that the mean squared error should be small and $\mathbb{E}[\|\hat{\theta}_*-\theta_*\|^2]\approx\mathrm{tr}[\mathbf{V}^{-1}]/(n(1-F(\tau_n)))$.
\end{remark}

\paragraph{Generalized Scaled Asymptotic Normality} Now we generalize Assumption \ref{ass:F-regularity} to Assumption \ref{ass:generalized:asymptotic}, which is satisfied by more distributions in addition to those satisfying Assumption \ref{ass:F-regularity}, including both heavy-tailed distributions (e.g. Pareto) and very light tailed distributions (e.g. Gaussian tails). 
\begin{assumption}\label{ass:generalized:asymptotic}
Suppose that $F$ is thrice differentiable and that there exists a thrice differentiable function $g:\mathbb{R}\rightarrow\mathbb{R}_{+}$ such that the function {\footnotesize$\bar{F}(Z>r(\tau_n)\cdot+\tau_n)/\bar{F}(Z>\tau_n)$}
together with its first three derivatives converges uniformly over compact sets to $g(.)$ and its three first derivatives denoted as $g^{(1)}, g^{(2)}, g^{(3)}$, respectively.

\end{assumption}
Assumption \ref{ass:generalized:asymptotic} includes more common distributions except for those satisfying Assumption \ref{ass:F-regularity}. For example, for the standard normal distribution, we can take $r(\tau)=\frac{1}{\tau}$ and $g(z)=e^{-z}$. For the Pareto distribution, we can take $r(\tau)=\tau$ and $g(z)=\left(\frac{1}{1+z}\right)^{\gamma}$ with $\gamma>1$, to satisfy the assumption but in this case we must assume $X$ has compact support so that we may assume (after a rescaling) that $\theta_*^TX$ is less than 1, this will ensure that $z\dot r(\tau_n) + \tau_n > 0$ when evaluating at $z=\theta_*^Tx$ and keep the limits well defined.  \ref{ass:generalized:asymptotic}. Now we can extend our result of the asymptotic distribution to more general cases. 

\begin{theorem}[Generalized Scaled Asymptotic Normality]\label{thm:generalize:asymptotics}
Suppose the underlying binary classification model is defined as {\footnotesize$Y=\mathbf{1}(\tau_n+r(\tau_n)\theta_*^T X)>0$} with $r(\cdot)$ defined as in Assumption \ref{ass:generalized:asymptotic}. Assume that $n(1-F(\tau_n))\rightarrow\infty$,{\footnotesize$\lim_{n\rightarrow\infty}\frac{(1-\alpha)^2(1-F(\tau_n))}{\alpha}=c,$} and 
{\begin{equation}\label{eq:positive-definite:generalized}
\begin{array}{rl}
&\quad\mathbb{E}\left[\frac{g^{(1)}(\theta_*^T X)^2}{g(\theta_*^T X)}XX^T\right]\\
&\succ c\mathbb{E}\left[g^{(1)}(\theta_*^T X)X\right]\mathbb{E}\left[g^{(1)}(\theta_*^T X)X^T\right].
\end{array}
\end{equation}}
Then as $\tau_n\rightarrow\infty$, we have 
$$\sqrt{n(1-F(\tau_n))}r(\tau_n)(\hat{\theta}_{*}-\theta_*)\overset{d}{\rightarrow}\mathcal{N}\left(\mathbf{0},\mathbf{V}^{-1}\right),$$
where 
\vspace{-0.3cm}
{$$\begin{array}{rl}
\displaystyle\mathbf{V}&=\mathbb{E}\left[\frac{g^{(1)}(\theta_*^T X)^2}{g(\theta_*^T X)}XX^T\right]\\
&\quad\displaystyle-c\mathbb{E}\left[g^{(1)}(\theta_*^T X)X\right]\mathbb{E}\left[g^{(1)}(\theta_*^T X)X^T\right].
\end{array}$$}
\end{theorem}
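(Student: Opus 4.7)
My plan is to follow the same skeleton as the proof of Theorem~\ref{thm:rate-of-convergence-infinity}, but track the extra scaling $r(\tau_n)$ through the first-order conditions and Taylor expansions. A useful heuristic is the reparameterization $\phi := r(\tau_n)\theta$, which formally turns the model $\bar{F}(\tau_n+r(\tau_n)\theta_*^TX)$ into the form $\bar{F}(\tau_n+\phi_*^TX)$ treated in Theorem~\ref{thm:rate-of-convergence-infinity}, with an $n$-dependent ``true'' parameter $\phi_*=r(\tau_n)\theta_*$. The appropriate convergence is now of $\bar{F}(\tau_n+r(\tau_n)\cdot)/\bar{F}(\tau_n)$, which is exactly what Assumption~\ref{ass:generalized:asymptotic} supplies in place of Assumption~\ref{ass:F-regularity}. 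Passing back from $\phi$ to $\theta$ contributes the extra factor of $r(\tau_n)$ in the final rate.

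Concretely, I would write the first-order condition as $S_n(\hat\theta_*)=0$, with $S_n(\theta):=\frac{1}{N}\sum_i\nabla_\theta\tilde\ell(\tilde X_i,\tilde Y_i,\theta;\tau_n)$, and apply a mean-value Taylor expansion $0=S_n(\theta_*)+H_n(\bar\theta_n)(\hat\theta_*-\theta_*)$. Introducing the local parameter $v=r(\tau_n)\sqrt{n\bar F(\tau_n)}(\theta-\theta_*)$, every differentiation in $\theta$ produces a chain-rule factor of $r(\tau_n)$. Assumption~\ref{ass:generalized:asymptotic} then yields the asymptotic substitutions $\bar F(\tau_n+r(\tau_n)\theta_*^TX)\sim \bar F(\tau_n)g(\theta_*^TX)$ and $F^{(k)}(\tau_n+r(\tau_n)\theta_*^TX)\sim (\bar F(\tau_n)/r(\tau_n)^k)\,g^{(k)}(\theta_*^TX)$ for $k=1,2,3$, with the compact-set uniform convergence ensuring these expansions are valid uniformly over the compact covariate support and over $v$ in compact sets.

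With these substitutions in hand I would show that $(n\bar F(\tau_n)r(\tau_n)^2)^{-1}H_n(\theta_*)\overset{p}{\to}\mathbf V$ and $(r(\tau_n)\sqrt{n\bar F(\tau_n)})^{-1}S_n(\theta_*)\overset{d}{\to}\mathcal N(\mathbf 0,\mathbf V)$. For the Hessian, the first piece $\mathbb{E}[\tfrac{g^{(1)}(\theta_*^TX)^2}{g(\theta_*^TX)}XX^T]$ emerges by a standard law-of-large-numbers argument applied to the i.i.d.\ score contributions under the downsample law $\tilde\mu$ from Lemma~\ref{lemma:distribution-downsample-covariate}, conditioning on $\tilde Y$ and using Proposition~\ref{lemma:downsample GLM} together with $\bar F(\tau_n)\to 0$. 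For the score, an ordinary Lindeberg CLT applied to the same i.i.d. decomposition, combined with a continuous mapping and Slutsky argument using the consistency of $\hat\theta_*$ (from a standard uniform convergence of the pseudo-criterion analogous to that in the $r\equiv 1$ case), then delivers the stated limit via $r(\tau_n)\sqrt{n\bar F(\tau_n)}(\hat\theta_*-\theta_*) = -H_n^{-1}S_n\overset{d}{\to}\mathcal N(\mathbf 0,\mathbf V^{-1})$, with the positive-definiteness from condition (\ref{eq:positive-definite:generalized}) guaranteeing invertibility.

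The main obstacle lies in the log-sum correction term $-\log\{\frac{1}{N}\sum_j [1-(1-\alpha)F(\tau_n+r(\tau_n)\theta_1^T\tilde X_j)]\}$ in (\ref{pseudo-likelihood}). This term couples all samples and contributes the subtractive rank-one piece $c\,\mathbb{E}[g^{(1)}(\theta_*^TX)X]\mathbb{E}[g^{(1)}(\theta_*^TX)X^T]$ in $\mathbf V$; it must be expanded to the right order, and its scaling in $r(\tau_n)$ verified, before one can see that under the rate calibration $(1-\alpha)^2\bar F(\tau_n)/\alpha\to c$, it survives in the limit rather than being swept into lower-order noise. Tracking the joint behavior of $\alpha$, $\bar F(\tau_n)$, and $r(\tau_n)$ in this correction, and ensuring that Taylor remainders involving $g^{(2)}$ and $g^{(3)}$ are uniformly $o(1)$ over compact sets of $v$ (where the uniformity in the derivatives postulated in Assumption~\ref{ass:generalized:asymptotic} is essential), is the most delicate piece of the argument, and is also the place where the heavy-tailed versus light-tailed examples listed after Assumption~\ref{ass:generalized:asymptotic} must be handled by the same machinery.
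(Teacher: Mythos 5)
Your proposal follows essentially the same route as the paper: the paper also reduces to the argument of Theorem~\ref{thm:rate-of-convergence-infinity} by working with the scaled parameter $r(\tau_n)\theta_*$, uses Assumption~\ref{ass:generalized:asymptotic} to replace $h$ and its derivatives by $g$ and its derivatives (with the chain-rule factors of $r(\tau_n)$ you identify), and recovers the rank-one correction from the log-sum term under the calibration $(1-\alpha)^2\bar F(\tau_n)/\alpha\to c$. Your mean-value expansion of the first-order condition is an interchangeable substitute for the paper's Taylor expansion of $L_n$ combined with the Hjort--Pollard argmin corollary, and your plan is, if anything, more explicit about where the $r(\tau_n)$ factors enter than the paper's own sketch.
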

\vspace{-0.3cm}
Based on the asymptotic covariances, we are now ready to formulate the optimization problem by considering both statistical efficiency and computational gains. We now introduce the efficiency concept with a computational budget constraint. 

\section{Efficiency with a Budget Constraint}\label{section:efficiency}
Theorem \ref{thm:rate-of-convergence-infinity} and Theorem \ref{thm:generalize:asymptotics} indicate that full-sampling (i.e. $\alpha=1$) and downsampling with a rate $\alpha$ such that $\alpha\ll1$ and $\frac{1-F(\tau_n)}{\alpha}=\mathrm{o}(1)$ lead to the same rate of convergence and asymptotic MSE. At the same time, if we choose $\alpha\approx\mathbb{P}(Y=1)$, we still have the same convergence rate while the trace norm of the asymptotic covariance is kept as $\mathrm{O}(1)$. Intuitively, downsampling with a much smaller $\alpha$ can help to reduce computational cost significantly while maintaining statistical efficiency under the current rare event setup. This motivates us to formulate the choice of optimal downsampling rate by considering this tradeoff between statistical efficiency and computational cost with a budget constraint. 

We adapt the framework of \cite{glynn1992asymptotic} for the definition of algorithm efficiency. Let the computational cost be some strictly increasing function of the downsample size, i.e. {\footnotesize$C(\alpha;n,p_1)=f(np_1+\alpha(1-p_1)n)=f(n[p_1+\alpha(1-p_1)])$}, where $f(\cdot)$ is strictly increasing, $p_1=\mathbb{P}(Y=1)$ and $C(\alpha;n,p_1)$ is the cost function of sampling $\alpha$ proportion of the negative samples (i.e. observations with label $Y=0$) with the original sample size $n$ and positive ratio $p_1$. According to the asymptotic efficiency principle in the canonical case of \cite{glynn1992asymptotic}, we define the \textit{asymptotic algorithm efficiency cost value} as the product of the sampling variance and the cost rate, i.e. 
{\footnotesize$
v(\alpha;p_1)=\lim_{n\rightarrow\infty}C(\alpha;n,p_1)\mathrm{Var}\left(\hat{\theta}_{*}\right).$}
Specifically, when $f(\cdot)$ is a linear function, i.e. with some constant $c_0$, we have 
{\begin{equation}\label{eq:algorithm-efficiency-cost}
v(\alpha;p_1)=\lim_{n\rightarrow\infty}c_0n\left(p_1+\alpha(1-p_1)\right)\mathrm{Var}\left(\hat{\theta}_{*}\right).
\end{equation}}

Recall from Theorem \ref{thm:rate-of-convergence-infinity} that under regular conditions, when $\tau_n\rightarrow\infty$, we have {\footnotesize$\sqrt{n(1-F(\tau_n))}(\hat{\theta}_{*}-\theta_*)\overset{d}{\rightarrow}\mathcal{N}\left(\mathbf{0},\mathbf{V}^{-1}\right).$} So 
{\footnotesize$\frac{n(1-F(\tau_n))\mathrm{tr}\left[\mathrm{Cov}(\hat{\theta}_{*}-\theta_*)\right]}{\mathrm{tr}\left(\mathbf{V}^{-1}\right)}\rightarrow1\ \ \mbox{as}\ n\rightarrow\infty.$} Under the definition of the efficiency cost (\ref{eq:algorithm-efficiency-cost}), we have 
{\footnotesize$\frac{c_0(p_1+\alpha(1-p_1))d^2}{\mathrm{tr}\left((1-F(\tau_n))\mathbf{V}\right)}\leq\nu(\alpha;p_1)\leq\frac{c_0(p_1+\alpha(1-p_1))\kappa d^2}{\mathrm{tr}\left((1-F(\tau_n))\mathbf{V}\right)},$}
where $\kappa$ is the condition number of {\footnotesize$\mathbf{V}$}. 
A natural objective for efficiency optimization is 
{\footnotesize$\min_{\alpha\in[0,1]}\lim_{n\rightarrow\infty}\frac{p_1+\alpha(1-p_1)}{\mathrm{tr}\left(\mathbf{V}\right)}.$}

\begin{theorem}[Optimal Downsampling Rate for Imbalanced Classification]\label{thm:asymptotic:efficiency-cost}
Suppose Assumptions \ref{ass:F:regularity:basic}, \ref{ass:regularity}, \ref{ass:F-regularity}, and $\tau_n\rightarrow\infty$, $n(1-F(\tau_n))\rightarrow\infty$, then the optimal choice of downsampling rate is 
{\footnotesize$$\alpha^*=\frac{2(1-F(\tau_n))\mathrm{tr}\left\{\mathbb{E}\left[h^{(1)}(\theta_*^T X)X\right]\mathbb{E}\left[h^{(1)}(\theta_*^T X)X^T\right]\right\}}{\mathrm{tr}\left\{\mathbb{E}\left[\left\{h^{(1)}(\theta_*^T X)^2/h(\theta_*^T X)\right\}XX^T\right]\right\}}.$$}
\end{theorem}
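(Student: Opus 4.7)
The plan is to turn the efficiency-cost minimization into a one-dimensional calculus problem by plugging in the closed-form expression for $\mathbf{V}$ from Theorem \ref{thm:rate-of-convergence-infinity}, and then to read off the leading-order minimizer as $\tau_n\rightarrow\infty$. Abbreviate $q_n:=1-F(\tau_n)$ and
$$A:=\mathbb{E}\left[\frac{h^{(1)}(\theta_*^T X)^2}{h(\theta_*^T X)}\,X X^T\right],\quad B:=\mathbb{E}\left[h^{(1)}(\theta_*^T X)X\right]\mathbb{E}\left[h^{(1)}(\theta_*^T X)X^T\right],$$
so that Theorem \ref{thm:rate-of-convergence-infinity} yields $\mathbf{V}=A-c(\alpha)B$ with $c(\alpha)=(1-\alpha)^2 q_n/\alpha$, and hence $\mathrm{tr}(\mathbf{V})=\mathrm{tr}(A)-(1-\alpha)^2 q_n\mathrm{tr}(B)/\alpha$. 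The target identified in Section~\ref{section:efficiency} is to minimize $\phi(\alpha):=[p_1+\alpha(1-p_1)]/\mathrm{tr}(\mathbf{V})$ over $\alpha\in(0,1]$.

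First I would differentiate $\phi$ using the quotient rule. A direct computation gives $d\mathrm{tr}(\mathbf{V})/d\alpha=q_n\mathrm{tr}(B)(1-\alpha^2)/\alpha^2$, so $\phi'(\alpha)=0$ (after multiplying through by $\alpha^2$) becomes the scalar stationarity relation
\begin{equation*}
(1-p_1)\,\alpha^2\,\mathrm{tr}(\mathbf{V}) \;=\; \bigl[p_1+\alpha(1-p_1)\bigr]\,q_n\,(1-\alpha^2)\,\mathrm{tr}(B).
\end{equation*}

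Next I would extract the rare-event asymptotics. Under Assumption \ref{ass:F-regularity}, $p_1=\mathbb{E}[\bar F(\tau_n+\theta_*^T X)]=q_n\mathbb{E}[h(\theta_*^T X)](1+o(1))=O(q_n)$, while the strict positivity in (\ref{eq:positive-definite}) forces an admissible minimizer to satisfy $\alpha/q_n$ bounded away from zero (otherwise $c(\alpha)\mathrm{tr}(B)$ would exceed $\mathrm{tr}(A)$ and $\mathrm{tr}(\mathbf{V})$ would fail to be positive). Substituting $\alpha=\beta q_n$ and keeping only the dominant $O(q_n^2)$ contributions on each side, the $p_1$ and $1-\alpha^2$ corrections enter as $o(q_n^2)$ remainders, and the equation collapses to $\beta\,\mathrm{tr}(A)=2\mathrm{tr}(B)+o(1)$. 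Solving for $\beta$ gives $\beta^*=2\mathrm{tr}(B)/\mathrm{tr}(A)+o(1)$, i.e., $\alpha^*=2q_n\mathrm{tr}(B)/\mathrm{tr}(A)+o(q_n)$, which is the claimed formula.

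To finish I would verify this is a global minimum. Evaluating at $\alpha^*$ gives $\mathrm{tr}(\mathbf{V})=\mathrm{tr}(A)/2+o(1)>0$, so (\ref{eq:positive-definite}) holds strictly and $\phi$ is smooth there; combined with the fact that $\phi(\alpha)\to+\infty$ as $\alpha$ approaches the threshold where $\mathrm{tr}(\mathbf{V})\downarrow 0$ and that the leading quadratic in $\beta$ has a unique positive root, this identifies $\alpha^*$ as the unique global minimizer on $(0,1]$. The hard part will be the asymptotic bookkeeping in the third step: because $p_1$ and the optimal $\alpha$ are both of order $q_n$, the reduction to $\beta\,\mathrm{tr}(A)=2\mathrm{tr}(B)$ must be justified by writing the stationarity equation as a polynomial in $\beta=\alpha/q_n$ and verifying that the $p_1$ contribution, the $(1-\alpha)^2$ factor, and the difference $\mathrm{tr}(A)-\mathrm{tr}(\mathbf{V})$ all shift the leading coefficients only by $o(1)$ amounts; the strict positive-definiteness in (\ref{eq:positive-definite}) is precisely what keeps $\mathrm{tr}(\mathbf{V})$ bounded away from zero and makes dividing through by it legitimate.
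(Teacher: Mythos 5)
Your proposal follows the paper's proof essentially step for step: the paper likewise forms $J_n(\alpha)=[p_1+\alpha(1-p_1)]/\mathrm{tr}(\mathbf{V})$, differentiates (via $\log J_n$ rather than the quotient rule, a cosmetic difference), arrives at the identical stationarity relation, argues $\alpha^*=o(1)$ by comparing orders of the two sides, sets $\beta=\lim_n q_n/\alpha$, and reads off $\beta=\tfrac12\,\mathrm{tr}(A)/\mathrm{tr}(B)$, i.e.\ $\alpha^*=2q_n\mathrm{tr}(B)/\mathrm{tr}(A)$. So in approach there is nothing to distinguish the two.

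One caveat concerning the step you yourself flag as the hard part: your assertion that the $p_1$ contribution enters only as an $o(q_n^2)$ remainder is not correct as stated. Under Assumption~\ref{ass:F-regularity}, $p_1=\mathbb{E}[\bar F(\tau_n+\theta_*^TX)]\sim q_n\,\mathbb{E}[h(\theta_*^TX)]$, so with $\alpha=\beta q_n$ the term $p_1\,q_n(1-\alpha^2)\mathrm{tr}(B)$ on the right side of your stationarity relation is $\Theta(q_n^2)$ --- the same order as every term you retain --- and the true leading-order equation is the quadratic $\beta^2\mathrm{tr}(A)=2\beta\,\mathrm{tr}(B)+\mathbb{E}[h(\theta_*^TX)]\,\mathrm{tr}(B)$, whose positive root coincides with $2\mathrm{tr}(B)/\mathrm{tr}(A)$ only when the constant term is negligible relative to $\beta\,\mathrm{tr}(B)$. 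The paper's own proof discards exactly the same term, justifying it only by ``$p_1/(1-p_1)\to0$'' (which compares $p_1$ to $1$ rather than to $\alpha$), so your argument is faithful to, and no weaker than, the published one; but if you want the reduction to be airtight you would need either to carry the $\mathbb{E}[h]\,\mathrm{tr}(B)$ term through and solve the quadratic, or to state explicitly the regime in which it is dominated.
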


Henceforth, we have obtained explicit formulations of the optimization problems for selecting the optimal downsample rate. Theorem \ref{thm:asymptotic:efficiency-cost} has provided guidelines for downsampling schemes in practice. The result can easily be extended to generalized scaled result from Theorem \ref{thm:generalize:asymptotics}.

\section{Application to Logistic Regression}\label{sec:application:logistic regression}

Equipped with all the findings from the previous sections, we now focus on the logistic regression model as an application. Given a sequence of $\tau_n\rightarrow\infty$, define {\footnotesize$F(\tau_n+\theta_1^T x):=\frac{e^{\tau_n+\theta_1^T x}}{1+e^{\tau_n+\theta_1^T x}}$.} A direct application of Theorem \ref{thm:rate-of-convergence-infinity} indicates:

\begin{proposition}[Asymptotic Normality for Logistic Regression]\label{prop:logistic-regression:asymptotic} Assume $\frac{n}{1+e^{\tau_n}}\rightarrow\infty$ and $\lim_{n\rightarrow\infty}\frac{(1-\alpha)^2}{\alpha(1+e^{\tau_n})}=c$ where $c$ is a constant. Then as $\tau_n\rightarrow\infty$ and , we have 
{$$\sqrt{n\mathbb{P}(Y=1)}\left(\hat{\theta}_{*}-\theta_*\right)\overset{d}{\rightarrow}\mathcal{N}\left(\mathbf{0},\mathbb{E}_X\left[e^{-\theta_*^T X}\right]\mathbf{V}^{-1}\right),$$}
where     
{\footnotesize$\mathbf{V}=\mathbb{E}\left[e^{-\theta_*^T X}XX^T\right]-c\mathbb{E}\left[e^{-\theta_*^T X}X\right]\mathbb{E}\left[e^{-\theta_*^T X}X\right].$}
\end{proposition}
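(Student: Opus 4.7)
The plan is to view this proposition as a direct specialization of Theorem \ref{thm:rate-of-convergence-infinity}, so the work reduces to three tasks: (i) verify the abstract assumptions for the logistic link, (ii) identify the resulting $h$ and simplify the matrix $\mathbf{V}$, and (iii) translate the abstract scaling $\sqrt{n(1-F(\tau_n))}$ to the stated scaling $\sqrt{n\mathbb{P}(Y=1)}$ via Slutsky.

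First, for the logistic link $F(z)=e^{z}/(1+e^{z})$ we have $\bar{F}(z)=1/(1+e^{z})$, so
$$\frac{\bar{F}(z+\tau_n)}{\bar{F}(\tau_n)}=\frac{1+e^{\tau_n}}{1+e^{z+\tau_n}}\longrightarrow e^{-z}$$
uniformly on compact sets as $\tau_n\to\infty$, and the analogous uniform convergence holds for the first three derivatives (each is a bounded rational function of $e^{z+\tau_n}$). This identifies $h(z)=e^{-z}$ in Assumption \ref{ass:F-regularity}, with $h^{(1)}(z)=-e^{-z}$ and $h^{(1)}(z)^{2}/h(z)=e^{-z}$. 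Assumptions \ref{ass:F:regularity:basic} and \ref{ass:regularity} are standing assumptions on $F$ and on the covariate design, and differentiability in quadratic mean of $\{F(\tau_n+\theta_1^{T}x):\theta_1\in\Theta\}$ at $\theta_*$ follows from the smoothness of the logistic c.d.f.\ together with boundedness of $X$. Substituting $h$ into the general formula for $\mathbf{V}$ in Theorem \ref{thm:rate-of-convergence-infinity} yields exactly
$$\mathbf{V}=\mathbb{E}\left[e^{-\theta_*^{T}X}XX^{T}\right]-c\,\mathbb{E}\left[e^{-\theta_*^{T}X}X\right]\mathbb{E}\left[e^{-\theta_*^{T}X}X^{T}\right],$$
and the positive definiteness condition (\ref{eq:positive-definite}) becomes the (assumed) condition that guarantees $\mathbf{V}\succ 0$.

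Next, the rare-event assumption in Theorem \ref{thm:rate-of-convergence-infinity} requires $n(1-F(\tau_n))\to\infty$, which here is $n/(1+e^{\tau_n})\to\infty$, and the imbalance/rate condition $\lim(1-\alpha)^2(1-F(\tau_n))/\alpha=c$ reduces to $\lim(1-\alpha)^2/[\alpha(1+e^{\tau_n})]=c$. Both are exactly the hypotheses of the proposition. Hence Theorem \ref{thm:rate-of-convergence-infinity} directly gives
$$\sqrt{n(1-F(\tau_n))}\,(\hat{\theta}_*-\theta_*)\overset{d}{\rightarrow}\mathcal{N}(\mathbf{0},\mathbf{V}^{-1}).$$

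Finally, I convert the scaling. Writing $\mathbb{P}(Y=1)=\mathbb{E}[\bar{F}(\tau_n+\theta_*^{T}X)]$ and using the pointwise limit $\bar{F}(\tau_n+\theta_*^{T}x)/\bar{F}(\tau_n)\to e^{-\theta_*^{T}x}$ together with dominated convergence (uniform bound on $\bar{F}(\tau_n+\theta_*^{T}x)/\bar{F}(\tau_n)$ on the compact covariate set), I get
$$\frac{\mathbb{P}(Y=1)}{1-F(\tau_n)}\longrightarrow \mathbb{E}_X\!\left[e^{-\theta_*^{T}X}\right].$$
Therefore $\sqrt{n\mathbb{P}(Y=1)}/\sqrt{n(1-F(\tau_n))}\to\sqrt{\mathbb{E}_X[e^{-\theta_*^{T}X}]}$, and Slutsky's theorem yields the claimed limit with covariance $\mathbb{E}_X[e^{-\theta_*^{T}X}]\mathbf{V}^{-1}$.

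The only mildly nontrivial step is the uniform (on compacta) convergence of the first three derivatives in Assumption \ref{ass:F-regularity}, which I expect to be the main technical check, but it is elementary for the logistic link since each derivative is a bounded smooth function of $e^{z+\tau_n}/(1+e^{z+\tau_n})$ whose behavior on compact $z$-sets is controlled uniformly as $\tau_n\to\infty$. Everything else is bookkeeping: plugging in the explicit $h$ and applying Slutsky to reconcile the two normalizations.
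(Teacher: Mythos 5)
Your proposal is correct and follows essentially the same route as the paper's own proof: identify $h(z)=e^{-z}$ for the logistic link, plug into the formula for $\mathbf{V}$ from Theorem \ref{thm:rate-of-convergence-infinity} to obtain the stated covariance, and then use dominated convergence plus Slutsky to pass from the normalization $\sqrt{n(1-F(\tau_n))}=\sqrt{n/(1+e^{\tau_n})}$ to $\sqrt{n\mathbb{P}(Y=1)}$ at the cost of the factor $\mathbb{E}_X[e^{-\theta_*^TX}]$. Your additional attention to the uniform-on-compacta convergence of the first three derivatives is a reasonable elaboration of a step the paper states without detail.
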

\begin{remark}
If either $\alpha$ is bounded away from zero, i.e., $\exists$ some absolute constant $\underline{\alpha}\in(0,1)$ such that $0<\underline{\alpha}\leq\alpha\leq1$, or $\alpha\rightarrow0$ and $\frac{1}{\alpha(1+e^{\tau_n})}\rightarrow0$, then $c=0$, and the asymptotic distribution is the same as that of the full-sampling case, so the estimator is as efficient as the full-sampling one. This is also consistent with the findings of \cite{wang2020logistic}. 
\end{remark}
\begin{remark}
Our estimator is different from \cite{wang2020logistic}, where \cite{wang2020logistic} uses an inverse-weighting estimator for the inverse-weighting estimator (\ref{eq:inverse-weighting}).
We illustrate the differences in the performance of our estimator and that of the estimator considered by \cite{wang2020logistic} further through numerical experiments in Section \ref{section:numerical}. 
\end{remark}

\begin{proposition}[Optimal Downsampling Rate for Logistic Regression]\label{prop:efficiency:asymptotic}
Suppose $\tau_n\rightarrow\infty$, $\frac{n}{1+e^{\tau_n}}\rightarrow\infty$, the optimal choice of downsampling rate is 
{$$\begin{array}{rl}
\displaystyle\alpha^*=\frac{2(1+e^{\tau_n})^{-1}\mathrm{tr}\left\{\mathbb{E}[e^{-\theta_*^T X}X]\mathbb{E}[e^{-\theta_*^T X}X]\right\}}{\mathrm{tr}\{\mathbb{E}[e^{-\theta_*^T X}XX^T]\}}.
\end{array}$$}
\end{proposition}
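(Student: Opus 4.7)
The plan is to derive Proposition 2 as a direct specialization of Theorem 3 to the logistic c.d.f.\ $F(z)=e^{z}/(1+e^{z})$, so the entire proof reduces to (i) verifying that logistic regression falls under Assumptions 1--3, and (ii) computing the limit function $h$ and its first derivative, then substituting into the formula for $\alpha^{*}$ from Theorem 3.

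First I would check the regularity conditions. The logistic c.d.f.\ is $C^{\infty}$ and strictly increasing on $\mathbb{R}$, covering Assumption 1; Assumption 2 is a standard design condition assumed on $X$; for Assumption 3 I need to identify $h$ and confirm the uniform-on-compacts convergence of $\bar{F}(\tau_{n}+\cdot)/\bar{F}(\tau_{n})$ together with the first three derivatives. Writing
\[
\frac{\bar{F}(\tau_{n}+z)}{\bar{F}(\tau_{n})}=\frac{1+e^{\tau_{n}}}{1+e^{\tau_{n}+z}}=\frac{e^{-\tau_{n}}+1}{e^{-\tau_{n}}+e^{z}},
\]
I would let $\tau_{n}\to\infty$ to obtain the pointwise limit $h(z)=e^{-z}$. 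On any compact set $K\subset\mathbb{R}$ the ratio above is smooth in $z$ with bounded derivatives uniform in $\tau_{n}$, so repeated differentiation under the limit (or Cauchy's estimate) gives uniform convergence of the first three derivatives to $h^{(k)}(z)=(-1)^{k}e^{-z}$, verifying Assumption 3. In particular $h^{(1)}(z)=-e^{-z}$ and $h^{(1)}(z)^{2}/h(z)=e^{-z}$.

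Next I would substitute into the formula of Theorem 3. The outer products in the numerator satisfy
\[
\mathbb{E}\!\left[h^{(1)}(\theta_{*}^{T}X)X\right]\mathbb{E}\!\left[h^{(1)}(\theta_{*}^{T}X)X^{T}\right]=\mathbb{E}\!\left[e^{-\theta_{*}^{T}X}X\right]\mathbb{E}\!\left[e^{-\theta_{*}^{T}X}X^{T}\right],
\]
(the two minus signs cancel), while the denominator becomes $\mathrm{tr}\{\mathbb{E}[e^{-\theta_{*}^{T}X}XX^{T}]\}$. Using $1-F(\tau_{n})=(1+e^{\tau_{n}})^{-1}$ then yields exactly the stated expression for $\alpha^{*}$. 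The hypothesis $n/(1+e^{\tau_{n}})\to\infty$ of Proposition 3 is just $n(1-F(\tau_{n}))\to\infty$ in this parametrization, which is the rare-event condition required by Theorem 3.

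There is essentially no hard step here; the only place requiring care is the verification that the convergence in Assumption 3 actually holds up to third derivatives, and that the positive-definiteness condition (\ref{eq:positive-definite}) of Theorem 3 --- implicitly needed for $\mathbf{V}^{-1}$ to be well defined and for the optimizer $\alpha^{*}$ to lie in $(0,1]$ --- is compatible with the regime $n/(1+e^{\tau_{n}})\to\infty$. The first is handled by the explicit form of $h$ displayed above; the second follows because, with $c$ identified via the limit $(1-\alpha)^{2}(1-F(\tau_{n}))/\alpha\to c$, the parameter $c$ can be driven to zero by any $\alpha$ bounded below or decaying slower than $1-F(\tau_{n})$, so (\ref{eq:positive-definite}) reduces to nonsingularity of $\mathbb{E}[e^{-\theta_{*}^{T}X}XX^{T}]$, which is guaranteed by Assumption 2 and the positivity of $e^{-\theta_{*}^{T}X}$ on the compact covariate support.
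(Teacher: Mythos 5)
Your proposal is correct and follows essentially the same route as the paper: verify that the logistic c.d.f.\ satisfies the regularity assumptions, compute $h(z)=\lim_{n}\bar F(\tau_n+z)/\bar F(\tau_n)=e^{-z}$ so that $h^{(1)}(z)^2/h(z)=e^{-z}$ and the sign in the outer product cancels, and then substitute into the $\alpha^*$ formula of Theorem~\ref{thm:asymptotic:efficiency-cost} using $1-F(\tau_n)=(1+e^{\tau_n})^{-1}$. Your added care about uniform convergence of the first three derivatives and about condition~(\ref{eq:positive-definite}) only makes explicit checks that the paper's proof asserts in passing.
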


\subsection{Numerical Experiments}\label{section:numerical}
We focus on a setting where $\theta_*=0.5$, and the covariates $X$ are drawn i.i.d. from a uniform distribution $[0,1]$. The sample size is $n=10^5$. We fix different values of $\tau_n$ and estimate $\theta_*$ for these different $\tau_n$. Under each $\alpha$, we compute the solutions to (\ref{eq:identification}), i.e., maximum likelihood estimators, under $500$ random environments, and we also compute the mean-squared-error (MSE) for $\theta_*$ with respect to each $\alpha$ by averaging over the $500$ environments. 

Firstly, we compare the mean squared estimation error of our estimator and the inverse-weighting estimator \cite{wang2020logistic} for $\tau_n=6,7,8,9$ for some of the $\alpha\in[0.00005,0.5]$. These values $\tau_n$ correspond to $\mathbb{P}(Y=1)$ approximately equal to $0.002,0.0007,0.0002,0.000097$. The numerical results shown by Figure  \ref{fig:convergence:comparison} (in Appendix \ref{appendix:additional simulation}) are consistent with our findings: when $\alpha$ is small and falls into the proper range satisfying the conditions of Theorem \ref{thm:rate-of-convergence-infinity}, the mean-squared-error is close to that generated by $\alpha=0.5$.

Secondly, We focus on the range of $\alpha$ very close to $\mathbb{P}(Y=1)$ for $\tau_n=10.0,9.8,6.0,5.0$, we compute the mean squared error for logistic regression, which correspond to the cases where $\mathbb{P}(Y=1)$ is approximately equal to $3.57\times10^{-5}$, $4.36\times10^{-5}$, $0.0019$, $0.0053$ respectively. 
We replicate our simulations $500$ times for each $\tau_n$ by comparing inverse-weighting estimator, conditional maximum likelihood estimator and our proposed pseudo MLE. 

From Figure \ref{fig:logit-experiment-2} we see that for $\tau_n=10,9.8$ (large), our proposed estimator outperforms both the inverse-weighting estimator and the conditional maximum likelihood estimator. This verifies our statement in Remark \ref{rmk:tau_n}. However for $\tau_n=6,5$ (small), our proposed estimator is worse than the other two. Note that our estimator is not consistent when $\tau_n$ is small by remark \ref{rmk:tau_n}, so its under-performance is not surprising in this scenario. 

Finally, we plot the efficiency cost in Figure \ref{fig:efficiency-cost} (in Appendix \ref{appendix:additional simulation}) with computational budget constraint according to our definition previously and the tradeoff is depicted numerically for the downsample rate selection as we discussed in Section \ref{section:efficiency}. We have observed that when $\tau_n$ increases, the efficiency cost function becomes sharper, indicating it's more sensitive to the choice of $\alpha$. Although our theoretical finding shows that under the rare event case, a small choice of $\alpha$ such that $(1-F(\tau_n))/\alpha=o(1)$ doesn't bring in information loss while reducing computational cost massively, the efficiency cost can be very sensitive to $\alpha$ as the positive ratio goes to $0$ according to Figure \ref{fig:efficiency-cost}, suggesting the necessity of a more prudent method of downsampling rate selection.

\section{Empirical Performance}\label{sec:empirical}

To verify the performance of the proposed pseudo MLE on real imbalanced data, we compare the performance of our estimator with the inverse-weighting estimator on \href{https://imbalanced-learn.org/dev/datasets/index.html}{UCI imbalanced datasets} in Table \ref{tab:dataset:UCI}.  

\begin{table}[htp!]
\centering
\resizebox{\textwidth}{!}{
\begin{tabular}{|c||c||c||c|}
    \hline
    \textbf{Dataset} & \textbf{Sample Size} & \textbf{Neg:Pos Ratio} & \textbf{Feature Number} \\
    \hline	
    \texttt{abalone\_19} & 4,177 & 130:1 & 10 \\
    \hline
    \texttt{mammography} & 11,183 & 42:1 & 6 \\
    \hline
    \texttt{yeast\_me2} & 1,484 & 28:1 & 8 \\
    \hline
    	
    \texttt{abalone} & 4,177 & 9.7:1  & 10 \\
    \hline
    \texttt{ecoli} & 336 & 8.6:1 & 7 \\
    \hline
\end{tabular}
}
\caption{Summary of UCI imbalanced datasets : sample sizes, negative:positive ratios, feature numbers.}
\label{tab:dataset:UCI}
\end{table}

In order to adapt to our setup where we consider the regime with $\tau_n\rightarrow\infty$, we set $\tau_n$ such that $1/(1+e^{\tau_n})=p_1$, where the values of $p_1$ are the positive ratio in the imbalanced dataset. Then we use inverse-weighting estimator and our pseudo-MLE estimator to fit $\theta_*$ (the coefficient for the features), and we compute the log-losses for both estimators on the testing dataset. We replicate the experiment $500$ times, and during each round we randomly split the dataset into $80\%$ for training and $20\%$ for testing. We then plot the average log-losses and the confidence intervals for the log-losses for each downsampling rate $\alpha$ in Figure \ref{fig:real-data-1}, where these $\alpha$'s are chosen close to $p_1$ (i.e. positive ratio) of each dataset. We refer the readers to Appendix \ref{sec:logistic:additional:appendix} for performance with additional moderate and small values of $\tau_n$. 

Lastly, we also apply our method to neural networks on some of those datasets. The simulation details and insights are presented in Appendix \ref{sec:NN:additional:appendix}. 

The numerical results suggest that the application of the pseudo maximum likelihood estimator reduces the out-of-sample log-loss errors compared to the commonly used inverse-weighting estimators in practice. 

\begin{figure}[htp!]
\begin{subfigure}{0.8\textwidth}
     \includegraphics[width=\textwidth]{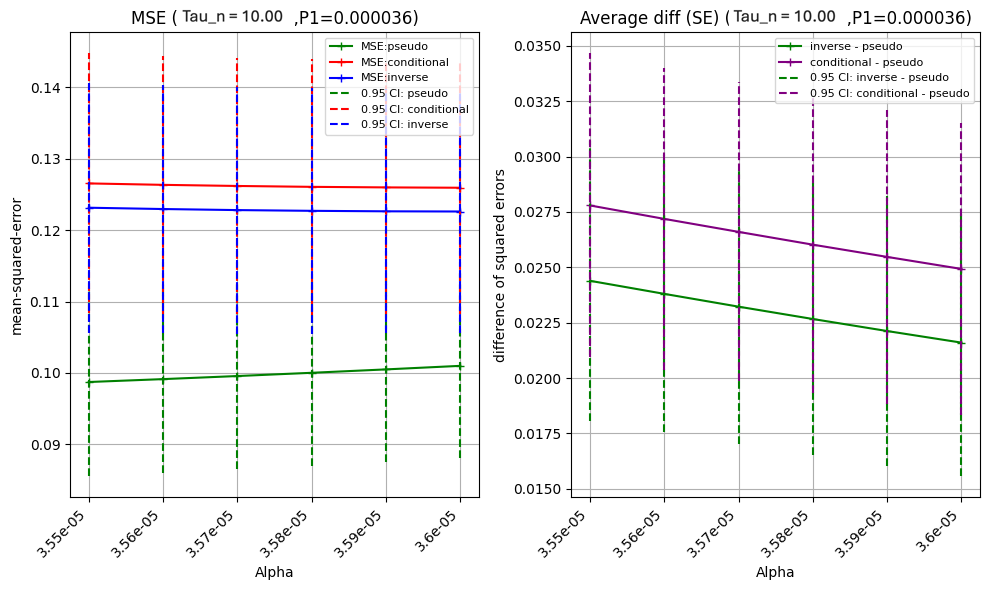}
     \caption{MSE:$\tau_n=10.0$}
     \label{fig:tau_n=10}
 \end{subfigure}
  \vfill
 \begin{subfigure}{0.8\textwidth}
     \includegraphics[width=\textwidth]{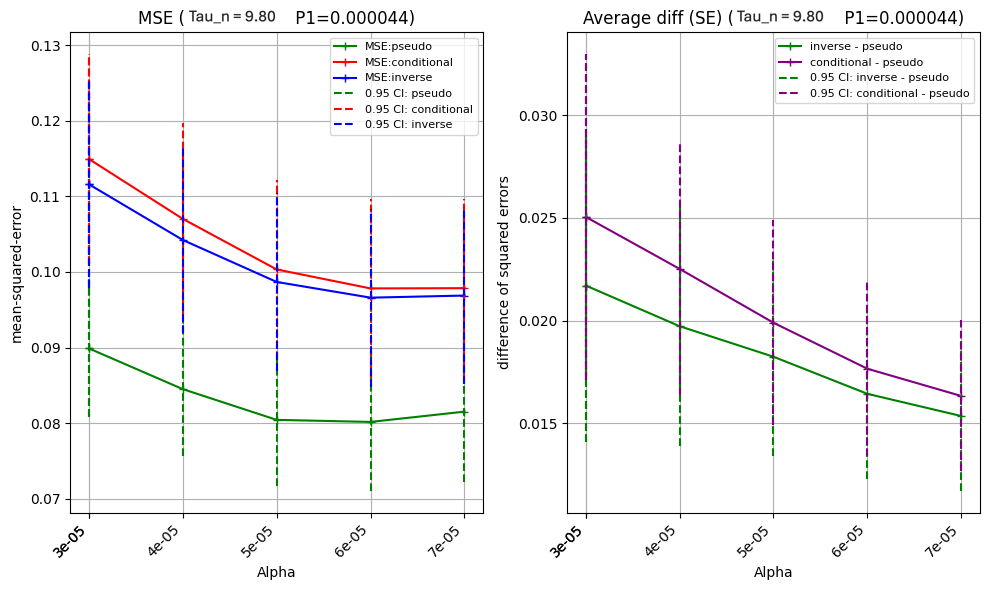}
     \caption{MSE:$\tau_n=9.8$}
     \label{fig:tau_n=9.8} 
 \end{subfigure} 
 \vfill
 \begin{subfigure}{0.8\textwidth}
     \includegraphics[width=\textwidth]{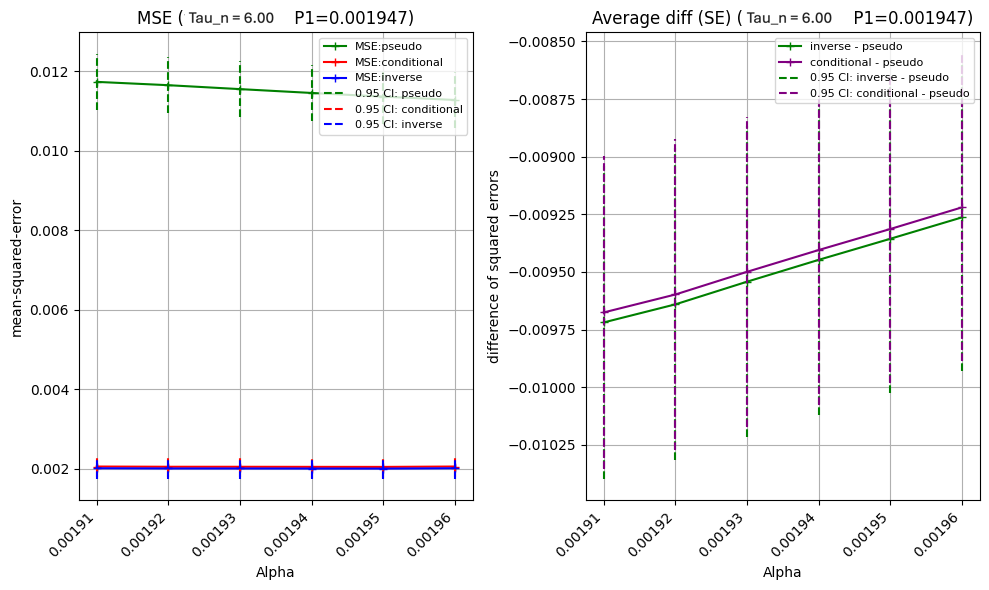}
     \caption{MSE:$\tau_n=6.0$}
     \label{fig:tau_n=6}
 \end{subfigure}
 \vfill
 \begin{subfigure}{0.8\textwidth}
     \includegraphics[width=\textwidth]{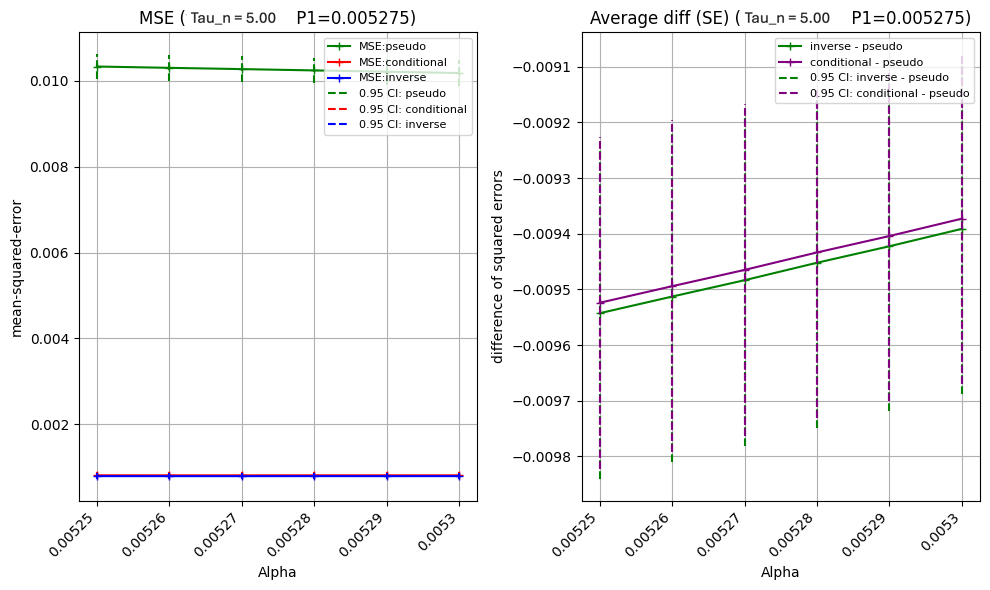}
     \caption{MSE:$\tau_n=5.0$}
     \label{fig:tau_n=5}
 \end{subfigure}
 \caption{On the left panel of each figure, we plot MSE of inverse-weighting (blue) vs. pseudo-MLE (green) vs. conditional MLE (red) for $\alpha$ chosen around $\mathbb{P}(Y=1)$ for $\tau_n=10.0,9.8,6.0,5.0$ with Logistic Regression. The blue,green,red dashed lines correspond to the $95\%$ confidence intervals for the squared losses of inverse-weighting estimator, pseudo MLE and conditional MLE. The upper and lower ends are computed by $\pm1.96*\frac{\hat{\sigma}}{\sqrt{500}}$ and $\hat{\sigma}$ is the standard deviation of squared losses at each alpha computed over $500$ random environments. On the right panel of each figure the green solid lines correspond to the average squared loss differences between inverse-weighting estimator and pseudo MLE, and the purple one corresponds to that of conditional MLE minus pseudo MLE. And the dash lines are the $95\%$ confidence intervals. }
 \label{fig:logit-experiment-2}
\end{figure}

\section{Discussion}
We propose a pseudo maximum likelihood estimator for a Generalized Linear Model binary classifier under downsampling, with theoretical convergence guarantees for imbalanced data. We propose an efficiency cost notion to guide downsampling rate selection. For future work we are interested in exploring over-sampling and focusing on other performance metrics except for mean-squared-errors.

\newpage

\vskip 0.2in

\newpage

\onecolumn

\appendix
\begin{center} {\bf \Large Supplementary Material} \end{center}

\section{Additional Simulation Details}\label{appendix:additional simulation}

\subsection{Additional Results for Applying Logistic Regression}\label{sec:logistic:additional:appendix}
\paragraph{Varying values of $\tau_n$ for Logistic Regression on Empirical Data} We run additional simulations on \texttt{abalone\_19} and \texttt{yeast\_me2} data for varying values of $\tau_n$. The previous setting of \texttt{abalone\_19} data is $\tau_n=\log(1/p_1-1)=4.86$ and we plot the results with $\tau_n=0.01,0.5,1.0,2.0,3.0$ in Figure \ref{fig:abalone:additional tau_n}. The previous setting of \texttt{yeast\_me2} is $\tau_n=\log(1/p_1-1)=3.34$ and we plot the results with $\tau_n=0.01,0.5,1.0,2.0,2.5$ in Figure \ref{fig:yeast_me2:additional tau_n}. The plots show that our pseudo MLE estimator outperforms the inverse-weighting estimator even for those moderate and small values of $\tau_n$. 

\begin{figure}[htp!]
\begin{subfigure}{0.48\textwidth}
     \includegraphics[width=\textwidth]{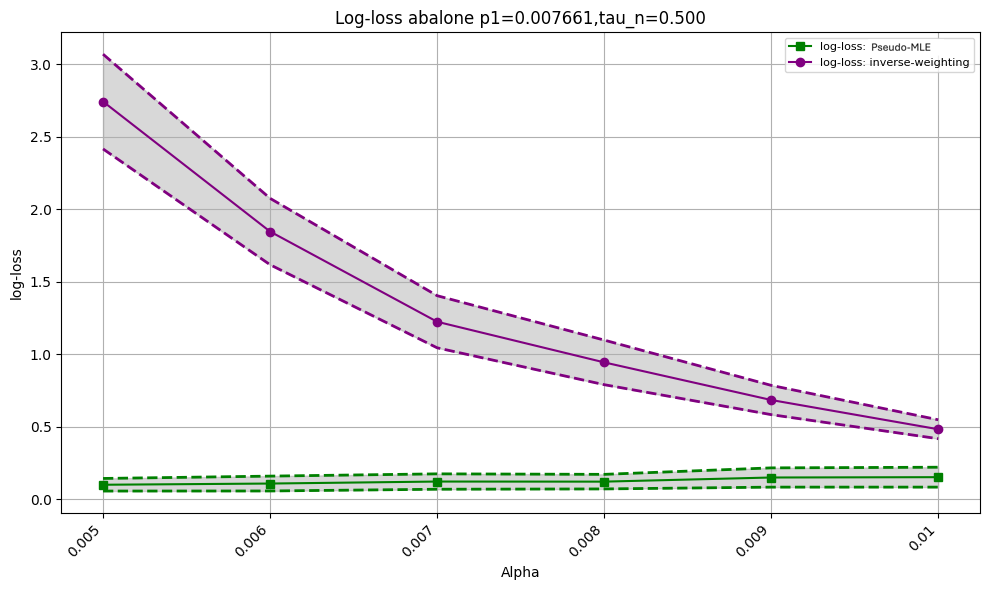}
     \caption{log-loss:$\tau_n=0.5$}
     \label{fig:tau_n=0.5:abalone}
 \end{subfigure}
  \hfill
 \begin{subfigure}{0.48\textwidth}
     \includegraphics[width=\textwidth]{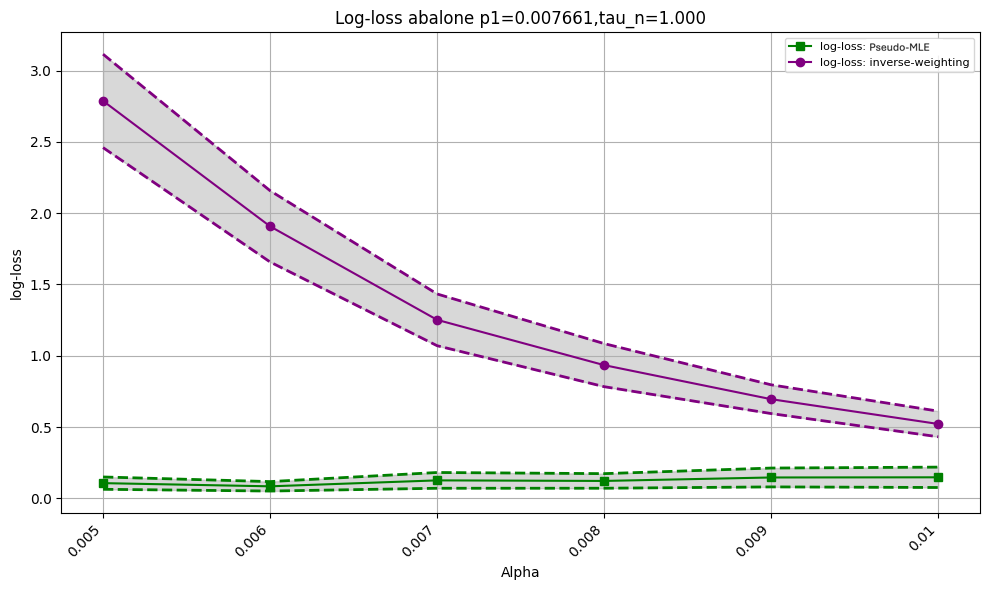}
     \caption{log-loss:$\tau_n=1.0$}
     \label{fig:tau_n=1:abalone} 
 \end{subfigure} 
 \vfill
 \begin{subfigure}{0.48\textwidth}
     \includegraphics[width=\textwidth]{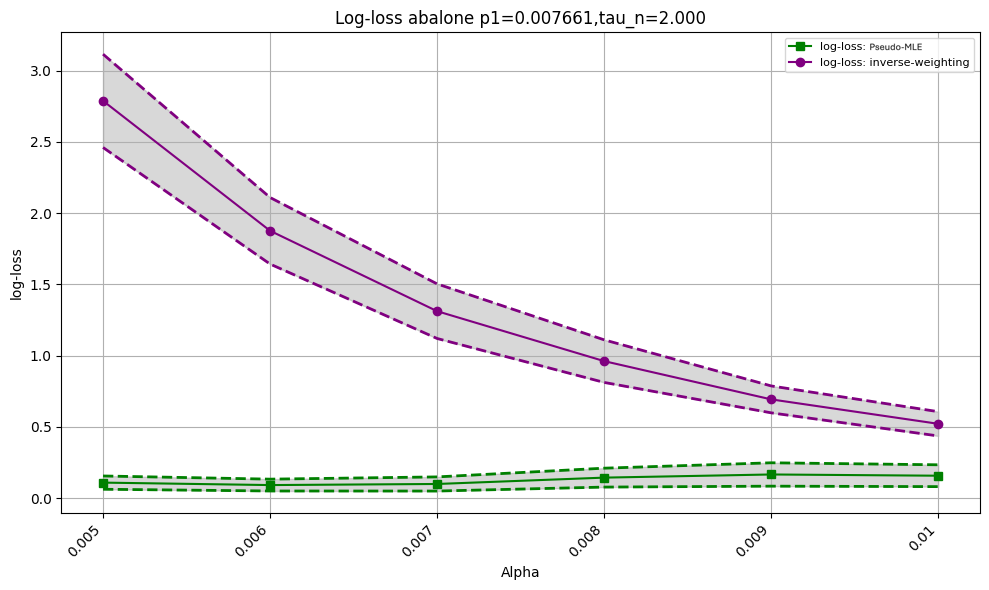}
     \caption{log-loss:$\tau_n=2.0$}
     \label{fig:tau_n=2:abalone}
 \end{subfigure}
 \hfill
 \begin{subfigure}{0.48\textwidth}
     \includegraphics[width=\textwidth]{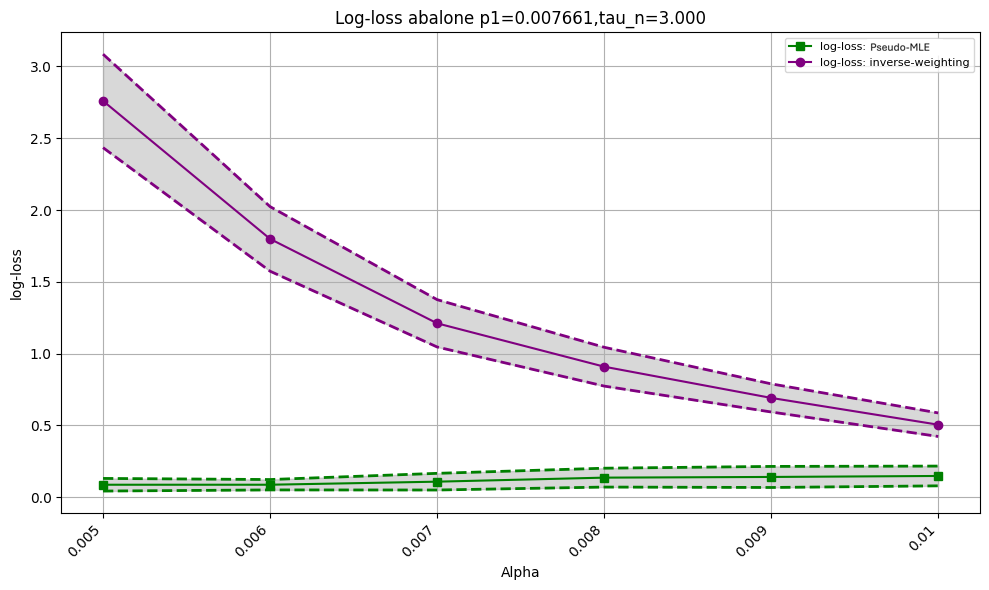}
     \caption{log-loss:$\tau_n=3.0$}
     \label{fig:tau_n=3:abalone}
 \end{subfigure}
 \hfill
 \begin{subfigure}{0.48\textwidth}
     \includegraphics[width=\textwidth]{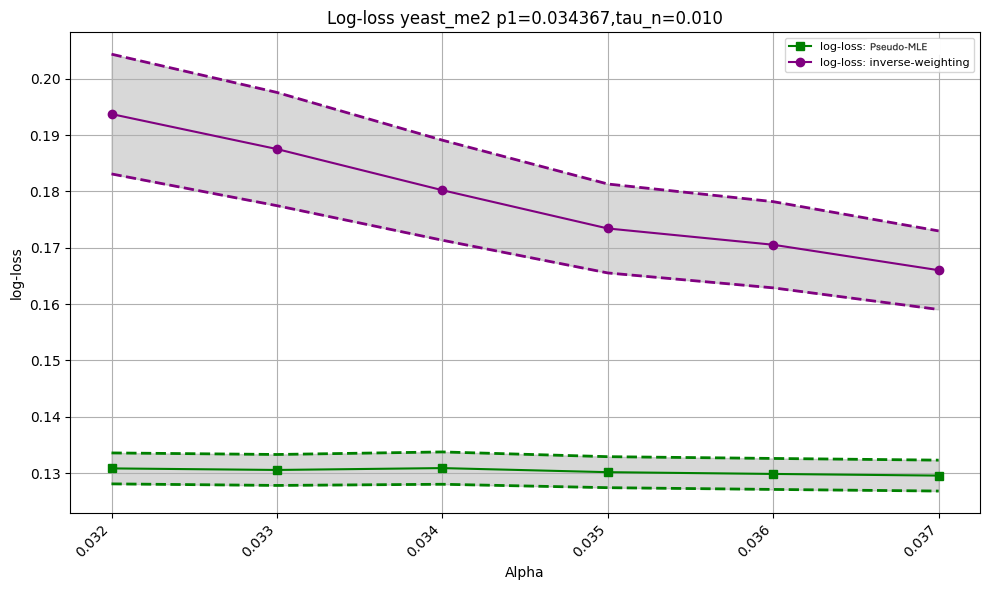}
     \caption{log-loss:$\tau_n=0.01$}
     \label{fig:tau_n=0.01:abalone}
 \end{subfigure}
 \caption{Additional results for \texttt{abalone\_19} dataset for small and moderate values of $\tau_n$.}
 \label{fig:abalone:additional tau_n}
\end{figure}

\begin{figure}[htp!]
\begin{subfigure}{0.48\textwidth}
     \includegraphics[width=\textwidth]{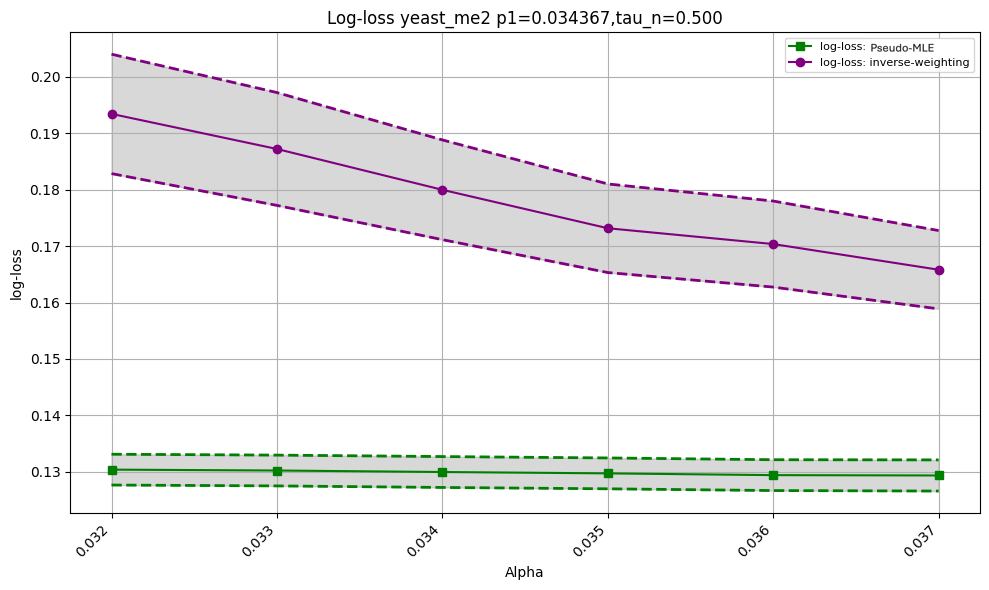}
     \caption{log-loss:$\tau_n=0.5$}
     \label{fig:tau_n=0.5:yeast_me2}
 \end{subfigure}
  \hfill
 \begin{subfigure}{0.48\textwidth}
     \includegraphics[width=\textwidth]{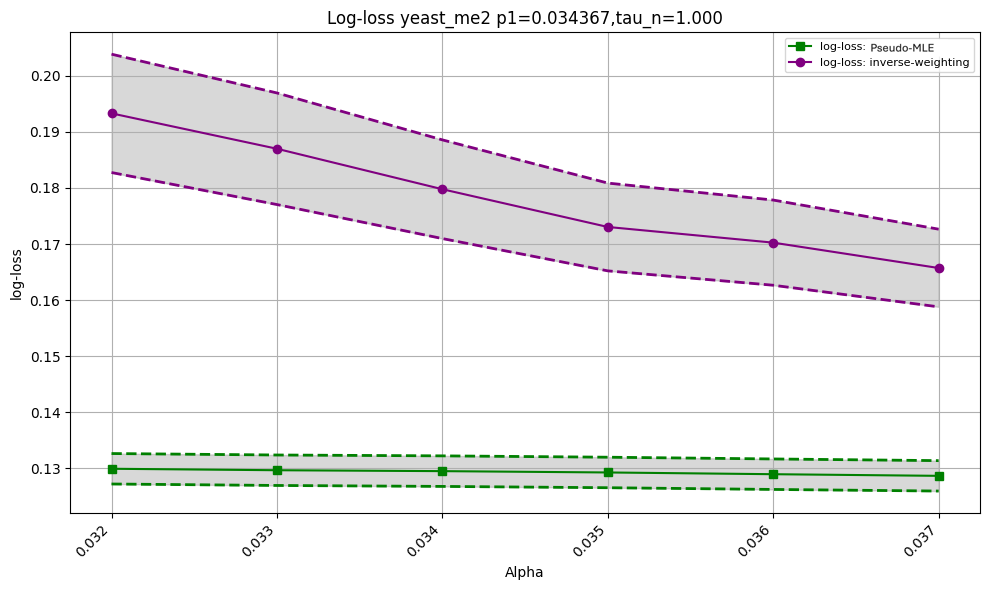}
     \caption{log-loss:$\tau_n=1.0$}
     \label{fig:tau_n=1:yeast_me2} 
 \end{subfigure} 
 \vfill
 \begin{subfigure}{0.48\textwidth}
     \includegraphics[width=\textwidth]{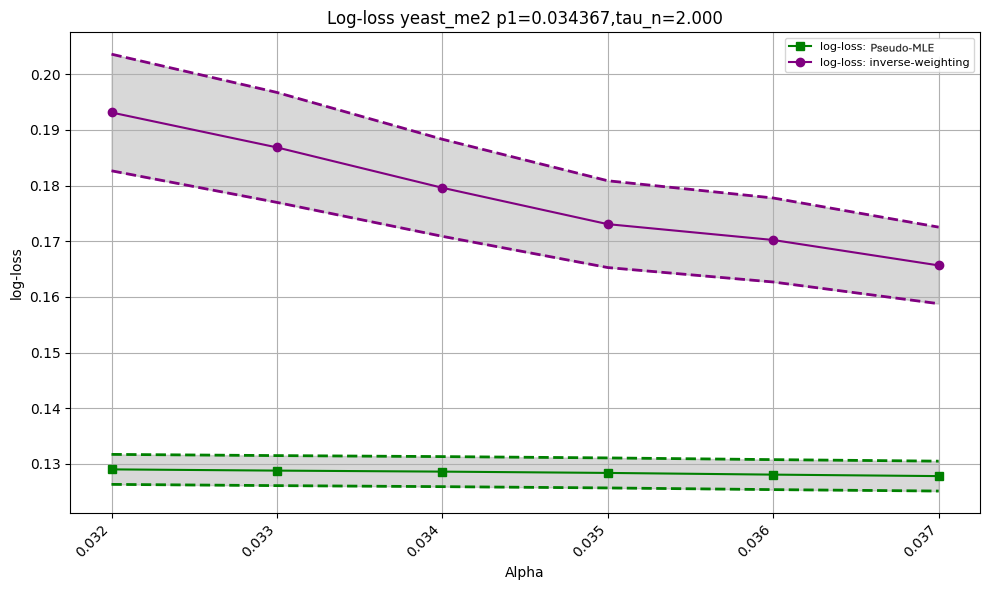}
     \caption{log-loss:$\tau_n=2.0$}
     \label{fig:tau_n=2:yeast_me2}
 \end{subfigure}
 \hfill
 \begin{subfigure}{0.48\textwidth}
     \includegraphics[width=\textwidth]{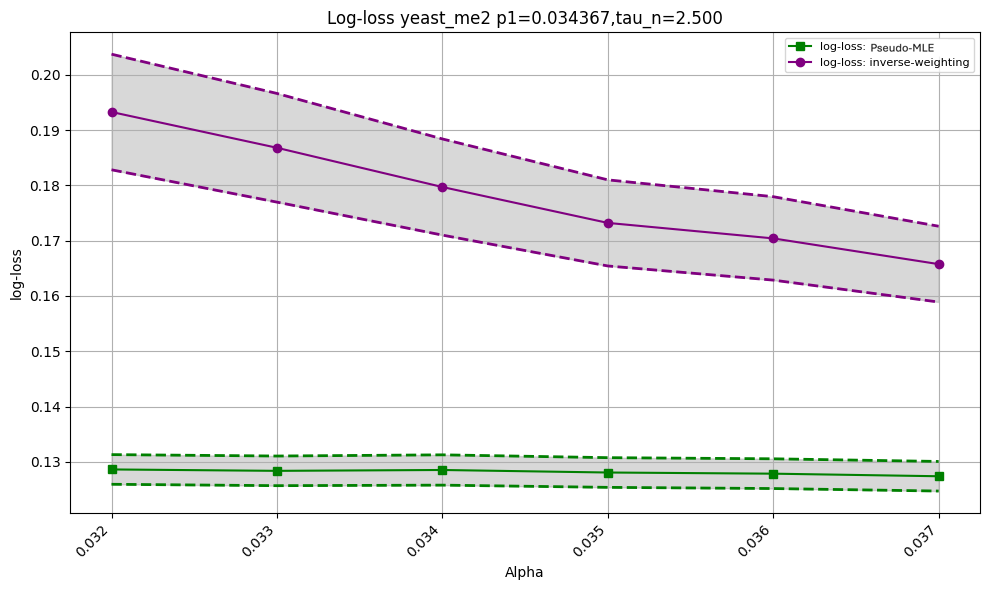}
     \caption{log-loss:$\tau_n=2.5$}
     \label{fig:tau_n=2.5:yeast_me2}
 \end{subfigure}
 \vfill
 \begin{subfigure}{0.48\textwidth}
     \includegraphics[width=\textwidth]{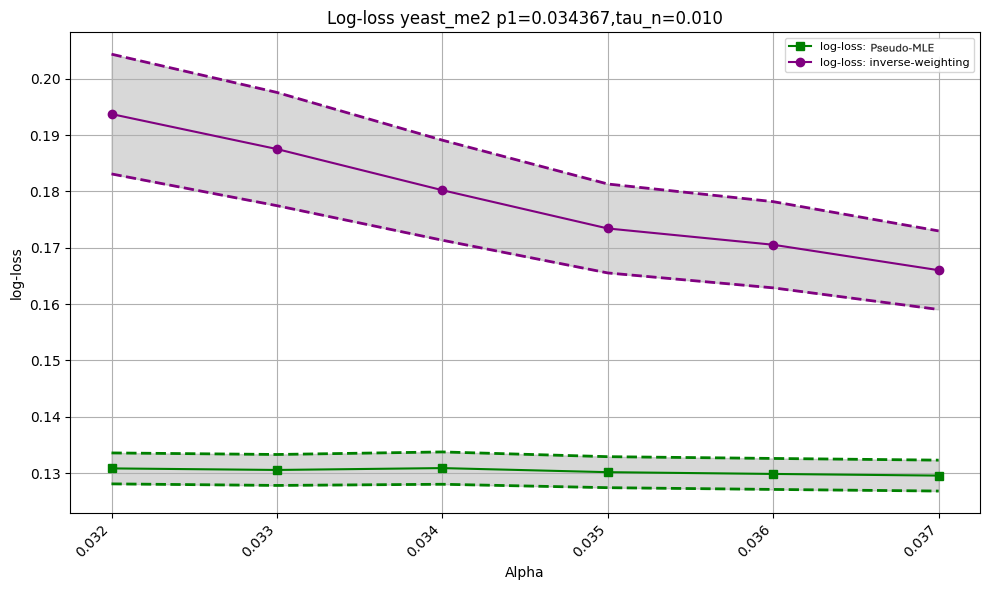}
     \caption{log-loss:$\tau_n=0.01$}
     \label{fig:tau_n=0.01:yeast_me2}
 \end{subfigure}
 \caption{Additional results for \texttt{yeast\_me2} dataset for small and moderate values of $\tau_n$.}
 \label{fig:yeast_me2:additional tau_n}
\end{figure}

\subsection{Simulation Results and Insights for Neural Networks}\label{sec:NN:additional:appendix}
\paragraph{Neural Networks} We plot the log-losses of neural networks applied to imbalanced UCI real data (\texttt{yeat\_me2}, \texttt{abalone\_19}, \texttt{ecoli}) for different downsampling rate $\alpha$ in Figures \ref{fig:NN1}, \ref{fig:NN2}, \ref{fig:NN3}. Dashed lines correspond to the cross-entropy loss, solid lines correspond to the customized loss function implied by our estimator. The results are average log-losses for each training epoch averaging across $500$ random train/test splitting of the real data. The solid lines are below the dashed lines within the same color, indicating the customized loss leading to better performance under the fixed small downsample rate $\alpha$ given here. The neural networks are trained with 3 dense layers with relu activation and one outer layer with sigmoid output.

Though a maximum likelihood analysis of a neural network model is significantly more challenging than generalized linear model, the insight about downsampling alone is intuitive and should carry over to the case of neural networks in additional to GLMs. For example, we could use a small dataset to train a neural network, and exploit the fact that the output activation function is often a GLM. We can then apply our results to the GLM portion and use this to adjust the sample size according to our optimal sample selection. 

\begin{figure}[htp!]
    \centering
    \begin{subfigure}{0.48\textwidth}
        \centering
        \includegraphics[height=3.5cm,width=\textwidth]{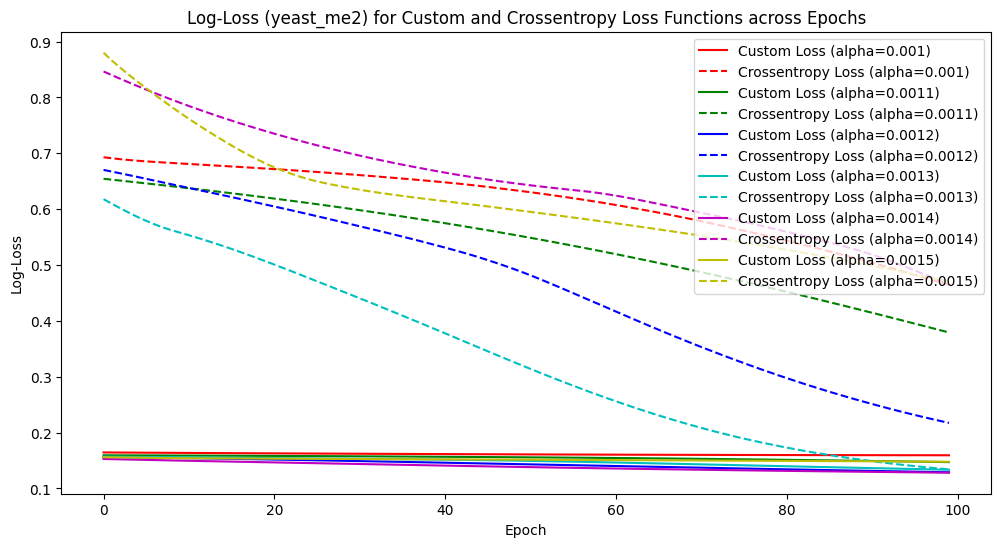} 
        \subcaption{Log-losses: UCI \texttt{yeast\_me2} data with NN}
        \label{fig:NN1}
    \end{subfigure}
    \hfill
    \begin{subfigure}{0.48\textwidth}
        \centering
        \includegraphics[height=3.5cm,width=\textwidth]{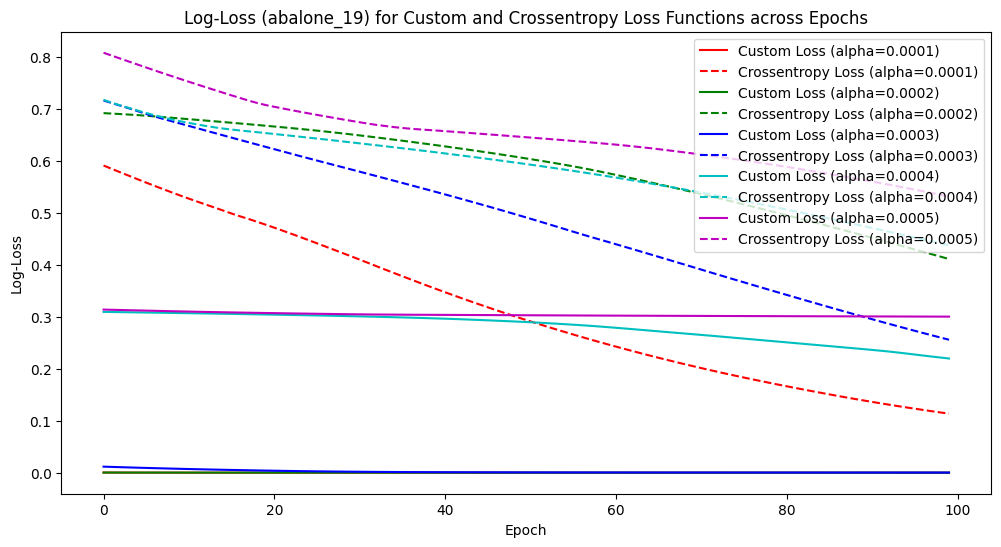} 
        \subcaption{Log-losses: UCI \texttt{abalone\_19} data (NN)}
        \label{fig:NN2}
    \end{subfigure}
    \vfill
    \begin{subfigure}{0.5\textwidth}
        \centering
        \includegraphics[height=3.5cm,width=\textwidth]{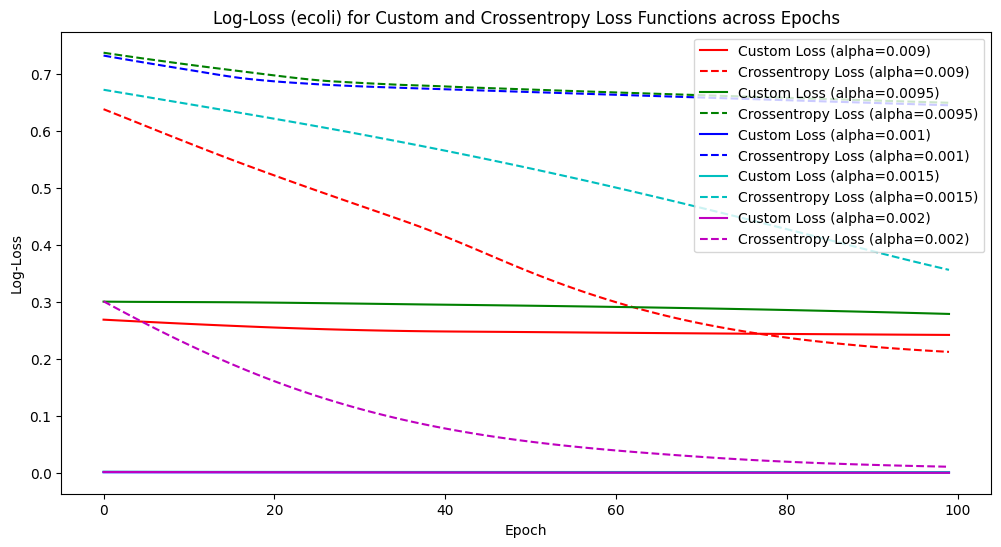} 
        \subcaption{Log-losses: UCI \texttt{ecoli} data with NN}
        \label{fig:NN3}
    \end{subfigure}
\end{figure}

\begin{figure}[htp!]
 \begin{subfigure}{0.48\textwidth}
  \includegraphics[width=\textwidth]{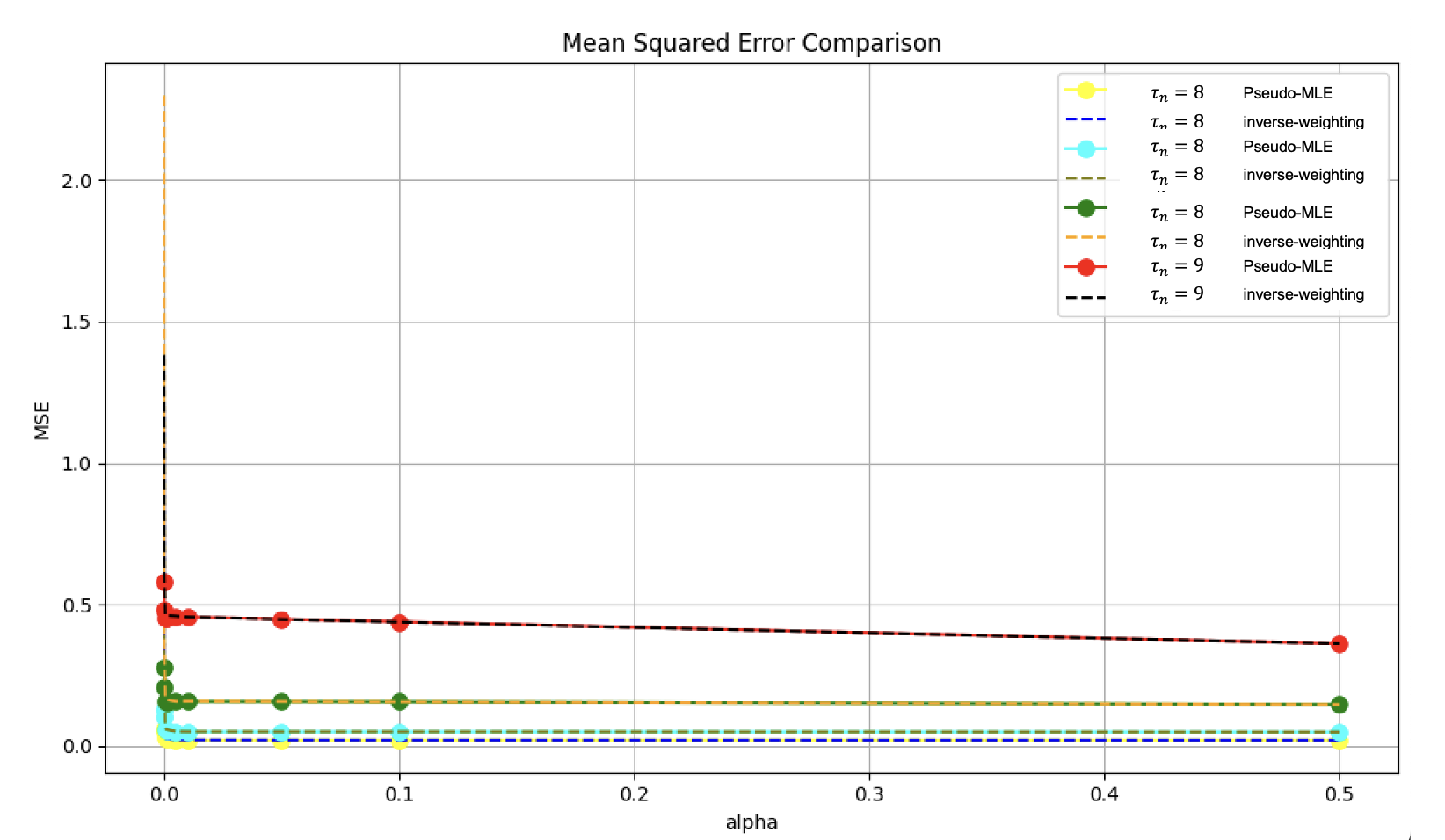}
  \caption{\footnotesize Mean-squared-error of our estimator vs. Inverse-weighting estimator under different downsample rates.}
 \label{fig:convergence:comparison}
 \end{subfigure}
 \hfill
 \begin{subfigure}{0.48\textwidth}
    \includegraphics[width=\textwidth]{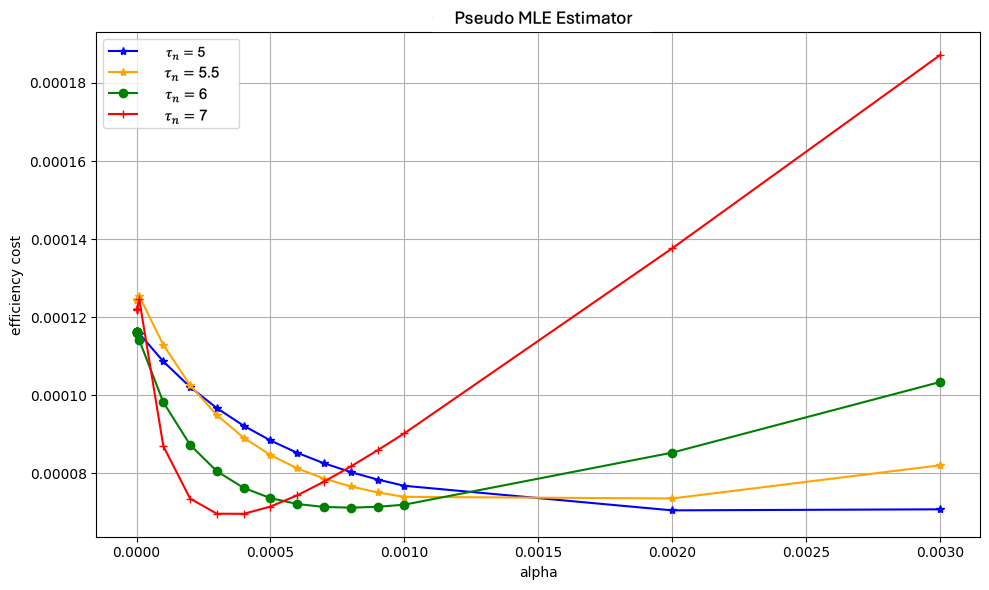}
    \caption{\footnotesize Efficiency cost combining both statistical efficiency and computational cost for $\tau_n=5,5.5,6,7$.}
\label{fig:efficiency-cost}
 \end{subfigure}
 \label{fig:small-alpha}
 \caption{Mean-squared-error and Efficiency costs}
\end{figure}

\begin{figure}[htp!]
 \begin{subfigure}{0.48\textwidth}
     \includegraphics[height=3.5cm,width=\textwidth]{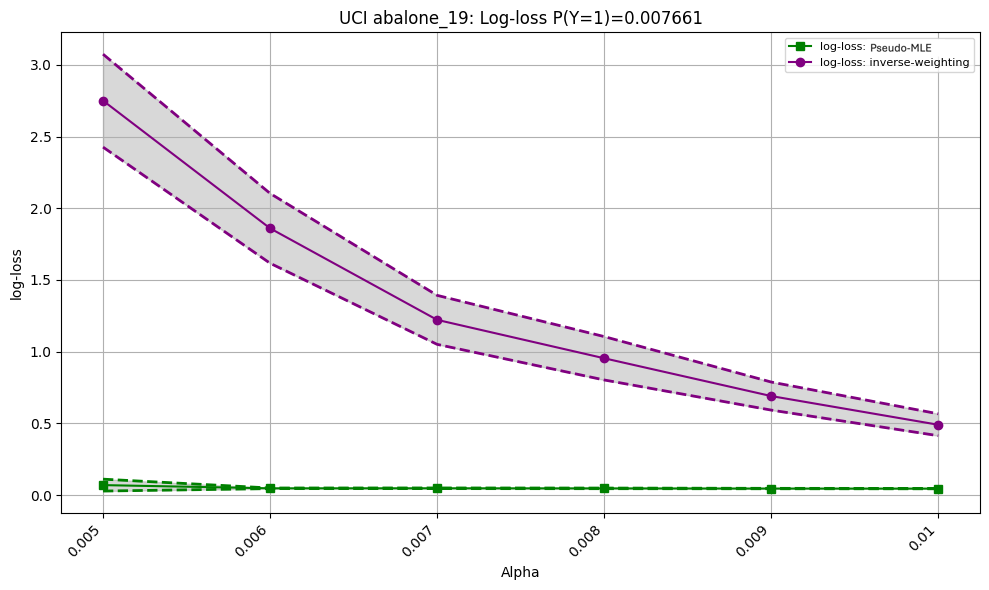}
     \caption{UCI \texttt{abalone\_19}}
     \label{fig:abolone_19} 
 \end{subfigure}
 \hfill
 \begin{subfigure}{0.48\textwidth}
     \includegraphics[height=3.5cm,width=\textwidth]{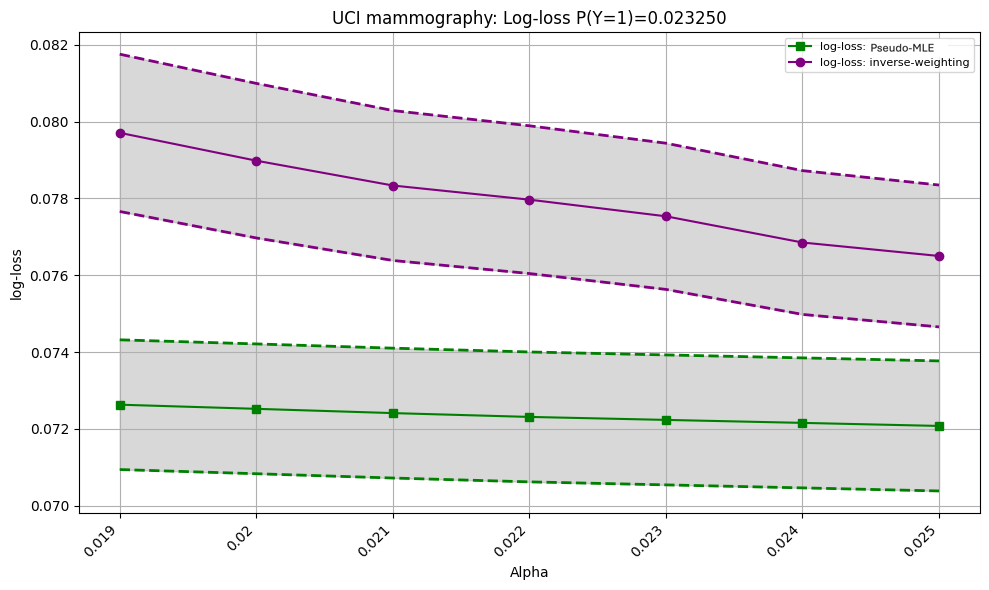}
     \caption{UCI \texttt{mammmography}}
     \label{fig:mammmography}
 \end{subfigure}
 \vfill
 \begin{subfigure}{0.48\textwidth}
     \includegraphics[height=3.5cm,width=\textwidth]{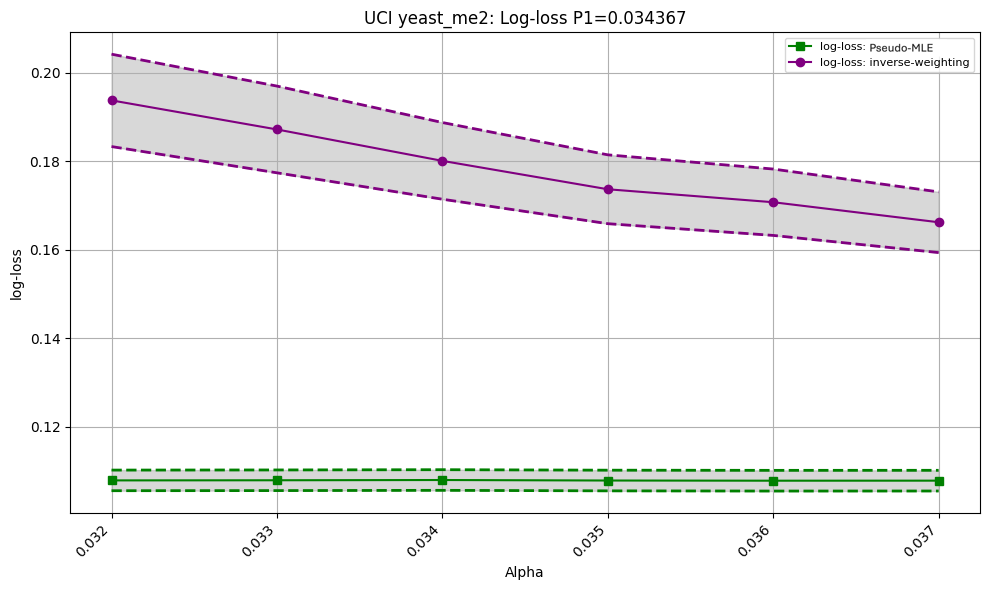}
     \caption{UCI \texttt{yeast\_me2}}
     \label{fig:yeast_me2}
 \end{subfigure}
 \hfill
 \begin{subfigure}{0.48\textwidth}
     \includegraphics[height=3.5cm,width=\textwidth]{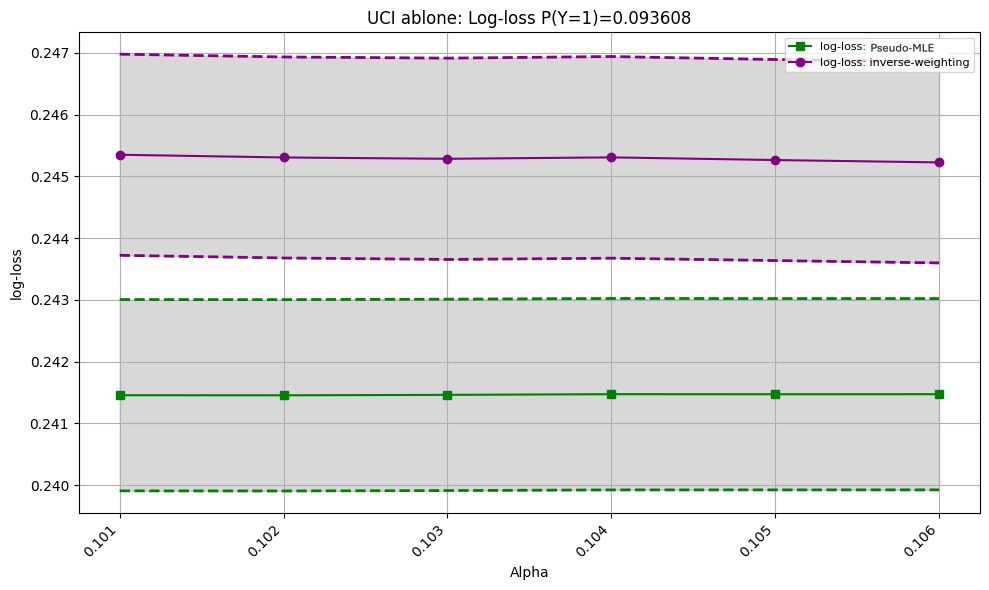}
     \caption{UCI \texttt{abalone}}
     \label{fig:ablone}
 \end{subfigure}
 \vfill
 \begin{subfigure}{0.48\textwidth}
     \includegraphics[height=3.5cm,width=\textwidth]{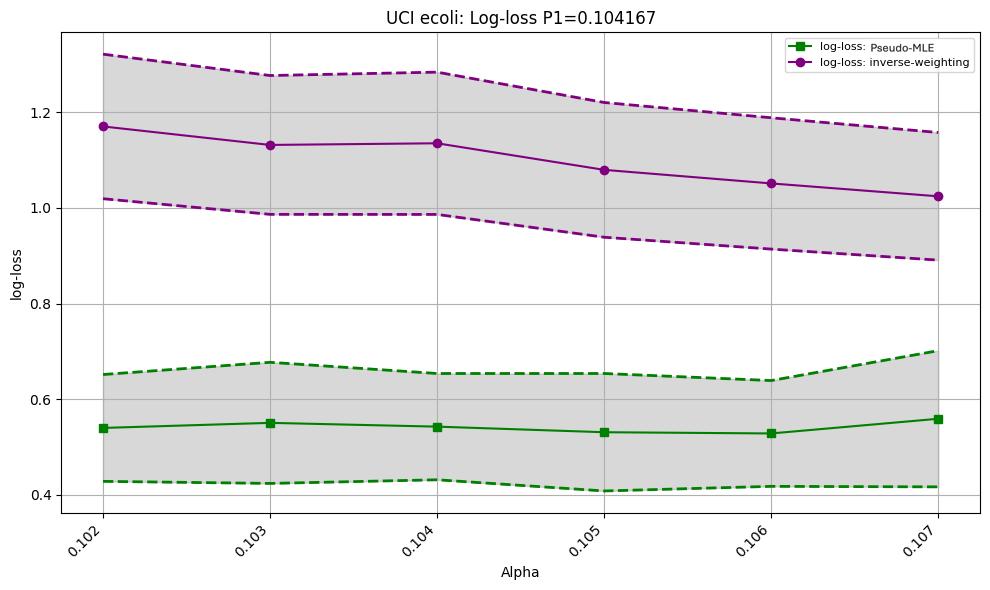}
     \caption{UCI \texttt{ecoli}}
     \label{fig:ecoli}
 \end{subfigure}
 \caption{Log-loss of inverse-weighting (purple) vs. log-loss of pseudo-MLE (green) for $\alpha$ chosen around $\mathbb{P}(Y=1)$ for each data set by applying Logistic Regression. We randomly split the original dataset into $80\%$ for training and $20\%$ for testing during each replication. The log-losses are all computed on test datasets.
 The purple dashed lines correspond to the $95\%$ confidence intervals of the log-loss of inverse-weighting estimator, and the green dashed lines correspond to the $95\%$ confidence intervals of the log-loss of pseudo MLE. The upper and lower ends are computed by $\pm1.96*\frac{\hat{\sigma}}{\sqrt{500}}$ and $\hat{\sigma}$ is the standard deviation of log-loss values at each alpha computed over $500$ random environments.}
 \label{fig:real-data-1}
\end{figure}

\section{Additional Lemmas and Proofs for the Proposed Estimator}\label{appendix:calibration:downsampling}

For the rest of the paper, we use $\tilde{P}$ to denote the joint distibution of downsample variables $(\tilde{X},\tilde{Y})$, use $\tilde{E}$ to denote the expectation with respect to $\tilde{P}$, use $\tilde{P}_N$ for the empirical measure induced by $\{\tilde{X}_i,\tilde{Y}_i\}_{i=1}^N$, and $\tilde{E}_N$ for the expectation taken with respect to $\tilde{P}_N$. 

\begin{lemma}[Counterexample]\label{lemma:counterexample:downsample-use-original}
Suppose $F(z)$ is strictly increasing, and suppose
\begin{itemize}
    \item[1)] $\mathbb{E}_X\left[F^\prime(\tau_n+\theta_*^T X)X\right]\neq\mathbf{0}$, 
    \item[2)] there exists a unique  $\tilde{\theta}_1\in\Theta$ such that
$\mathbb{E}_X\left[\frac{[1-(1-\alpha)F(\tau_n+\theta_*^T X)]F^\prime(\tau_n+\tilde{\theta}_1^T X)X}{1-(1-\alpha)F(\tau_n+\tilde{\theta}_1^T X)}\right]=\mathbf{0}$, 
\end{itemize}
Then (\ref{eq:downsample:original-model}) leads to a biased estimator for $\theta_*$. 

Furthermore, if $\{x\in\mathcal{X}|\tilde{\theta}_1^T x=0\}\neq\emptyset$ and $\{x\in\mathcal{X}|\theta_*^T x=0\}\cap\{x\in\mathcal{X}|\tilde{\theta}_1^T x=0\}\notin\{\emptyset,\mathcal{X}\}$, then the prediction score obtained by the procedure described above (i.e. solving (\ref{eq:downsample:original-model}) and then applying isotonic regression) is also biased.
\end{lemma}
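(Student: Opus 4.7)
The plan is to first identify the population limit of the quasi-MLE $\hat{\theta}_1$ solving (\ref{eq:downsample:original-model}), and then derive a contradiction from condition (1) to conclude $\tilde{\theta}_1 \neq \theta_*$. By standard M-estimator theory, under continuity and compactness of $\Theta$, $\hat{\theta}_1$ converges in probability to the unique maximizer $\tilde{\theta}_1$ of the population criterion $L(\theta_1) := \tilde{E}[\tilde{Y}\log\bar{F}(\tau_n+\theta_1^T\tilde{X}) + (1-\tilde{Y})\log F(\tau_n+\theta_1^T\tilde{X})]$. To characterize this maximizer I would compute the population first-order condition $\partial_{\theta_1} L(\tilde{\theta}_1) = 0$ by (i) conditioning on $\tilde{X}$ and using $\tilde{E}[\tilde{Y}\mid \tilde{X}=x] = \bar{G}(\tau_n+\theta_*^T x)$ from Proposition~\ref{lemma:downsample GLM}, and (ii) rewriting the marginal expectation under $\tilde{\mu}$ as one under $\mu$ via Lemma~\ref{lemma:distribution-downsample-covariate}, which introduces the weight $[1-(1-\alpha)F(\tau_n+\theta_*^T x)]/(p_1+\alpha(1-p_1))$. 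After algebraic cancellation of the normalizing constant and of common factors involving $1-(1-\alpha)F(\tau_n+\theta_*^T x)$, the FOC reduces to the equation in condition (2), so that the $\tilde{\theta}_1$ described there is precisely the probability limit of $\hat{\theta}_1$.

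Next, I would show $\tilde{\theta}_1 \neq \theta_*$ by contradiction. Plugging $\theta_*$ in for $\tilde{\theta}_1$ in condition (2), the factor $1-(1-\alpha)F(\tau_n+\theta_*^T X)$ appears identically in numerator and denominator and cancels, so the equation collapses to $\mathbb{E}_X[F'(\tau_n+\theta_*^T X) X] = \mathbf{0}$. By condition (1) this is non-zero, so $\theta_*$ is not a root; by the assumed uniqueness of $\tilde{\theta}_1$ we conclude $\tilde{\theta}_1 \neq \theta_*$. Combined with $\hat{\theta}_1 \xrightarrow{p} \tilde{\theta}_1$, this proves the estimator is asymptotically biased.

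For the prediction-score claim, note that $g(F(\tau_n+\tilde{\theta}_1^T x))$ depends on $x$ only through the scalar $\tilde{\theta}_1^T x$. The hyperplane hypothesis produces two witness points on $\{x\in\mathcal{X}: \tilde{\theta}_1^T x = 0\}$: a point $x_0$ with $\theta_*^T x_0 = 0$ (from non-emptiness of the two-hyperplane intersection) and a point $x_1$ with $\theta_*^T x_1 \neq 0$ (since the intersection is a proper subset of the non-empty set $\{x\in\mathcal{X}: \tilde{\theta}_1^T x = 0\}$). The predicted scores at $x_0$ and $x_1$ coincide, both equal to $g(F(\tau_n))$, whereas by strict monotonicity of $F$ one has $\bar{F}(\tau_n) \neq \bar{F}(\tau_n+\theta_*^T x_1)$; the corrected prediction cannot match the true conditional probability at both points, hence is biased.

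The main obstacle is the algebraic step in the first paragraph. Starting from the raw score equation $\tilde{E}\bigl[F'(\tau_n+\theta_1^T\tilde{X})\,\tilde{X}\,(\bar{F}(\tau_n+\theta_1^T\tilde{X}) - \tilde{Y})/(F(\tau_n+\theta_1^T\tilde{X})\bar{F}(\tau_n+\theta_1^T\tilde{X}))\bigr] = 0$ and massaging it into the form stated in condition (2) requires writing $\bar{G}(\tau_n+\theta_*^T\tilde{X}) - \bar{F}(\tau_n+\theta_1^T\tilde{X})$ as a single fraction with denominator $1-(1-\alpha)F(\tau_n+\theta_*^T\tilde{X})$ and then absorbing that factor through the change of measure from $\tilde{\mu}$ to $\mu$. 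Once this cancellation is carried out, the remaining pieces — M-estimator consistency, the contradiction against condition (1), and the two-point argument for prediction bias — are routine.
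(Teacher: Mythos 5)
Your proposal follows essentially the same route as the paper's own proof: M-estimator consistency of $\hat{\theta}_1$ to the root of the population score, conditioning on $\tilde{X}$ via Proposition~\ref{lemma:downsample GLM} together with the change of measure of Lemma~\ref{lemma:distribution-downsample-covariate} to identify that root with the $\tilde{\theta}_1$ of condition (2), the exact cancellation of the factor $1-(1-\alpha)F(\tau_n+\theta_*^T X)$ at $\theta_1=\theta_*$ to contradict condition (1), and a two-point argument on the hyperplane $\{x:\tilde{\theta}_1^T x=0\}$ for the prediction-score claim. The one caveat concerns the step you flag as the ``main obstacle'': the reduction of the general score to the form in condition (2) is exact only when the arguments of $G$ and $F$ coincide, i.e.\ at $\theta_1=\theta_*$ (the paper's derivation implicitly makes this substitution), but this does not affect the conclusion, since biasedness only requires the population score at $\theta_*$ to be nonzero, which your argument establishes cleanly.
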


\begin{proof}[Proof of Lemma \ref{lemma:counterexample:downsample-use-original}]
The first-order condition for (\ref{eq:downsample:original-model}) is 
$$\frac{1}{N}\sum_{i=1}^N\tilde{Y}_i\frac{-F^\prime(\tau_n+\hat{\theta}_1^T\tilde{X}_i)\tilde{X}_i^T}{1-F(\tau_n+\hat{\theta}_1^T\tilde{X}_i)}+(1-\tilde{Y}_i)\frac{F^\prime(\tau_n+\hat{\theta}_1^T\tilde{X}_i)\tilde{X}_i^T}{F(\tau_n+\hat{\theta}_1^T\tilde{X}_i)}=\mathbf{0},$$
i.e. $\hat{\theta}_1$ satisfies 
$$\frac{1}{N}\sum_{i=1}^N\frac{(1-\tilde{Y}_i-F(\tau_n+\hat{\theta}_1^T\tilde{X}_i))F^\prime(\tau_n+\hat{\theta}_1^T\tilde{X}_i)\tilde{X}_i^T}{F(\tau_n+\hat{\theta}_1^T\tilde{X}_i)(1-F(\tau_n+\hat{\theta}_1^T\tilde{X}_i))}=\mathbf{0}.$$

By Lemma \ref{prop:conditional independence} and the strong law of large numbers, for any $\theta_1\in\Theta$,

$$\begin{array}{rl}
&\quad\frac{1}{N}\sum_{i=1}^N\frac{(1-\tilde{Y}_i-F(\tau_n+\theta_1^T \tilde{X}_i))F^\prime(\tau_n+\theta_1^T \tilde{X}_i)\tilde{X}_i^T}{F(\tau_n+\theta_1^T \tilde{X}_i)(1-F(\tau_n+\theta_1^T \tilde{X}_i))}\\
&\overset{a.s.}{\longrightarrow}\tilde{E}\left[\frac{(1-\tilde{Y}_i-F(\tau_n+\theta_1^T \tilde{X}_i))F^\prime(\tau_n+\theta_1^T \tilde{X}_i)\tilde{X}_i^T}{F(\tau_n+\theta_1^T \tilde{X}_i)(1-F(\tau_n+\theta_1^T \tilde{X}_i))}\right]\\
&=\tilde{E}\left[\left[\frac{(1-\tilde{Y}_i-F(\tau_n+\theta_1^T \tilde{X}_i))F^\prime(\tau_n+\theta_1^T \tilde{X}_i)\tilde{X}_i^T}{F(\tau_n+\theta_1^T \tilde{X}_i)(1-F(\tau_n+\theta_1^T \tilde{X}_i))}\bigg|\tilde{X}_i\right]\right]\\
&=_{(1)}\tilde{E}\left[\frac{[G(\tau_n+\theta_1^T \tilde{X}_i)-F(\tau_n+\theta_1^T \tilde{X}_i)]F^\prime(\tau_n+\theta_1^T \tilde{X}_i)\tilde{X}_i^T}{F(\tau_n+\theta_1^T \tilde{X}_i)(1-F(\tau_n+\theta_1^T \tilde{X}_i))}\right]\\
&=-(1-\alpha)\tilde{E}\left[\frac{F^\prime(\tau_n+\theta_1^T \tilde{X}_i)\tilde{X}_i^T}{1-(1-\alpha)F(\tau_n+\theta_1^T \tilde{X}_i)}\right]\\
&=_{(2)}-(1-\alpha)\frac{\mathbb{E}_X\left[\frac{[1-(1-\alpha)F(\tau_n+\theta_*^T X)]F^\prime(\tau_n+\theta_1^T X)X}{1-(1-\alpha)F(\tau_n+\theta_1^T X)}\right]}{\mathbb{P}(Y=1)+\alpha\mathbb{P}(Y=0)},
\end{array}$$

where (1) follows from Proposition \ref{prop:downsample-prob} and (2) uses Lemma \ref{lemma:distribution-downsample-covariate}. Thus by Lemma \ref{lemma:thm-vdv-M-estimator}, $\hat{\theta}_1\overset{p}{\rightarrow}\tilde{\theta}_1$ such that 
$\mathbb{E}_X\left[\frac{[1-(1-\alpha)F(\tau_n+\theta_*^T X)]F^\prime(\tau_n+\tilde{\theta}_1^T X)X^T}{1-(1-\alpha)F(\tau_n+\tilde{\theta}_1^T X)}\right]=\mathbf{0}$.

Furthermore, taking in true parameter $\theta_*$, the expected value under probability measure $\tilde{P}$ (i.e. the joint distribution of $(\tilde{Y},\tilde{X})$) is equal to 

$$\begin{array}{rl}
&\quad\tilde{E}\left[\frac{(1-\tilde{Y}_i-F(\tau_n+\theta_*^T\tilde{X}_i))F^\prime(\tau_n+\theta_*^T\tilde{X}_i)\tilde{X}_i^T}{F(\tau_n+\theta_*^T\tilde{X}_i)(1-F(\tau_n+\theta_*^T\tilde{X}_i))}\right]\\
&=\tilde{E}\left[\tilde{E}\left[\frac{(1-\tilde{Y}_i-F(\tau_n+\theta_*^T\tilde{X}_i))F^\prime(\tau_n+\theta_*^T\tilde{X}_i)\tilde{X}_i^T}{F(\tau_n+\theta_*^T\tilde{X}_i)(1-F(\tau_n+\theta_*^T\tilde{X}_i))}\bigg|\tilde{X}_i\right]\right]\\
&=_{(1)}\tilde{E}\left[-\frac{F^\prime(\tau_n+\theta_*^T\tilde{X}_i)\bar{G}(\tau_n+\theta_*^T\tilde{X}_i)\tilde{X}_i^T}{1-F(\tau_n+\theta_*^T\tilde{X}_i)}+\frac{F^\prime(\tau_n+\theta_*^T\tilde{X}_i)G(\tau_n+\theta_*^T\tilde{X}_i)\tilde{X}_i^T}{F(\tau_n+\theta_*^T\tilde{X}_i)}\right]\\
&=_{(2)}\frac{-(1-\alpha)\mathbb{E}_X\left[F^\prime(\tau_n+\theta_*^T X)X^T\right]}{\mathbb{P}(Y=1)+\alpha\mathbb{P}(Y=0)},
\end{array}$$
where (1) uses Proposition \ref{prop:downsample-prob} and (2) uses Lemma \ref{lemma:distribution-downsample-covariate}, and $\mathbb{E}_X$ is the expectation taken with respect to the distribution of $X$ from the full data. Thus under the given conditions, 
$\mathbb{E}_X\left[F^\prime(\tau_n+\theta_*^T X)X^T\right]\neq\mathbf{0}$, while there exists some $\tilde{\theta}_1$, 
$$\begin{array}{rl}
&\quad\mathbb{E}_X\left[\frac{[1-(1-\alpha)F(\tau_n+\theta_*^T X)]F^\prime(\tau_n+\tilde{\theta}_1^T X)X^T}{1-(1-\alpha)F(\tau_n+\tilde{\theta}_1^T X)}\right]=\mathbb{E}_X\left[\frac{[1-(1-\alpha)F(\tau_n+\theta_*^T X)]\varphi(\tau_n+\tilde{\theta}_1^T X)X^T}{1-(1-\alpha)F(\tau_n+\tilde{\theta}_1^T X)}\right]=\mathbf{0}.
\end{array}$$
Thus $\hat{\theta}_1\overset{p}{\rightarrow}\tilde{\theta}_1$ but $\tilde{\theta}_1\neq\theta_*$. 

Consequently, under the third condition in the lemma,
there exists $x_1,x_2\in\mathcal{X}$ such that $\tilde{\theta}_1^T(x_1-x_2)=0$
while $\theta_*^T(x_1-x_2)\neq0$. This is because there exists $\Delta\in\mathcal{X}$ such that $\Delta$ is in the subspace defined by the hyperplane induced by $\tilde{\theta}_1$: $\Delta\in\{x\in\mathcal{X}|\tilde{\theta}_1^T x=0\}$ while $\theta_*^T\Delta\neq0$. Then for any $x_1\in\mathcal{X}$, there exists $\lambda$ sufficiently small such that $x_2=x_1+\lambda\Delta\in\mathcal{X}$, with $\tilde{\theta}_1^T(x_1-x_2)=0$ while  $\theta_*^T(x_1-x_2)\neq0$. Therefore, $\tau_n+\tilde{\theta}_1^T x_1=\tau_n+\tilde{\theta}_1^T x_2$ while $\tau_n+\theta_*^T x_1\neq\tau_n+\theta_*^T x_2$. Thus suppose there exists a monotone transformation $g$ such that $F(\tau_n+\theta_*^T x)=g\circ F(\tau_n+\tilde{\theta}_1^T x)$ for all $x\in\mathcal{X}$, then $F(\tau_n+\theta_*^T x_1)=g\circ F(\tau_n+\tilde{\theta}_1^T x_1)$. Also note that $\tau_n+\tilde{\theta}_1^T x_1=\tau_n+\tilde{\theta}_1^T x_2$ thus $g\circ F(\tau_n+\tilde{\theta}_1^T x_1)=g\circ F(\tau_n+\tilde{\theta}_1^T x_2)=F(\tau_n+\theta_*^T x_2)$, thus $F(\tau_n+\theta_*^T x_1)=F(\tau_n+\theta_*^T x_2)$, which leads to contradiction because $F$ is strictly increasing. 
\end{proof}

\begin{lemma}\label{prop:downsample-prob}
Let $\mathbb{P}$ be the joint distribution of $(Y,X)$, for any $y\in\{0,1\}$ and $x\in\mathcal{X}$,
define 
{\footnotesize\begin{equation}\label{eq:downsample-prob}
\tilde{P}(\tilde{Y}=y,\tilde{X}=x) := \frac{y\mathbb{P}\left(Y=1,X=x\right)+(1-y)\alpha\mathbb{P}(Y=0,X=x)}{\mathbb{P}(Y=1)+\alpha\mathbb{P}(Y=0)},
\end{equation}}
then (\ref{eq:downsample-prob}) defines a probability distribution $\tilde{P}$ with respect to downsample random variables $(\tilde{Y}_i,\tilde{X}_i)$.
\end{lemma}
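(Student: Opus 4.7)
The plan is to verify that the proposed expression satisfies the two defining properties of a probability distribution: non-negativity everywhere on its support, and total mass equal to one. Both follow by direct computation once we observe the role of the denominator as a normalizing constant.

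First, I would note that non-negativity is immediate from the assumption $\alpha \in (0,1]$ together with the fact that $\mathbb{P}(Y=y, X=x) \geq 0$ for each $y \in \{0,1\}$, $x \in \mathcal{X}$. The denominator $\mathbb{P}(Y=1) + \alpha \mathbb{P}(Y=0)$ is strictly positive (provided $\mathbb{P}(Y=1)>0$ or $\alpha>0$), so the ratio is well defined and non-negative for every $(y,x)$.

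Second, I would verify the total-mass condition by summing over $y \in \{0,1\}$ and integrating over $x \in \mathcal{X}$ against the appropriate dominating measure. Summing the numerator over $y$ yields
$$\mathbb{P}(Y=1, X=x) + \alpha\, \mathbb{P}(Y=0, X=x),$$
and integrating over $x \in \mathcal{X}$ gives exactly $\mathbb{P}(Y=1) + \alpha\, \mathbb{P}(Y=0)$, which cancels the denominator to produce $1$. Hence $\tilde{P}$ defines a probability distribution.

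Finally, for completeness I would connect the formula to the downsampling mechanism described in Section 2: conditional on $Y_i=1$, the sample is retained with probability $1$, and conditional on $Y_i=0$ it is retained with probability $\alpha$. By Bayes' rule, the joint law of a retained observation is proportional to $\mathbb{P}(Y=y, X=x)$ times the retention probability, with normalization constant equal to the overall retention probability $\mathbb{P}(Y=1) + \alpha\, \mathbb{P}(Y=0)$, which matches the stated formula exactly. There is no real obstacle here; the statement is essentially a bookkeeping lemma that fixes notation for later results (in particular, Proposition~\ref{lemma:downsample GLM} and the computations in Lemma~\ref{lemma:counterexample:downsample-use-original}) and the proof is a short direct verification.
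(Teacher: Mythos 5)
Your proposal is correct and follows essentially the same route as the paper: both verify non-negativity and unit total mass by summing over $y$ and integrating over $x$ so that the denominator appears as the normalizing constant, and both tie the formula back to the retain-all-positives / retain-negatives-with-probability-$\alpha$ mechanism (the paper writes this as $(\tilde{Y},\tilde{X})=\mathbf{1}(Y=1)(Y,X)+\mathbf{1}(Y=0)\mathbf{1}(U\leq\alpha)(Y,X)$ with $U$ uniform and independent). No gaps.
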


\begin{proof}[Proof of Lemma \ref{prop:downsample-prob}]
Note that the downsampling procedure is equivalent to 
{\footnotesize\begin{equation}\label{downsample-GLM}
\left(\Tilde{Y},\Tilde{X}\right)=\mathbf{1}\left(Y=1\right)\left(Y,X\right)+\mathbf{1}(Y=0)\mathbf{1}(U\leq\alpha)(Y,X),
\end{equation}}
where $U\sim\textrm{Uniform}\left[0,1\right]$and $U\indep(Y,X)$. 
Thus the joint distribution (density) of $(\tilde{Y},\tilde{X})$ with respect to $\mathbb{P}_{(Y,X)}$ as the joint law of $(Y,X)$ can be written as 
{\footnotesize\begin{align}\label{eq:key:iid}
\mathbb{P}_{(Y,X)}\left(\left(\Tilde{Y},\Tilde{X}\right)=\left(y,x\right)\right)&=\mathbf{1}(y=1)\mathbb{P}\left(Y=1,X=x\right)+\mathbf{1}(y=0)\alpha\mathbb{P}(Y=0,X=x)\notag\\
&=y\mathbb{P}\left(Y=1,X=x\right)+(1-y)\alpha\mathbb{P}(Y=0,X=x).
\end{align}}
So integrating with respect to $(y,x)$ we have 
{\footnotesize$$\int_{\mathcal{X}}\mathbb{P}\left[\left(Y=1,X=x\right)+\alpha\mathbb{P}(Y=0,X=x)\right]dx=\mathbb{P}(Y=1)+\alpha\mathbb{P}(Y=0).$$}
So by definition 
{\footnotesize$$\int_{\mathcal{X}}\sum_{y\in\{0,1\}}\tilde{P}(\tilde{Y}=y,\tilde{X}=x)dx=1.$$}
Further note that {\footnotesize$\tilde{P}(\tilde{Y}=y,\tilde{X}=x)\in[0,1]$} for any $x\in\mathcal{X}$ and $y\in\{0,1\}$. Thus $\tilde{P}$ is indeed a valid probability distribution defined for $(\tilde{Y},\tilde{X})$. 
\end{proof}

\begin{lemma}[i.i.d. Property]\label{prop:conditional independence}
The downsampled data {\footnotesize$\{(\tilde{X}_i,\tilde{Y}_i)\}_{i=1}^N$} are i.i.d. generated with respect to $\tilde{P}$ defined in (\ref{eq:downsample-prob}) of Lemma \ref{prop:downsample-prob}.
\end{lemma}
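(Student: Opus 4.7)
The plan is to exploit the explicit representation (\ref{downsample-GLM}) of the downsampling mechanism, which realizes each accept/reject decision using an independent uniform auxiliary variable, and then invoke a standard thinning argument for iid sequences.

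First I would enlarge the probability space so that the full sample is represented as the iid sequence $Z_i=(Y_i,X_i,U_i)$ with $U_i\sim\mathrm{Uniform}[0,1]$ independent of $(Y_i,X_i)$ and independent across $i$. Using (\ref{downsample-GLM}), the ``accept'' event for sample $i$ becomes $\{S_i=1\}$ with $S_i:=\mathbf{1}(Y_i=1)+\mathbf{1}(Y_i=0)\mathbf{1}(U_i\le\alpha)$, which is a deterministic function of $Z_i$ alone. In particular, the indicators $S_i$ are iid Bernoulli with parameter $p:=\mathbb{P}(Y=1)+\alpha\mathbb{P}(Y=0)>0$, and the $k$-th accepted sample is $Z_{\tau_k}$ with stopping times $\tau_k=\inf\{i>\tau_{k-1}:S_i=1\}$ ($\tau_0=0$), while $(\tilde{Y}_k,\tilde{X}_k)$ is the projection of $Z_{\tau_k}$ onto its first two coordinates.

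Second, I would apply a standard thinning lemma: if $(Z_i)$ is an iid sequence with law $\nu$ and $B$ is a measurable set with $\nu(B)>0$ such that $\{S_i=1\}=\{Z_i\in B\}$, then $(Z_{\tau_k})_{k\ge 1}$ is iid with conditional law $\nu(\cdot\mid B)$. This follows by a direct decomposition: for any $m\ge 1$ and measurable sets $A_1,\dots,A_m$, the iid structure gives
\[
\mathbb{P}\bigl(Z_{\tau_1}\in A_1,\dots,Z_{\tau_m}\in A_m\bigr)=\sum_{1\le i_1<\cdots<i_m}\nu(B^c)^{i_1-1}\nu(A_1\cap B)\prod_{j=2}^{m}\nu(B^c)^{i_j-i_{j-1}-1}\nu(A_j\cap B),
\]
and re-indexing by the gap sizes $q_j:=i_j-i_{j-1}-1\ge 0$ turns the sum into $\prod_{j=1}^{m}\nu(A_j\cap B)\cdot\bigl(\sum_{q\ge 0}\nu(B^c)^q\bigr)^m=\prod_{j=1}^{m}\nu(A_j\mid B)$.

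Third, I would specialize to $B=\{(y,x,u):y=1\text{ or }(y=0,u\le\alpha)\}$ and compute the pushforward of $\nu(\cdot\mid B)$ under the projection $(y,x,u)\mapsto(y,x)$. Splitting on the value of $y$ and using independence of $U_i$ from $(Y_i,X_i)$ yields
\[
\mathbb{P}\bigl(\tilde{Y}_k=y,\tilde{X}_k=x\bigr)=\frac{y\,\mathbb{P}(Y=1,X=x)+(1-y)\,\alpha\,\mathbb{P}(Y=0,X=x)}{\mathbb{P}(Y=1)+\alpha\,\mathbb{P}(Y=0)},
\]
which is exactly the measure $\tilde{P}$ of (\ref{eq:downsample-prob}). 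Since the projection is measurable, independence of the $Z_{\tau_k}$ transfers directly to independence of the $(\tilde{Y}_k,\tilde{X}_k)$, completing the argument.

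The main obstacle is really just bookkeeping rather than a genuine difficulty: the random size $N=\sum_{i=1}^n S_i$ of the downsample means one must be careful in stating ``iid''. The cleanest route is to prove the result for the full infinite accepted sequence $(\tilde{Y}_k,\tilde{X}_k)_{k\ge 1}$ (which is well defined almost surely since $p>0$ implies $\tau_k<\infty$ a.s.), and then observe that the finite-sample downsample used elsewhere in the paper is just the first $N$ terms of this sequence, on which the iid property is inherited.
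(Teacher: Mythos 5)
Your proposal is correct, and it reaches the same conclusion from the same starting point as the paper (the auxiliary-uniform representation of the accept/reject mechanism), but the execution is genuinely different and worth contrasting. The paper keeps the \emph{original} indices: it shows that for $i\neq j$ the joint sub-probability $\mathbb{P}\bigl((\tilde{Y}_i,\tilde{X}_i)=(y,x),(\tilde{Y}_j,\tilde{X}_j)=(y',x')\bigr)$ factorizes, using only that $(Y_i,X_i,U_i)$ and $(Y_j,X_j,U_j)$ are independent, and then normalizes by $\mathbb{P}(Y=1)+\alpha\mathbb{P}(Y=0)$ to pass from $\mathbb{P}$ to $\tilde{P}$; the re-indexing of the accepted subsequence and the randomness of $N$ are left implicit. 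You instead re-index explicitly via the stopping times $\tau_k$ and prove a general thinning lemma by summing over the acceptance positions and collapsing the geometric series, then push forward to $(y,x)$. What your route buys is rigor exactly where the paper is most informal: it makes precise in what sense the \emph{accepted, re-indexed} sequence is i.i.d.\ with law $\nu(\cdot\mid B)=\tilde{P}$, and it isolates the random-$N$ issue cleanly (conditional on the acceptance pattern, hence on $N$, the accepted values are i.i.d.\ $\nu(\cdot\mid B)$, so the first $N$ accepted terms inherit the property). What the paper's route buys is brevity and the explicit identification $\tilde{P}=\mathbb{P}/(\mathbb{P}(Y=1)+\alpha\mathbb{P}(Y=0))$ used later in the appendix. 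One cosmetic remark: your displayed formula for $\mathbb{P}(Z_{\tau_1}\in A_1,\dots,Z_{\tau_m}\in A_m)$ is really a sum over the joint events $\{\tau_j=i_j,\ Z_{i_j}\in A_j\ \forall j\}$; it would be worth one sentence noting that these events are disjoint and exhaust the target event, but this is bookkeeping, not a gap.
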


\begin{proof}[Proof of Lemma \ref{prop:conditional independence}]
Recall that the \textit{Generalized Linear Model} is defined as follows: For some latent variable $Z$, the label is defined as {\footnotesize$Y = \mathbf{1}\left(Z>\tau_n+\theta_*^TX\right), \mbox{ and } \mathbb{P}\left(Y=1\big|X=x\right)=\Bar{F}_Z(\tau_n+\theta_*^Tx).$} For $\forall i\in[N]$, let $(Y_i,X_i)$ denote the full-sample random variable. Thus for $i,j\in[N]$, $i\neq j$, given any pairs of event $(y,x)$, $(y^\prime,x^\prime)$, with $\mathbb{P}$ denoting the joint distribution of the full-sample random variables $(Y_i,X_i)$, and $U$ denote a uniform random variable on $[0,1]$ such that $U\indep\{(X_i,Y_i)\}_{i=1}^n$, by (\ref{eq:key:iid}) we have 
{$$\begin{array}{rl}&\quad\mathbb{P}\left(\left(\tilde{Y}_i,\tilde{X}_i\right)=(y,x),\left(\tilde{Y}_j,\tilde{X}_j\right)=(y^\prime,x^\prime)\right)\\
&=_{(a)}\mathbb{P}\left((Y_i,X_i)=(1,x),\left(\tilde{Y}_j,\tilde{X}_j\right)=(y^\prime,x^\prime)\right)\\
&\quad\quad+\mathbb{P}\left((Y_i,X_i)=(0,x),U\leq\alpha,\left(\tilde{Y}_j,\tilde{X}_j\right)=(y^\prime,x^\prime)\right)\\
&=_{(b)}\mathbb{P}\left((Y_i,X_i)= (1,x)\right)\mathbb{P}\left(\left(\tilde{Y}_j,\tilde{X}_j\right)=(y^\prime,x^\prime)\right)\\
&\quad\quad+\mathbb{P}\left((Y_i,X_i)= (0,x),U\leq\alpha\right)\mathbb{P}\left(\left(\tilde{Y}_j,\tilde{X}_j\right)=(y^\prime,x^\prime)\right)\\
&=_{(c)}\mathbb{P}\left(\left(\tilde{Y}_i,\tilde{X}_i\right)=(y,x)\right)\mathbb{P}\left(\left(\tilde{Y}_j,\tilde{X}_j\right)=(y^\prime,x^\prime)\right),
\end{array}$$}   
where (a) uses the fact that $\mathbf{1}((Y_i,X_i)= (y,x),Y_i=1)$ and $\mathbf{1}((Y_i,X_i)=(y,x),Y_i=0,U\leq\alpha)$ are disjoint events, and (b) uses the fact that $(\tilde{Y}_j,\tilde{X}_j)=\mathcal{L}(Y_j,X_j,U)$ as some joint law of $(Y_j,X_j,U)$, which is independent of $(Y_i,X_i)$. Moreover, (c) uses the fact that {\footnotesize$\mathbb{P}\left(\left(\tilde{Y}_i,\tilde{X}_i\right)=(y,x)\right)=\mathbb{P}\left((Y_i,X_i)= (1,x)\right)+\mathbb{P}\left((Y_i,X_i)=(0,x),U\leq\alpha\right)$} according to (\ref{downsample-GLM}). Thus $(\tilde{Y}_i,\tilde{X}_i)$ and $(\tilde{Y}_j,\tilde{X}_j)$ are independent with respect to $\mathbb{P}$. 

Note that $\tilde{P}=\mathbb{P}/(\mathbb{P}(Y=1)+\alpha\mathbb{P}(Y=0))$, where given $\tau_n,\theta_*$, $\mathbb{P}(Y=1)+\alpha\mathbb{P}(Y=0)$ is a constant, thus $(\tilde{Y}_i,\tilde{X}_i)$ and $(\tilde{Y}_j,\tilde{X}_j)$ are independent with respect to $\tilde{P}$. Obviously $(\tilde{Y}_j,\tilde{X}_j)$ are identically generated. So the result follows.
\end{proof}

\begin{proof}[Proof of Proposition \ref{lemma:downsample GLM}]
From (\ref{downsample-GLM}) we know that for $y\in\{0,1\}$, $x\in\mathcal{X}$, and $\mathbb{P}$ as the joint law of $(Y,X)$, we have 
{\footnotesize$$\begin{array}{rl}
\mathbb{P}\left(\left(\Tilde{Y},\Tilde{X}\right)\in\left(y,x\right)\right)&=\mathbb{P}\left((Y,X)\in(y,x),Y=1\right)+\mathbb{P}\left((Y,X)=(y,x),Y=0,U\leq\alpha\right)\\
&=\mathbf{1}(y=1)\mathbb{P}(Y=1,X=x)+\mathbf{1}(y=0)\alpha\mathbb{P}(Y=0,X=x)
\end{array}$$}
Note that 
{$$\begin{array}{rl}
\tilde{P}\left(\Tilde{Y}=1\big|\Tilde{X}=x\right)&=\frac{\tilde{P}(\tilde{Y}=1,\tilde{X}=x)}{\tilde{P}(\tilde{X}=x)}\\
&=\frac{\mathbb{P}(\tilde{Y}=1,\tilde{X}=x)/(\mathbb{P}(Y=1)+\alpha\mathbb{P}(Y=0))}{\mathbb{P}(\tilde{Y}=1,\tilde{X}=x)/(\mathbb{P}(Y=1)+\alpha\mathbb{P}(Y=0))+\mathbb{P}(\tilde{Y}=0,\tilde{X}=x)/(\mathbb{P}(Y=1)+\alpha\mathbb{P}(Y=0))}\\
&=\frac{\mathbb{P}(\tilde{Y}=1,\tilde{X}=x)}{\mathbb{P}(\tilde{Y}=1,\tilde{X}=x)+\mathbb{P}(\tilde{Y}=0,\tilde{X}=x)}=\mathbb{P}(\tilde{Y}=1|\tilde{X}=x)\\
&=\frac{\mathbb{P}(Y=1,X=x)}{\mathbb{P}(Y=1,X=x)+\alpha\mathbb{P}(Y=0,X=x)}\\
&=\frac{\mathbb{P}(Y=1|X=x)\mathbb{P}(X=x)}{\mathbb{P}(Y=1|X=x)\mathbb{P}(X=x)+\mathbb{P}(Y=0|X=x)\mathbb{P}(X=x)}\\
&=\frac{\mathbb{P}(Y=1\big|X=x)}{\mathbb{P}(Y=1\big|X=x)+\alpha\mathbb{P}(Y=0\big|X=x)}.
\end{array}$$}
When $\alpha=1$, $\mathbb{P}\left(\Tilde{Y}=1\big|\Tilde{X}=x\right)=\mathbb{P}(Y=1\big|X=x)$. Note that $\mathbb{P}\left(Y=1\big|X=x\right)=\Bar{F}_Z(\tau_n+\theta_*^Tx)=1-F_Z(\tau_n+\theta_*^T x),$ then $\mathbb{P}\left(\Tilde{Y}=1\big|\tilde{X}=x\right)=\frac{\Bar{F}_Z(\tau_n+\theta_*^T x)}{\Bar{F}_Z(\tau_n+\theta_*^T x)+(1-\Bar{F}_Z(\tau_n+\theta_*^T x))\alpha}=\frac{\Bar{F}_Z(\tau_n+\theta_*^T x)}{(1-\alpha)\Bar{F}_Z(\tau_n+\theta_*^T x)+\alpha}$. Let $\Bar{G}(z)=\frac{\Bar{F}_Z(z)}{\Bar{F}_Z(z)(1-\alpha)+\alpha},$ then $\mathbb{P}(\tilde{Y}=1|\tilde{X}=x)=\bar{G}(\tau_n+\theta_*^T x)$. Note $\Bar{G}(\infty)=0$, $\Bar{G}^\prime(z)<0$, $\Bar{G}(-\infty)=1$, thus there exists some $W$, such that $\Bar{G}(z)=\Bar{F}_W(z),$ where $\Bar{F}_W(z)=1-F_W(z)$, and $F_W(\cdot)$ is the c.d.f. of $W$, and $\Bar{G}(z)=\frac{1}{(1-\alpha)+\alpha/\Bar{F}_Z}\iff \Bar{F}_Z=\frac{\alpha\Bar{G}}{1-(1-\alpha)\Bar{G}}.$    
\end{proof}

\begin{lemma}[Distribution of $\tilde{X}_i$ with respect to $\tilde{P}$]\label{lemma:distribution-downsample-covariate} The density function of $\tilde{X}_i$ at $\tilde{X}_i=x$ is {$\tilde{\mu}(x)=\frac{[1-(1-\alpha)F(\tau_n+\theta_*^T x)]\mu(x)}{\mathbb{P}(Y=1)+\alpha\mathbb{P}(Y=0)}.$} 
\end{lemma}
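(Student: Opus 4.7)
The plan is to obtain $\tilde{\mu}(x)$ as the marginal of the joint law $\tilde{P}$ of $(\tilde{Y},\tilde{X})$ that was already characterized in Lemma~\ref{prop:downsample-prob}, and then plug in the GLM form of $\mathbb{P}(Y=1\mid X=x)$ to recognize the claimed factor $1-(1-\alpha)F(\tau_n+\theta_*^T x)$.

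First, I would invoke the formula
$$\tilde{P}(\tilde{Y}=y,\tilde{X}=x) = \frac{y\,\mathbb{P}(Y=1,X=x)+(1-y)\alpha\,\mathbb{P}(Y=0,X=x)}{\mathbb{P}(Y=1)+\alpha\,\mathbb{P}(Y=0)}$$
from Lemma~\ref{prop:downsample-prob}. Marginalizing over $y\in\{0,1\}$ gives
$$\tilde{\mu}(x) = \tilde{P}(\tilde{Y}=0,\tilde{X}=x)+\tilde{P}(\tilde{Y}=1,\tilde{X}=x) = \frac{\mathbb{P}(Y=1,X=x)+\alpha\,\mathbb{P}(Y=0,X=x)}{\mathbb{P}(Y=1)+\alpha\,\mathbb{P}(Y=0)}.$$

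Second, I would factor each joint density as a conditional probability times $\mu(x)$ and insert the GLM identities $\mathbb{P}(Y=1\mid X=x)=\bar F(\tau_n+\theta_*^T x)=1-F(\tau_n+\theta_*^T x)$ and $\mathbb{P}(Y=0\mid X=x)=F(\tau_n+\theta_*^T x)$. The numerator becomes
$$\bigl[(1-F(\tau_n+\theta_*^T x))+\alpha F(\tau_n+\theta_*^T x)\bigr]\mu(x) = \bigl[1-(1-\alpha)F(\tau_n+\theta_*^T x)\bigr]\mu(x),$$
which gives the claimed density. The denominator is already in the required form.

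I do not expect any real obstacle here: the argument is a direct marginalization combined with a one-line algebraic rearrangement. The only thing to be mindful of is to state the argument in a way that also covers continuous covariates (interpreting $\mathbb{P}(X=x)$ as the density $\mu(x)$ with respect to the dominating measure on $\mathcal{X}$), so that the formula from Lemma~\ref{prop:downsample-prob} is applied consistently at the density level. Once that convention is fixed the proof is essentially immediate.
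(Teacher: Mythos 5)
Your proposal is correct and follows essentially the same route as the paper: marginalize the joint law of $(\tilde{Y},\tilde{X})$ from Lemma~\ref{prop:downsample-prob} over $y\in\{0,1\}$, substitute the GLM identities $\mathbb{P}(Y=1\mid X=x)=\bar F(\tau_n+\theta_*^T x)$ and $\mathbb{P}(Y=0\mid X=x)=F(\tau_n+\theta_*^T x)$, and collect terms to get $1-(1-\alpha)F(\tau_n+\theta_*^T x)$ in the numerator. The paper phrases the two summands via $\bar G\tilde\mu$ and $G\tilde\mu$ before adding them, but the computation is the same.
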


\begin{proof}[Proof of Lemma \ref{lemma:distribution-downsample-covariate}]
From (\ref{downsample-GLM}) and previous proofs, with $\mathbb{P}$ denoting the joint law of $(Y,X)$, we have 
{$$\begin{array}{rl}
&\quad\frac{\bar{F}(\tau_n+\theta_*^T x)\mu(x)}{\mathbb{P}(Y=1)+\alpha\mathbb{P}(Y=0)}=\frac{\mathbb{P}\left(Y=1,X=x\right)}{\mathbb{P}(Y=1)+\alpha\mathbb{P}(Y=0)}\\
&=\frac{\mathbb{P}\left(\tilde{Y}=1,\tilde{X}=x\right)}{\mathbb{P}(Y=1)+\alpha\mathbb{P}(Y=0)}=\tilde{P}\left(\tilde{Y}=1,\tilde{X}=x\right)=\bar{G}(\tau_n+\theta_*^T x)\tilde{\mu}(x),
\end{array}$$}
{$$\begin{array}{rl}
&\quad\frac{\alpha F(\tau_n+\theta_*^T x)\mu(x)}{\mathbb{P}(Y=1)+\alpha\mathbb{P}(Y=0)}=\frac{\alpha\mathbb{P}\left(Y=0,X=x\right)}{\mathbb{P}(Y=1)+\alpha\mathbb{P}(Y=0)}\\
&=\frac{\mathbb{P}\left(\tilde{Y}=0,\tilde{X}=x\right)}{\mathbb{P}(Y=1)+\alpha\mathbb{P}(Y=0)}=\tilde{P}\left(\tilde{Y}=0,\tilde{X}=x\right)=G(\tau_n+\theta_*^T x)\tilde{\mu}(x).
\end{array}$$}

so the density function of $\tilde{X}_i$ wtih respect to $\tilde{P}$ at $\tilde{X}_i=x$ is equal to 
{\footnotesize$\tilde{\mu}(x)=\frac{[1-(1-\alpha)F(\tau_n+\theta_*^T x)]\mu(x)}{\mathbb{P}(Y=1)+\alpha\mathbb{P}(Y=0)}.$}
\end{proof}

\begin{proof}[Proof of Proposition \ref{lemma:identification-theta}]
Note that by definition 
the joint distribution (density) of $(\tilde{Y},\tilde{X})$ with respect to $\tilde{P}$ can be written as 
{\footnotesize$$\begin{array}{rl}
&\quad\tilde{P}\left(\Tilde{Y}=y,\Tilde{X}=x\right)\\
&=y\tilde{P}\left(\tilde{Y}=1|\tilde{X}=x\right)\tilde{\mu}(x)+(1-y)\tilde{P}\left(\tilde{Y}=0|\tilde{X}=x\right)\tilde{\mu}(x)\\
&=y\bar{G}(\tau_n+\theta_*^T x)\tilde{\mu}(x)dx+(1-y)G(\tau_n+\theta_*^T x)\tilde{\mu}(x)dx\\
&=\left(\bar{F}(\tau_n+\theta_*^T x)\frac{\mu(x)}{\mathbb{P}(Y=1)+\alpha\mathbb{P}(Y=0)}\right)^y\left(\alpha F(\tau_n+\theta_*^T x)\frac{\mu(x)}{\mathbb{P}(Y=1)+\alpha\mathbb{P}(Y=0)}\right)^{1-y},
\end{array}$$}
where the third equality in the above uses Lemma \ref{lemma:distribution-downsample-covariate}. 

Let $\tilde{E}[\cdot]$ denote the expectation taken with respect to $\tilde{P}$, then we have 
{\footnotesize$$\begin{array}{rl}
&\quad M(\theta_1;\tau_n):=\tilde{E}\left[\log \tilde{P}\left(\tilde{Y}_i,\tilde{X}_i\right)\right]\\
&=\tilde{E}\bigg[\tilde{Y}_i\left(\log(\bar{F}(\tau_n+\theta_1^T \tilde{X}_i))+\log\frac{\mu(\tilde{X}_i)}{\mathbb{P}(Y=1)+\alpha\mathbb{P}(Y=0)}\right)\\
&\quad\quad+(1-\tilde{Y}_i)\left(\log\left(\alpha F(\tau_n+\theta_1^T \tilde{X}_i)\right)+\log\frac{\mu(\tilde{X}_i)}{\mathbb{P}(Y=1)+\alpha\mathbb{P}(Y=0)}\right)\bigg]\\
&=\tilde{E}\left[\tilde{Y}_i\log\bar{F}(\tau_n+\theta_1^T \tilde{X}_i)+(1-\tilde{Y}_i)\log\alpha F(\tau_n+\theta_1^T \tilde{X}_i)+\log\mu(\tilde{X}_i)\right]\\
&\quad\quad-\log\left[\mathbb{P}(Y=1)+\alpha\mathbb{P}(Y=0)\right]\\
&=\tilde{E}\left[\tilde{Y}_i\log\bar{F}(\tau_n+\theta_1^T \tilde{X}_i)+(1-\tilde{Y}_i)\log\alpha F(\tau_n+\theta_1^T \tilde{X}_i)+\log\mu(\tilde{X}_i)\right]\\
&\quad\quad-\log\int_{\mathcal{X}}\left[1-(1-\alpha)F(\tau_n+\theta_1^T x)\right]\mu(x)dx.
\end{array}$$}
We then have {\footnotesize$\theta_*\in\argmax_{\theta_1\in\Theta}M(\theta_1;\tau_n)$}, and note from Lemma \ref{prop:conditional independence} $(\tilde{X}_i,\tilde{Y}_i)$ are i.i.d. with respect to $\tilde{P}$, so by strong law of large numbers,

{$$\begin{array}{rl}
&\frac{1}{N}\sum_{i=1}^N\tilde{Y}_i\log\bar{F}(\tau_n+\theta_1^T \tilde{X}_i)+(1-\tilde{Y}_i)\log\alpha F(\tau_n+\theta_1^T \tilde{X}_i)+\log\mu(\tilde{X}_i)\\
&\quad-\log\int_{\mathcal{X}}\left[1-(1-\alpha)F(\tau_n+\theta_1^T x)\right]\mu(x)dx\\
&\overset{p}{\rightarrow}\tilde{E}\left[\tilde{Y}_i\log\bar{F}(\tau_n+\theta_1^T \tilde{X}_i)+(1-\tilde{Y}_i)\log\alpha F(\tau_n+\theta_1^T \tilde{X}_i)+\log\mu(\tilde{X}_i)\right]\\
&\quad-\log\int_{\mathcal{X}}\left[1-(1-\alpha)F(\tau_n+\theta_1^T x)\right]\mu(x)dx.
\end{array}$$}

Further note that given the down-sample $(\tilde{X}_i,\tilde{Y}_i)$, $\frac{1}{N}\sum_{i=1}^N\log\mu(\tilde{X}_i)$ doesn't depend on $\tau_n$ or $\theta_1$, thus the maximum likelihood estimator can be defined as 
{\footnotesize$$\begin{array}{rl}
\hat{\theta}_*=\argmax_{\theta_1}\frac{1}{N}\sum_{i=1}^N\ell\left(\tilde{X}_i,\tilde{Y}_i,\theta_1;\tau_n\right),
\end{array}$$}
where
{\footnotesize$$\ell(\tilde{x},\tilde{y},\theta_1;\tau_n)=\tilde{y}\log\bar{F}(\tau_n+\theta_1^T \tilde{x})+(1-\tilde{y})\log\alpha F(\tau_n+\theta_1^T \tilde{x})-\log\int_{\mathcal{X}}\left[1-(1-\alpha)F(\tau_n+\theta_1^T x)\mu(x)\right]dx.$$}
Thus the result follows. 
\end{proof}

\section{Proofs for Asymptotic Analysis}\label{section:varying-rate-regime:appendix}
\paragraph{Discussion on the failure of classical MLE analysis} Note that 
{$$\begin{array}{rl}
&\quad\mathbb{E}\left[\tilde{\ell}(\tilde{X},\tilde{Y},\theta_*;\tau_n)\right]=\mathbb{E}_{(\tilde{Y},\tilde{X})}\left[\tilde{Y}\log\bar{F}(\tau_n+\theta_*^T\tilde{X})+(1-\tilde{Y})\log\alpha F(\tau_n+\theta_*^T\tilde{X})\right]\\
&\quad\quad\quad\quad\quad\quad\quad\quad\quad-\mathbb{E}_{(\tilde{Y},\tilde{X})}\{\log\frac{1}{N}\sum_{i=1}^N[1-(1-\alpha)F(\tau_n+\theta_1^T\tilde{X}_i)]\}\\
\\
&=\mathbb{E}_{\tilde{X}}\left[\bar{G}(\tau_n+\theta_*^T\tilde{X})\log\bar{F}(\tau_n+\theta_*^T\tilde{X})+G(\tau_n+\theta_*^T\tilde{X})\log\alpha F(\tau_n+\theta_*^T\tilde{X})\right]\\
&\quad\quad\quad\quad\quad\quad\quad\quad\quad-\mathbb{E}_{(\tilde{Y},\tilde{X})}\{\log\frac{1}{N}\sum_{i=1}^N[1-(1-\alpha)F(\tau_n+\theta_1^T\tilde{X}_i)]\}\\
\\
&=\frac{\mathbb{E}_{X}\left[(1-F(\tau_n+\theta_*^T X))\log(1-F(\tau_n+\theta_1^T X))+\alpha F(\tau_n+\theta_*^T X)\log\alpha F(\tau_n+\theta_1^T X)\right]}{\mathbb{P}(Y=1)+\alpha\mathbb{P}(Y=0)}\\
&\quad\quad\quad\quad\quad\quad\quad\quad\quad-\mathbb{E}_{(\tilde{Y},\tilde{X})}\{\log\frac{1}{N}\sum_{i=1}^N[1-(1-\alpha)F(\tau_n+\theta_1^T\tilde{X}_i)]\}
\end{array}$$}
Note that $F(\tau_n+\theta_1^T x)\rightarrow 1$ as $n\rightarrow\infty$ for any $x\in\mathcal{X}$ and $\theta_1\in\Theta$. Since $x\log x\rightarrow0$ as $x\rightarrow0$, so 
{\footnotesize$\mathbb{E}\left[\tilde{\ell}(\tilde{X},\tilde{Y},\theta_*;\tau_n)\right]\rightarrow\log(\alpha)-\log(\alpha)=0$} regardless of the value of $\theta_*$ if $\alpha>0$. Thus as $n\rightarrow\infty$, the criterion function doesn't rely on the value of $\theta_1$ in the varying-rate regime, thus the classical MLE theory cannot be applied here. 

\subsection{Proof of Theorem \ref{thm:rate-of-convergence-infinity}}
\begin{proof}[Proof of Theorem \ref{thm:rate-of-convergence-infinity}] Given $\tau_n$, note that 
{$$\begin{array}{rl}
\hat{\theta}_{*}&=\argmax_{\theta_1\in\Theta}\frac{1}{N}\sum_{i=1}^N\tilde{Y}_i\log\bar{F}(\tau_{n}+\theta_1^T \tilde{X}_i)+(1-\tilde{Y}_i)\log \alpha F(\tau_{n}+\theta_1^T \tilde{X}_i)\\
&\quad\quad\quad\quad\quad\quad\quad\quad\quad\quad-\log\left\{\frac{1}{N}\sum_{i=1}^N\left[1-(1-\alpha)F(\tau_n+\theta_1^T \tilde{X}_i)\right]\right\}
\end{array}$$}

Denote 
{\small$$\begin{array}{rl}
&\quad L_n(\theta_1)\overset{\Delta}{=}L(\theta_1;\tau_n)\\
&\overset{\Delta}{=}\sum_{i=1}^N\left[\tilde{Y}_i\log\bar{F}(\tau_{n}+\theta_1^T \tilde{X}_i)+(1-\tilde{Y}_i)\log \alpha F(\tau_{n}+\theta_1^T \tilde{X}_i)\right]\\
&\quad\quad-\log\{\frac{1}{N}\sum_{i=1}^N[1-(1-\alpha)F(\tau_n+\theta_1^T \tilde{X}_i)]\}\\
&=\sum_{i=1}^N\left[\tilde{Y}_i\log\frac{\bar{F}(\tau_{n}+\theta_1^T \tilde{X}_i)}{\{\frac{1}{N}\sum_{i=1}^N[1-(1-\alpha)F(\tau_n+\theta_1^T \tilde{X}_i)]\}}+(1-\tilde{Y}_i)\log\frac{\alpha F(\tau_{n}+\theta_1^T \tilde{X}_i)}{\{\frac{1}{N}\sum_{i=1}^N[1-(1-\alpha)F(\tau_n+\theta_1^T \tilde{X}_i)]\}}\right]\\
&=\sum_{i=1}^n\bigg[Y_i\log\frac{\bar{F}(\tau_{n}+\theta_1^T X_i)}{\{\frac{1}{N}\sum_{i=1}^N[1-(1-\alpha)F(\tau_n+\theta_1^T \tilde{X}_i)]\}}\\
&\quad\quad\quad\quad+(1-Y_i)\mathbf{1}(U_i\leq\alpha)\log\frac{\alpha F(\tau_{n}+\theta_1^T X_i)}{\{\frac{1}{N}\sum_{i=1}^N[1-(1-\alpha)F(\tau_n+\theta_1^T \tilde{X}_i)]\}}\bigg],
\end{array}$$}
where $\{U_i\}_{i=1}^n$ is i.i.d. uniform random variable on $[0,1]$, and $U_i\indep\{Y_i,X_i\}_{i=1}^n$. Let $a_n=\sqrt{n(1-F(\tau_n))}$. So $w_n=a_n(\hat{\theta}_{*}-\theta_*)$ is the maximizer of 
{$$H(w):=L_n(\theta_*+a_n^{-1}w)-L_n(\theta_*).$$} 
For $w=a_n(\hat{\theta}-\theta_*)$, define $g(t)=L_n(\theta_*+t(\hat{\theta}-\theta_*))$ for $t\in[0,1]$. By Taylor expansion, for some $\gamma\in(0,1)$, we have 
{$g(1)-g(0)=g^\prime(0)+\frac{1}{2}g^{\prime\prime}(\gamma)$}, i.e. 
{$$\begin{array}{rl}
H(w)&=\nabla_{\theta_1}L_n(\theta_*)^T(a_n^{-1}w)+\frac{1}{2}(\hat{\theta}-\theta_*)^T\nabla_{\theta_1}^2L_n(\theta_*+\gamma(\hat{\theta}-\theta_*))(\hat{\theta}-\theta_*)\\
&=(a_n^{-1}w^T)\nabla_{\theta_1}L_n(\theta_*)+\frac{1}{2}a_n^{-2}w^T\nabla_{\theta_1}^2L_n(\theta_*+\gamma(\hat{\theta}-\theta_*))w.
\end{array}$$}
Denote $J_N(\theta_1):=\frac{1}{N}\sum_{i=1}^N[1-(1-\alpha)F(\tau_n+\theta_1^T \tilde{X}_i)],$
and
{$$\begin{array}{rl}
\ell(x,y,u,\theta_1;\tau_n)&=y\log\frac{\bar{F}(\tau_{n}+\theta_1^T x)}{\{\frac{1}{N}\sum_{i=1}^N[1-(1-\alpha)F(\tau_n+\theta_1^T \tilde{X}_i)]\}}\\
&\quad+(1-y)\mathbf{1}(u\leq\alpha)\log\frac{\alpha F(\tau_{n}+\theta_1^T x)}{\{\frac{1}{N}\sum_{i=1}^N[1-(1-\alpha)F(\tau_n+\theta_1^T \tilde{X}_i)]\}}\\
&=y\log\frac{\bar{F}(\tau_{n}+\theta_1^T x)}{J_N(\theta_1)}+(1-y)\mathbf{1}(u\leq\alpha)\log\frac{\alpha F(\tau_{n}+\theta_1^T x)}{J_N(\theta_1)}.
\end{array}$$}

Then $\nabla_{\theta_1}J_N(\theta_1)=-(1-\alpha)\frac{1}{N}\sum_{i=1}^NF^\prime(\tau_n+\theta_1^T \tilde{X}_i)\tilde{X}_i^T $, and 

{$$\begin{array}{rl}
&\quad\nabla_{\theta_1}\ell(x,y,u,\theta_1;\tau_n)\\
&=y\left[-\frac{F^\prime(\tau_n+\theta_1^T x)x^T}{1-F(\tau_n+\theta_1^T x)}+\frac{(1-\alpha)\frac{1}{N}\sum_{i=1}^NF^\prime(\tau_n+\theta_1^T \tilde{X}_i)\tilde{X}_i^T }{J_N(\theta_1)}\right]\\
&\quad\quad+(1-y)\mathbf{1}(u\leq\alpha)\left[\frac{F^\prime(\tau_n+\theta_1^T x)x^T}{F(\tau_n+\theta_1^T x)}+\frac{(1-\alpha)\frac{1}{N}\sum_{i=1}^NF^\prime(\tau_n+\theta_1^T \tilde{X}_i)\tilde{X}_i^T }{J_N(\theta_1)}\right]\\
&=F^\prime(\tau_n+\theta_1^T x)x^T\left[-\frac{y}{1-F(\tau_n+\theta_1^T x)}+\mathbf{1}(u\leq\alpha)\frac{1-y}{F(\tau_n+\theta_1^T x)}\right]\\
&\quad\quad+\left(y+(1-y)\mathbf{1}(u\leq\alpha)\right)\frac{(1-\alpha)\frac{1}{N}\sum_{i=1}^NF^\prime(\tau_n+\theta_1^T \tilde{X}_i)\tilde{X}_i^T }{J_N(\theta_1)}
\end{array}$$}
and {$$\begin{array}{rl}
&\quad\nabla_{\theta_1}L_n(\theta_1)\\
&=\sum_{i=1}^nF^\prime(\tau_n+\theta_1^T X_i)X_i\left[-\frac{Y_i}{1-F(\tau_n+\theta_1^T X_i)}+\mathbf{1}(U_i\leq\alpha)\frac{1-Y_i}{F(\tau_n+\theta_1^T X_i)}\right]\\
&\quad\quad\quad\quad+\left(Y_i+(1-Y_i)\mathbf{1}(U_i\leq\alpha)\right)\frac{(1-\alpha)\frac{1}{N}\sum_{i=1}^NF^\prime(\tau_n+\theta_1^T \tilde{X}_i)\tilde{X}_i^T }{J_N(\theta_1)}.
\end{array}$$} 

Hence 
{\small$$\begin{array}{rl}
&\quad\nabla_{\theta_1}^2L_n(\theta_1)=\sum_{i=1}^n\nabla_{\theta_1}^2\ell(X_i,Y_i,U_i,\theta_1;\tau_n)\\
&=\sum_{i=1}^nF^{\prime\prime}(\tau_n+\theta_1^T X_i)X_iX_i^T\left[-\frac{Y_i}{1-F(\tau_n+\theta_1^T X_i)}+\mathbf{1}(U_i\leq\alpha)\frac{1-Y_i}{F(\tau_n+\theta_1^T X_i)}\right]\\
&\quad+\sum_{i=1}^nF^\prime(\tau_n+\theta_1^T X_i)^2X_iX_i^T\left[\frac{-Y_i}{(1-F(\tau_n+\theta_1^T X_i))^2}-\frac{\mathbf{1}(U_i\leq\alpha)(1-Y_i)}{F(\tau_n+\theta_1^T X_i)^2}\right]\\
&\quad+\sum_{i=1}^n\left(Y_i+(1-Y_i)\mathbf{1}(U_i\leq\alpha)\right)\left[\frac{(1-\alpha)\frac{1}{N}\sum_{i=1}^NF^{\prime\prime}(\tau_n+\theta_1^T\tilde{X}_i)\tilde{X}_i\tilde{X}_i^T}{J_N(\theta_1)}\right]\\
&\quad+\sum_{i=1}^n\left(Y_i+(1-Y_i)\mathbf{1}(U_i\leq\alpha)\right)\left[\frac{(1-\alpha)^2\{\frac{1}{N}\sum_{i=1}^NF^\prime(\tau_n+\theta_1^T \tilde{X}_i)\tilde{X}_i\}\{\frac{1}{N}\sum_{i=1}^NF^\prime(\tau_n+\theta_1^T \tilde{X}_i)\tilde{X}_i^T\}}{J_N(\theta_1)^2}\right],
\end{array}$$}
and
{$\begin{array}{rl}
H(w)=(a_n^{-1}w^T)\nabla_{\theta_1}L_n(\theta_*)+\frac{1}{2}a_n^{-2}\sum_{i=1}^n\Phi_i(\theta_*+\gamma a_n^{-1}w),
\end{array}$}
where 
{$$\begin{array}{rl}
&\quad\Phi_i(\theta_1)\\
&=F^{\prime\prime}(\tau_n+\theta_1^T X_i)X_iX_i^T\left[-\frac{Y_i}{1-F(\tau_n+\theta_1^T X_i)}+\mathbf{1}(U_i\leq\alpha)\frac{1-Y_i}{F(\tau_n+\theta_1^T X_i)}\right]\\
&\quad+F^\prime(\tau_n+\theta_1^T X_i)^2X_iX_i^T\left[-\frac{Y_i}{(1-F(\tau_n+\theta_1^T X_i))^2}-\frac{\mathbf{1}(U_i\leq\alpha)(1-Y_i)}{F(\tau_n+\theta_1^T X_i)^2}\right]\\
&\quad+\left(Y_i+(1-Y_i)\mathbf{1}(U_i\leq\alpha)\right)\left[\frac{(1-\alpha)\frac{1}{N}\sum_{i=1}^NF^{\prime\prime}(\tau_n+\theta_1^T\tilde{X}_i)\tilde{X}_i\tilde{X}_i^T}{J_N(\theta_1)}\right]\\
&\quad+\left(Y_i+(1-Y_i)\mathbf{1}(U_i\leq\alpha)\right)\left[\frac{(1-\alpha)^2\{\frac{1}{N}\sum_{i=1}^NF^\prime(\tau_n+\theta_1^T \tilde{X}_i)\tilde{X}_i\}\{\frac{1}{N}\sum_{i=1}^NF^\prime(\tau_n+\theta_1^T \tilde{X}_i)\tilde{X}_i^T\}}{J_N(\theta_1)^2}\right]
\end{array}$$}
In the following, we want to show that for some matrices $\mathbf{V}$ and $\tilde{\mathbf{V}}_{\Phi}$, we have 
{$$a_n^{-1}\nabla_{\theta_1}L_n(\theta_*)\overset{d}{\rightarrow}\mathcal{N}\left(\mathbf{0},\mathbf{V}\right),$$}
and for any $u$ and $\gamma\in[0,1]$,
{$\begin{array}{rl}
a_n^{-2}\sum_{i=1}^n\Phi_i(\theta_*+\gamma a_n^{-1}w)\overset{p}{\rightarrow}\tilde{\mathbf{V}}_{\Phi}.
\end{array}$}
Note that 
{$$\begin{array}{rl}
&\quad\lim_{n\rightarrow\infty}\mathbb{E}\Big[F^\prime(\tau_n+\theta_*^T X_i)X_i\left[-\frac{Y_i}{1-F(\tau_n+\theta_*^T X_i)}+\mathbf{1}(U_i\leq\alpha)\frac{1-Y_i}{F(\tau_n+\theta_*^T X_i)}\right]\\
&\quad\quad+\left(Y_i+(1-Y_i)\mathbf{1}(U_i\leq\alpha)\right)\frac{(1-\alpha)\frac{1}{N}\sum_{i=1}^NF^\prime(\tau_n+\theta_*^T\tilde{X}_i)\tilde{X}_i^T }{J_N(\theta_*)}\Big]\\
&=\lim_{n\rightarrow\infty}\mathbb{E}\left[F^\prime(\tau_n+\theta_*^T X_i)X_i\left(-\frac{Y_i}{1-F(\tau_n+\theta_*^T X_i)}+\mathbf{1}(U_i\leq\alpha)\frac{1-Y_i}{F(\tau_n+\theta_*^T X_i)}\right)\right]\\
&\quad+(1-\alpha)[\mathbb{P}(Y=1)+\alpha\mathbb{P}(Y=0)]\frac{\tilde{E}[F^\prime(\tau_n+\theta_*^T\tilde{X})\tilde{X}]}{\tilde{E}[1-(1-\alpha)F(\tau_n+\theta_*^T\tilde{X})]}\\
&=_{(1)}\lim_{n\rightarrow\infty}\left(\alpha-1\right)\mathbb{E}\left[F^\prime(\tau_n+\theta_*^T X_i)X_i\right]\\
&\quad\quad+\frac{(1-\alpha)[\mathbb{P}(Y=1)+\alpha\mathbb{P}(Y=0)]\mathbb{E}[F^\prime(\tau_n+\theta_*^T X_i)(1-(1-\alpha)F(\tau_n+\theta_*^T X_i))X_i]}{\mathbb{E}[(1-(1-\alpha)F(\tau_n+\theta_*^TX))^2]}\\
&=\mathbf{0},
\end{array}$$}
where we use Lemma \ref{lemma:distribution-downsample-covariate} in (1), so {$\mathbb{E}\left[a_n^{-1}\nabla_{\theta_1}L_n(\theta_*)\right]\rightarrow\mathbf{0}.$}
Furthermore, by letting $\theta_1=\theta_*$ in the below, we have 
{\small$$\begin{array}{rl}
&\quad\lim_{n\rightarrow\infty}\mathrm{Cov}\left[a_n^{-1}\nabla_{\theta_1}L_n(\theta_*)\right]\\
&=\lim_{n\rightarrow\infty}a_n^{-2}\sum_{i=1}^n\mathrm{Cov}\Big[F^\prime(\tau_n+\theta_1^T X_i)X_i\left[-\frac{Y_i}{1-F(\tau_n+\theta_1^T X_i)}+\mathbf{1}(U_i\leq\alpha)\frac{1-Y_i}{F(\tau_n+\theta_1^T X_i)}\right]\\
&\quad\quad+\left(Y_i+(1-Y_i)\mathbf{1}(U_i\leq\alpha)\right)\frac{(1-\alpha)\frac{1}{N}\sum_{i=1}^NF^\prime(\tau_n+\theta_*^T\tilde{X}_i)\tilde{X}_i^T }{J_N(\theta_*)}\Big]\\
&=\lim_{n\rightarrow\infty}\sum_{i=1}^n\mathrm{Cov}\Big[\frac{F^\prime(\tau_n+\theta_1^T X_i)}{n(1-F(\tau_n))}X_i\left[-\frac{Y_i}{1-F(\tau_n+\theta_1^T X_i)}+\mathbf{1}(U_i\leq\alpha)\frac{1-Y_i}{F(\tau_n+\theta_1^T X_i)}\right]\\
&\quad\quad+\left(Y_i+(1-Y_i)\mathbf{1}(U_i\leq\alpha)\right)\frac{(1-\alpha)\frac{1}{N}\sum_{i=1}^NF^\prime(\tau_n+\theta_*^T\tilde{X}_i)\tilde{X}_i^T }{J_N(\theta_*)}\Big]\\
&=\lim_{n\rightarrow\infty}\mathrm{Cov}\Big[\frac{F^\prime(\tau_n+\theta_1^T X_i)}{\sqrt{1-F(\tau_n)}}X_i\left[-\frac{Y_i}{1-F(\tau_n+\theta_1^T X_i)}+\mathbf{1}(U_i\leq\alpha)\frac{1-Y_i}{F(\tau_n+\theta_1^T X_i)}\right]\\
&\quad\quad+\left(Y_i+(1-Y_i)\mathbf{1}(U_i\leq\alpha)\right)\frac{(1-\alpha)\frac{1}{N}\sum_{i=1}^NF^\prime(\tau_n+\theta_*^T\tilde{X}_i)\tilde{X}_i^T }{J_N(\theta_*)\sqrt{1-F(\tau_n)}}\Big]\\
&=\lim_{n\rightarrow\infty}\mathbb{E}\left[\frac{F^\prime(\tau_n+\theta_1^T X_i)^2}{1-F(\tau_n)}\left[-\frac{Y_i}{1-F(\tau_n+\theta_1^T X_i)}+\mathbf{1}(U_i\leq\alpha)\frac{1-Y_i}{F(\tau_n+\theta_1^T X_i)}\right]^2X_iX_i^T\right]\\
&\quad+\mathbb{E}\left[\left(Y_i+(1-Y_i)\mathbf{1}(U_i\leq\alpha)\right)^2\right]\frac{(1-\alpha)^2\tilde{E}[F^\prime(\tau_n+\theta_*^T\tilde{X}_i)\tilde{X}_i]\tilde{E}[F^\prime(\tau_n+\theta_*^T\tilde{X}_i)\tilde{X}_i^T]}{\tilde{E}[1-(1-\alpha)F(\tau_n+\theta_*^T\tilde{X})]^2(1-F(\tau_n))}\\
&\quad+2\mathbb{E}\Big[\frac{F^\prime(\tau_n+\theta_1^T X_i)X_i}{\sqrt{1-F(\tau_n)}}\left[-\frac{Y_i}{1-F(\tau_n+\theta_1^T X_i)}+\mathbf{1}(U_i\leq\alpha)\frac{1-Y_i}{F(\tau_n+\theta_1^T X_i)}\right]\\
&\quad\quad\quad\quad\times\frac{\left(Y_i+(1-Y_i)\mathbf{1}(U_i\leq\alpha)\right)}{\tilde{E}[1-(1-\alpha)F(\tau_n+\theta_*^T\tilde{X})]}(1-\alpha)\tilde{E}\left[\frac{F^\prime(\tau_n+\theta_1^T\tilde{X})\tilde{X}^T}{\sqrt{1-F(\tau_n)}}\right]\Big]\\
&=_{(g)}\lim_{n\rightarrow\infty}\mathbb{E}\left[\frac{F^\prime(\tau_n+\theta_*^T X_i)^2}{1-F(\tau_n)}\left[\frac{1}{1-F(\tau_n+\theta_*^T X_i)}+\frac{\alpha}{F(\tau_n+\theta_*^T X_i)}\right]X_iX_i^T\right]\\
&\quad\quad+\frac{(1-\alpha)^2\tilde{E}\left[\frac{F^\prime(\tau_n+\theta_*^T\tilde{X}_i)}{\sqrt{1-F(\tau_n)}}\tilde{X}_i\right]\tilde{E}\left[\frac{F^\prime(\tau_n+\theta_*^T \tilde{X}_i)}{\sqrt{1-F(\tau_n)}}\tilde{X}_i^T\right](\mathbb{P}(Y_i=1)+\alpha\mathbb{P}(Y_i=0))}{\tilde{E}[1-(1-\alpha)F(\tau_n+\theta_*^T\tilde{X})]^2}\\
&\quad\quad+2\mathbb{E}\left[\frac{F^\prime(\tau_n+\theta_*^T X_i)}{\sqrt{1-F(\tau_n)}}\frac{\mathbf{1}(U_i\leq\alpha)-1}{\tilde{E}[1-(1-\alpha)F(\tau_n+\theta_*^T\tilde{X})]}(1-\alpha)\tilde{E}\left[\frac{F^\prime(\tau_n+\theta_*^T\tilde{X}_i)\tilde{X}_i}{\sqrt{1-F(\tau_n)}}\right]X_i^T\right]\\
&=\lim_{n\rightarrow\infty}\mathbb{E}\left[\frac{F^\prime(\tau_n+\theta_*^T X_i)^2}{1-F(\tau_n)}\left[\frac{1}{1-F(\tau_n+\theta_*^T X_i)}+\frac{\alpha}{F(\tau_n+\theta_*^T X_i)}\right]X_iX_i^T\right]\\
&\quad\quad+\frac{(1-\alpha)^2\tilde{E}\left[\frac{F^\prime(\tau_n+\theta_*^T\tilde{X}_i)}{\sqrt{1-F(\tau_n)}}\tilde{X}_i\right]\tilde{E}\left[\frac{F^\prime(\tau_n+\theta_*^T \tilde{X}_i)}{\sqrt{1-F(\tau_n)}}\tilde{X}_i^T\right](\mathbb{P}(Y_i=1)+\alpha\mathbb{P}(Y_i=0))}{\tilde{E}[1-(1-\alpha)F(\tau_n+\theta_*^T\tilde{X})]^2}\\
&\quad\quad-2\frac{(1-\alpha)^2}{\tilde{E}[1-(1-\alpha)F(\tau_n+\theta_*^T\tilde{X})]}\mathbb{E}\left[\frac{F^\prime(\tau_n+\theta_*^T X_i)}{\sqrt{1-F(\tau_n)}}X_i\right]\tilde{E}\left[\frac{F^\prime(\tau_n+\theta_*^T\tilde{X}_i)\tilde{X}_i^T}{\sqrt{1-F(\tau_n)}}\right]\\
&=\mathbb{E}\left[\frac{F^\prime(\tau_n+\theta_*^T X_i)^2}{1-F(\tau_n)}\left[\frac{1}{1-F(\tau_n+\theta_*^T X_i)}+\frac{\alpha}{F(\tau_n+\theta_*^T X_i)}\right]X_iX_i^T\right]\\
&\quad\quad-\frac{(1-\alpha)^2}{\alpha}\mathbb{E}\left[\frac{F^\prime(\tau_n+\theta_*^T X_i)}{1-F(\tau_n)}X_i\right]\mathbb{E}\left[F^\prime(\tau_n+\theta_*^T X_i)X_i^T\right],
\end{array}$$}
where equation (g) uses the fact that 
{\footnotesize$\mathbb{E}[(Y_i+(1-Y_i)\mathbf{1}(U_i\leq\alpha))^2]=\mathbb{P}(Y=1)+\alpha\mathbb{P}(Y=0).$} Thus by dominated convergence theorem and Assumption \ref{ass:F-regularity}, 
{\small$$\begin{array}{rl}
&\quad\lim_{n\rightarrow\infty}\mathrm{Cov}\left[a_n^{-1}\nabla_{\theta_1}L_n(\theta_*)\right]\\
&=\mathbb{E}\left[g_1(\theta_*^T X)^2h(\theta_*^T X)XX^T\right]-\mathbb{E}\left[g_1(\theta_*^T X)h(\theta_*^T X)X\right]\mathbb{E}\left[\lim_{n\rightarrow\infty}\frac{(1-\alpha)^2F^\prime(\tau_n+\theta_*^T X)X^T}{\alpha}\right].
\end{array}$$}
Furthermore, we check the Lindeberg-Feller CLT condition for the asymptotic normality result. Recall that
{$$\begin{array}{rl}
&\quad\nabla_{\theta_1}\ell(x,y,u,\theta_*;\tau_n)\\
&=F^\prime(\tau_n+\theta_1^T x)x\left[-\frac{y}{1-F(\tau_n+\theta_1^T x)}+\mathbf{1}(u\leq\alpha)\frac{1-y}{F(\tau_n+\theta_1^T x)}\right]\\
&\quad+\left(y+(1-y)\mathbf{1}(u\leq\alpha)\right)\frac{(1-\alpha)\frac{1}{N}\sum_{i=1}^NF^\prime(\tau_n+\theta_1^T \tilde{X}_i)\tilde{X}_i^T }{J_N(\theta_1)}.
\end{array}$$}

For $\epsilon>0$, let $\mathcal{A}_i$ denote the event that $\norm{\nabla_{\theta_1}\ell(X_i,Y_i,U_i,\theta_*;\tau_n)}>a_n\epsilon$, we then have 
{\footnotesize$$\begin{array}{rl}
&\quad\sum_{i=1}^n\mathbb{E}\left[\norm{\nabla_{\theta_1}\ell(X_i,Y_i,U_i,\theta_*;\tau_n)}^2\mathbf{1}\left(\norm{\nabla_{\theta_1}\ell(X_i,Y_i,U_i,\theta_*;\tau_n)}>a_n\epsilon\right)\right]\\
&=n\mathbb{E}\left[\norm{\nabla_{\theta_1}\ell(X_i,Y_i,U_i,\theta_*;\tau_n)}^2\mathbf{1}\left(\norm{\nabla_{\theta_1}\ell(X_i,Y_i,U_i,\theta_*;\tau_n)}>a_n\epsilon\right)\right]\\
&\leq_{(1)} 2Cn\mathbb{E}\left[\left(\frac{1-Y_i}{F(\tau_n+\theta_*^T X_i)}\mathbf{1}(U_i\leq\alpha)-\frac{Y_i}{1-F(\tau_n+\theta_*^T X_i)}\right)^2F^\prime(\tau_n+\theta_*^T X_i)^2\norm{X_i}^2\mathbf{1}_{\mathcal{A}_i}\right]\\
&\quad+2Cn\mathbb{E}\left[\frac{(Y_i+(1-Y_i)\mathbf{1}(U_i\leq\alpha))^2(1-\alpha)^2}{J_N(\theta_*)^2}\norm{\tilde{E}[F^\prime(\tau_n+\theta_*^T \tilde{X})\tilde{X}]}^2\mathbf{1}_{\mathcal{A}_i}\right]\\
&=2n\mathbb{E}\left[\mathbb{E}\left[\left(\frac{1-Y_i}{F(\tau_n+\theta_*^T X_i)}\mathbf{1}(U_i\leq\alpha)-\frac{Y_i}{1-F(\tau_n+\theta_*^T X_i)}\right)^2F^\prime(\tau_n+\theta_*^T X_i)^2\norm{X_i}^2\mathbf{1}_{\mathcal{A}_i}\big|X_i\right]\right]\\
&\quad+2Cn\mathbb{E}\left[\mathbb{E}\left[\frac{(Y_i+(1-Y_i)\mathbf{1}(U_i\leq\alpha))^2(1-\alpha)^2}{J_N(\theta_*)^2}\norm{\tilde{E}[F^\prime(\tau_n+\theta_*^T \tilde{X})\tilde{X}]}^2\mathbf{1}_{\mathcal{A}_i}\big|X_i\right]\right]\\
&=2n\mathbb{E}\left[\left(\frac{\mathbf{1}\left(U_i\leq\alpha\right)}{F(\tau_n+\theta_*^T X_i)}+\frac{1}{1-F(\tau_n+\theta_*^T X_i)}\right)F^\prime(\tau_n+\theta_*^T X_i)^2\norm{X_i}^2\mathbf{1}_{\mathcal{A}_i}\right]\\
&\quad+2Cn\frac{(1-\alpha)^2\norm{\tilde{E}[F^\prime(\tau_n+\theta_*^T \tilde{X})\tilde{X}]}^2}{J_N(\theta_*)^2}\mathbb{E}\left[(1-F(\tau_n+\theta_*^T X_i)+\mathbf{1}(U_i\leq\alpha)F(\tau_n+\theta_*^T X_i))\mathbf{1}_{\mathcal{A}_i}\right]\\
&\leq 2n(1-F(\tau_n))\\
&\quad\times\mathbb{E}\bigg[\left(\frac{\mathbf{1}\left(U_i\leq\alpha\right)}{F(\tau_n+\theta_*^T X_i)}+\frac{1}{1-F(\tau_n+\theta_*^T X_i)}\right)\frac{F^\prime(\tau_n+\theta_*^T X_i)^2}{1-F(\tau_n)}\norm{X_i}^2\\
&\quad\quad\quad\times\mathbf{1}\left\{\frac{(1-Y_i)\mathbf{1}(U_i\leq\alpha)F^\prime(\tau_n+\theta_*^T X_i)\norm{X_i}}{F(\tau_n+\theta_*^T X_i)(1-F(\tau_n))}>\frac{n\epsilon}{2}\right\}\bigg]\\
&\quad+2Cn(1-F(\tau_n))\frac{(1-\alpha)^2\norm{\tilde{E}\left[\frac{F^\prime(\tau_n+\theta_*^T\tilde{X})}{1-F(\tau_n)}\tilde{X}\right]}^2}{J_N(\theta_*)^2}\\
&\quad\quad\quad\times\mathbb{E}\left[(1-F(\tau_n+\theta_*^T X_i)+\mathbf{1}(U_i\leq\alpha)F(\tau_n+\theta_*^T X_i))\mathbf{1}_{\mathcal{A}_i}\right]\\
&\overset{(*)}{=}\mathrm{o}(n(1-F(\tau_n)))=\mathrm{o}(a_n^2),
\end{array}$$}
where (1) is because {\footnotesize$\mathbb{E}[\norm{A+B}^2]\leq2\mathbb{E}[\norm{A}^2+\norm{B}^2]$}, and (*) uses dominated convergence theorem and Assumption \ref{ass:F-regularity}, and $C$ is some absolute constant. 

For the rest of the proof, we denote 
{$$g_1(\cdot):=\frac{F^\prime}{1-F(\tau_n)}\frac{1-F(\tau_n)}{1-F}=\frac{-h^{(1)}(\cdot)}{h},$$}
{$$g_2(\cdot):=\frac{F^{\prime\prime}}{1-F(\tau_n)}\frac{1-F(\tau_n)}{1-F}=\frac{-h^{(2)}}{h},$$}
{$$g_3(\cdot):=\frac{F^{\prime\prime\prime}}{1-F(\tau_n)}\frac{1-F(\tau_n)}{1-F}=\frac{-h^{(3)}}{h}.$$}

Thus applying the Lindeberg-Feller central limit theorem as Proposition 2.27 from \cite{van2000asymptotic} (i.e. Lemma \ref{lemma:CLT}), we have 
{$$a_n^{-1}\nabla_{\theta_1}L_n(\theta_*)\overset{d}{\rightarrow}\mathcal{N}\left(\mathbf{0},\mathbf{V}\right),$$}
where {\footnotesize$$\begin{array}{rl}
\mathbf{V}&=\mathbb{E}\left[g_1(\theta_*^T X)^2h(\theta_*^T X)XX^T\right]-\mathbb{E}\left[g_1(\theta_*^T X)h(\theta_*^T X)X\right]\mathbb{E}\left[\lim_{n\rightarrow\infty}\frac{(1-\alpha)^2F^\prime(\tau_n+\theta_*^T X)X^T}{\alpha}\right]\\
&=\mathbb{E}\left[g_1(\theta_*^T X)^2h(\theta_*^T X)XX^T\right]\\
&\quad-\left(\lim_{n\rightarrow\infty}\frac{(1-\alpha)^2(1-F(\tau_n))}{\alpha}\right)\mathbb{E}[h(\theta_*^T X)g_1(\theta_*^T X)X]\mathbb{E}\left[h(\theta_*^T X)g_1(\theta_*^T X)X^T\right]
\end{array}$$} 

Now we want to show that for any $u$ and any $\gamma\in[0,1]$, for some matrix $\tilde{\mathbf{V}}_{\Phi}$ we have 
{$$\begin{array}{rl}
&a_n^{-2}\sum_{i=1}^n\Phi_i(\theta_*+\gamma a_n^{-1}w)\overset{p}{\rightarrow}\tilde{\mathbf{V}}_{\Phi}.
\end{array}$$}

Note that 
{$$\begin{array}{rl}
&\quad \lim_{n\rightarrow\infty}a_n^{-2}\sum_{i=1}^n\Phi_i(\theta_*)\\
&=\lim_{n\rightarrow\infty}\frac{1}{n}\sum_{i=1}^n\frac{F^{\prime\prime}(\tau_n+\theta_*^T X_i)}{1-F(\tau_n)}X_iX_i^T\left[-\frac{Y_i}{1-F(\tau_n+\theta_*^T X_i)}+\mathbf{1}(U_i\leq\alpha)\frac{1-Y_i}{F(\tau_n+\theta_*^T X_i)}\right]\\
&\quad+\frac{1}{n}\sum_{i=1}^n\frac{F^\prime(\tau_n+\theta_*^T X_i)^2}{1-F(\tau_n)}X_iX_i^T\left[-\frac{Y_i}{(1-F(\tau_n+\theta_*^T X_i))^2}-\frac{\mathbf{1}(U_i\leq\alpha)(1-Y_i)}{F(\tau_n+\theta_*^T X_i)^2}\right]\\
&\quad+\frac{1}{n}\sum_{i=1}^n\frac{\left(Y_i+(1-Y_i)\mathbf{1}(U_i\leq\alpha)\right)}{1-F(\tau_n)}\left[\frac{(1-\alpha)\tilde{E}[F^{\prime\prime}(\tau_n+\theta_*^T\tilde{X}_i)\tilde{X}_i\tilde{X}_i^T]}{J_N(\theta_*)}\right]\\
&\quad+\frac{1}{n}\sum_{i=1}^n\frac{(1-\alpha)^2\tilde{E}[F^\prime(\tau_n+\theta_*^T \tilde{X})\tilde{X}]\tilde{E}[F^\prime(\tau_n+\theta_*^T\tilde{X})\tilde{X}^T]}{J_N(\theta_*)^2},
\end{array}$$}
and note that by dominated convergence theorem and Assumption \ref{ass:F-regularity}, 
{$$\begin{array}{rl}
&\quad\lim_{n\rightarrow\infty}\mathbb{E}\left[\frac{F^{\prime\prime}(\tau_n+\theta_*^T X_i)}{1-F(\tau_n)}X_iX_i^T\left(-\frac{Y_i}{1-F(\tau_n+\theta_*^T X_i)}+\mathbf{1}(U_i\leq\alpha)\frac{1-Y_i}{F(\tau_n+\theta_*^T X_i)}\right)\right]\\
&=(\alpha-1)\mathbb{E}\left[h(\theta_*^T X)g_2(\theta_*^T X)XX^T\right],
\end{array}$$}
{$$\begin{array}{rl}
&\quad\lim_{n\rightarrow\infty}\mathbb{E}\left[\frac{F^\prime(\tau_n+\theta_*^T X_i)^2}{1-F(\tau_n)}X_iX_i^T\left[-\frac{Y_i}{(1-F(\tau_n+\theta_*^T X_i))^2}-\frac{\mathbf{1}(U_i\leq\alpha)(1-Y_i)}{F(\tau_n+\theta_*^T X_i)^2}\right]\right]\\
&=-\mathbb{E}\left[g_1(\theta_*^T X)^2h(\theta_*^T X)XX^T\right],
\end{array}$$}
{$$\begin{array}{rl}
&\quad\lim_{n\rightarrow\infty}\mathbb{E}\bigg[\frac{\left(Y_i+(1-Y_i)\mathbf{1}(U_i\leq\alpha)\right)}{1-F(\tau_n)}\bigg[\frac{(1-\alpha)\tilde{E}[F^{\prime\prime}(\tau_n+\theta_*^T\tilde{X}_i)\tilde{X}_i\tilde{X}_i^T]}{\tilde{E}[1-(1-\alpha)F(\tau_n+\theta_*^T\tilde{X})]}\\
&\quad\quad\quad\quad\quad\quad+\frac{(1-\alpha)^2\tilde{E}[F^\prime(\tau_n+\theta_*^T \tilde{X})\tilde{X}]\tilde{E}[F^\prime(\tau_n+\theta_*^T\tilde{X})\tilde{X}^T]}{\tilde{E}[1-(1-\alpha)F(\tau_n+\theta_*^T\tilde{X})]^2}\bigg]\bigg]\\
&=(1-\alpha)\mathbb{E}[g_2(\theta_*^T X)h(\theta_*^T X)XX^T]\\
&\quad+\mathbb{E}[h(\theta_*^T X)g_1(\theta_*^T X)X]\mathbb{E}\left[\lim_{n\rightarrow\infty}\frac{(1-\alpha)^2F^\prime(\tau_n+\theta_*^T X)}{\alpha}X^T\right]\\
&=(1-\alpha)\mathbb{E}[g_2(\theta_*^T X)h(\theta_*^T X)XX^T]\\
&\quad+\left(\lim_{n\rightarrow\infty}\frac{(1-\alpha)^2(1-F(\tau_n))}{\alpha}\right)\mathbb{E}[h(\theta_*^T X)g_1(\theta_*^T X)X]\mathbb{E}\left[h(\theta_*^T X)g_1(\theta_*^T X)X^T\right]
\end{array}$$}

Thus {\footnotesize$$\begin{array}{rl}
&\quad a_n^{-2}\sum_{i=1}^n\Phi_i(\theta_*)\\
&\overset{p}{\rightarrow}-\mathbb{E}\left[g_1(\theta_*^T X)^2h(\theta_*^T X)XX^T\right]\\
&\quad \ +\left(\lim_{n\rightarrow\infty}\frac{(1-\alpha)^2(1-F(\tau_n))}{\alpha}\right)\mathbb{E}[h(\theta_*^T X)g_1(\theta_*^T X)X]\mathbb{E}\left[h(\theta_*^T X)g_1(\theta_*^T X)X^T\right].
\end{array}$$}

For the last step of the proof, we want to show that indeed 
{$$\begin{array}{rl}
\tilde{\mathbf{V}}_{\Phi}&=-\mathbb{E}\left[g_1(\theta_*^T X)^2h(\theta_*^T X)XX^T\right]\\
&\quad+\left(\lim_{n\rightarrow\infty}\frac{(1-\alpha)^2(1-F(\tau_n))}{\alpha}\right)\mathbb{E}[h(\theta_*^T X)g_1(\theta_*^T X)X]\mathbb{E}\left[h(\theta_*^T X)g_1(\theta_*^T X)X^T\right].
\end{array}$$}

Note that 
{$$\begin{array}{rl}
&\quad\nabla_{\theta_1}\Phi_i(\theta_1)\\
&=F^{(3)}(\tau_n+\theta_1^T X_i)X_i^T X_iX_i^T\left[-\frac{Y_i}{1-F(\tau_n+\theta_1^T X_i)}+\mathbf{1}(U_i\leq\alpha)\frac{1-Y_i}{F(\tau_n+\theta_1^T X_i)}\right]\\
&\quad+F^{\prime\prime}(\tau_n+\theta_1^T X_i)F^\prime(\tau_n+\theta_1^T X_i)X_i^T X_iX_i^T\left[-\frac{Y_i}{(1-F(\tau_n+\theta_1^T X_i))^2}-\frac{\mathbf{1}(U_i\leq\alpha)(1-Y_i)}{F(\tau_n+\theta_1^T X_i)^2}\right]\\
&\quad+2F^\prime(\tau_n+\theta_1^T X_i)F^{\prime\prime}(\tau_n+\theta_1^T X_i)X_i^T X_iX_i^T\left[-\frac{Y_i}{(1-F(\tau_n+\theta_1^T X_i))^2}-\frac{\mathbf{1}(U_i\leq\alpha)(1-Y_i)}{F(\tau_n+\theta_1^T X_i)^2}\right]\\
&\quad+F^\prime(\tau_n+\theta_1^T X_i)^2X_i^T X_iX_i^T\Big[-2Y_i(1-F(\tau_n+\theta_1^T X_i))^ {-3}F^\prime(\tau_n+\theta_1^T X_i)\\
&\quad\quad\quad\quad\quad\quad\quad\quad\quad\quad\quad\quad\quad+2\cdot\mathbf{1}(U_i\leq\alpha)(1-Y_i)F(\tau_n+\theta_1^T X_i)^{-3}F^\prime(\tau_n+\theta_1^T X_i)\Big]\\
&\quad+\left(Y_i+(1-Y_i)\mathbf{1}(U_i\leq\alpha)\right)\left[\frac{(1-\alpha)\tilde{E}_N\left[F^{(3)}(\tau_n+\theta_1^T\tilde{X}_i)\tilde{X}_i^T\tilde{X}_i\tilde{X}_i^T\right]}{\tilde{E}_N[1-(1-\alpha)F(\tau_n+\theta_1^T\tilde{X}_i)]}\right]\\
&\quad+(Y_i+(1-Y_i)\mathbf{1}(U_i\leq\alpha))\left[2(1-\alpha)^2\frac{\tilde{E}[F^{(2)}(\tau_n+\theta_1^T\tilde{X})\tilde{X}^T\tilde{X}]\tilde{E}[F^\prime(\tau_n+\theta_1^T\tilde{X})\tilde{X}^T]}{\tilde{E}[1-(1-\alpha)F(\tau_n+\theta_1^T\tilde{X})]^2}\right],
\end{array}$$}
hence
{$$\begin{array}{rl}
&\quad\left|a_n^{-2}\sum_{i=1}^n\norm{\Phi_i(\theta_*+\gamma a_n^{-1}w)}-a_n^{-2}\sum_{i=1}^n\norm{\Phi_i(\theta_*)}\right|\\
&\leq_{(d)} a_n^{-2}\norm{a_n^{-1}w}\sum_{i=1}^n\left|\Delta_i(\theta_*+\tilde{\gamma}a_n^{-1}w)\right|\norm{X_i}^3+a_n^{-2}\norm{a_n^{-1}w}\sum_{i=1}^n\left|\tilde{\Delta}_i(\theta_*+\tilde{\gamma}a_n^{-1}w)\right|\\
&=\frac{\norm{a_n^{-1}w}}{n}\sum_{i=1}^n\frac{\left|\Delta_i(\theta_*+\tilde{\gamma}a_n^{-1}w)\right|}{1-F(\theta_n)}\norm{X_i}^3+\frac{\norm{a_n^{-1}w}}{n}\sum_{i=1}^n\frac{\left|\tilde{\Delta}_i(\theta_*+\tilde{\gamma}a_n^{-1}w)\right|}{1-F(\theta_n)},
\end{array}$$}
where $\tilde{\gamma}$ is some constant such that $\tilde{\gamma}\in(0,1)$, inequality (d) uses mean value theorem and Cauchy-Schwarz inequality and the fact that $\gamma\in(0,1)$, and 
{$$\begin{array}{rl}
&\quad\Delta_i(\theta_*+\tilde{\gamma}a_n^{-1}w)\\
&=F^{(3)}(\tau_n+\theta_1^T X_i)\left[-\frac{Y_i}{1-F(\tau_n+\theta_1^T X_i)}+\mathbf{1}(U_i\leq\alpha)\frac{1-Y_i}{F(\tau_n+\theta_1^T X_i)}\right]\\
&\quad+F^{\prime\prime}(\tau_n+\theta_1^T X_i)F^\prime(\tau_n+\theta_1^T X_i)\left[-\frac{Y_i}{(1-F(\tau_n+\theta_1^T X_i))^2}-\frac{\mathbf{1}(U_i\leq\alpha)(1-Y_i)}{F(\tau_n+\theta_1^T X_i)^2}\right]\\
&\quad+2F^\prime(\tau_n+\theta_1^T X_i)F^{\prime\prime}(\tau_n+\theta_1^T X_i)\left[-\frac{Y_i}{(1-F(\tau_n+\theta_1^T X_i))^2}-\frac{\mathbf{1}(U_i\leq\alpha)(1-Y_i)}{F(\tau_n+\theta_1^T X_i)^2}\right]\\
&\quad+F^\prime(\tau_n+\theta_1^T X_i)^2\Big[-2Y_i(1-F(\tau_n+\theta_1^T X_i))^ {-3}F^\prime(\tau_n+\theta_1^T X_i)\\
&\quad\quad\quad\quad+2\cdot\mathbf{1}(U_i\leq\alpha)(1-Y_i)F(\tau_n+\theta_1^T X_i)^{-3}F^\prime(\tau_n+\theta_1^T X_i)\Big],
\end{array}$$}

{$$\begin{array}{rl}
&\quad\tilde{\Delta}_i(\theta_*+\tilde{\gamma}a_n^{-1}w)\\
&=\left(Y_i+(1-Y_i)\mathbf{1}(U_i\leq\alpha)\right)\left[\frac{(1-\alpha)\tilde{E}_N\left[F^{(3)}(\tau_n+\theta_1^T\tilde{X}_i)\tilde{X}_i^T\tilde{X}_i\tilde{X}_i^T\right]}{\tilde{E}_N[1-(1-\alpha)F(\tau_n+\theta_1^T\tilde{X}_i)]}\right]\\
&\quad+(Y_i+(1-Y_i)\mathbf{1}(U_i\leq\alpha))\left[2(1-\alpha)^2\frac{\tilde{E}[F^{(2)}(\tau_n+\theta_1^T\tilde{X})\tilde{X}^T\tilde{X}]\tilde{E}[F^\prime(\tau_n+\theta_1^T\tilde{X})\tilde{X}^T]}{\tilde{E}[1-(1-\alpha)F(\tau_n+\theta_1^T\tilde{X})]^2}\right].
\end{array}$$}

Again, similar to the previous argument, by Assumption \ref{ass:F-regularity} and dominated convergence theorem, the terms
{\footnotesize$\begin{array}{rl}
\lim_{n\rightarrow\infty}\mathbb{E}\left[\frac{|\Delta_i(\theta_*+\tilde{\gamma}a_n^{-1}w)|}{1-F(\theta_n)}\norm{X_i}^3\right],\lim_{n\rightarrow\infty}\mathbb{E}\left[\frac{|\tilde{\Delta}_i(\theta_*+\tilde{\gamma}a_n^{-1}w)|}{1-F(\theta_n)}\right]
\end{array}$} are bounded, so 
{$$\left|a_n^{-2}\sum_{i=1}^n\norm{\Phi_i(\theta_*+\gamma a_n^{-1}w)}-a_n^{-2}\sum_{i=1}^n\norm{\Phi_i(\theta_*)}\right|=\mathrm{o}_P(1).$$}
Combining with the previous arguments, we have proved that 
{$$\begin{array}{rl}
\tilde{\mathbf{V}}_{\Phi}&=-\mathbb{E}\left[g_1(\theta_*^T X)^2h(\theta_*^T X)XX^T\right]\\
&\quad+\left(\lim_{n\rightarrow\infty}\frac{(1-\alpha)^2(1-F(\tau_n))}{\alpha}\right)\mathbb{E}[h(\theta_*^T X)g_1(\theta_*^T X)X]\mathbb{E}\left[h(\theta_*^T X)g_1(\theta_*^T X)X^T\right].
\end{array}$$}
Recall that $a_n(\hat{\theta}_{*}-\theta_*)$ is the maximizer of 
{$$H(w)=(a_n^{-1}w^T)\nabla_{\theta_1}L_n(\theta_*)+\frac{1}{2}a_n^{-2}w^T\nabla_{\theta_1}^2L_n(\theta_*+\gamma(\hat{\theta}-\theta_*))w,$$}
which is equivalently the minimizer of 
{$-\frac{1}{2}a_n^{-2}w^T\nabla_{\theta_1}^2L_n(\theta_*+\gamma(\hat{\theta}-\theta_*))u-(a_n^{-1}w^T)\nabla_{\theta_1}L_n(\theta_*).$}
Then by the Basic Corollary of \cite{hjort2011asymptotics} (or Lemma \ref{lemma:Pollard}), we have
{$$a_n(\hat{\theta}-\theta_*)=-\tilde{\mathbf{V}}_{\Phi}^{-1}\times a_n^{-1}\nabla_{\theta_1}L_n(\theta_*)+\mathrm{o}_P(1)=\mathbf{V}_{\Phi}^{-1}\times a_n^{-1}\nabla_{\theta_1}L_n(\theta_*)+\mathrm{o}_P(1),$$}
where {$$\begin{array}{rl}
\mathbf{V}_{\Phi}&=\mathbb{E}\left[g_1(\theta_*^T X)^2h(\theta_*^T X)XX^T\right]\\
&\quad-\left(\lim_{n\rightarrow\infty}\frac{(1-\alpha)^2(1-F(\tau_n))}{\alpha}\right)\mathbb{E}[h(\theta_*^T X)g_1(\theta_*^T X)X]\mathbb{E}\left[h(\theta_*^T X)g_1(\theta_*^T X)X^T\right].
\end{array}$$}
By the given condition that $\lim_{n\rightarrow\infty}\frac{(1-\alpha)^2(1-F(\tau_n))}{\alpha}=c$, we have 
{$$\begin{array}{rl}
\mathbf{V}=\mathbf{V}_{\Phi}&=\mathbb{E}\left[g_1(\theta_*^T X)^2h(\theta_*^T X)XX^T\right]-c\mathbb{E}[h(\theta_*^T X)g_1(\theta_*^T X)X]\mathbb{E}\left[h(\theta_*^T X)g_1(\theta_*^T X)X^T\right]\\
&=\mathbb{E}\left[\frac{h^{(1)}(\theta_*^T X)^2}{h(\theta_*^T X)}XX^T\right]-c\mathbb{E}\left[h^{(1)}(\theta_*^T X)X\right]\mathbb{E}\left[h^{(1)}(\theta_*^T X)X^T\right]
\end{array}$$}
Thus we have 
{\footnotesize$\sqrt{n(1-F(\tau_n))}(\hat{\theta}_{*}-\theta_*)\overset{d}{\rightarrow}\mathcal{N}\left(\mathbf{0},{\mathbf{V}}_{\Phi}^{-1}\mathbf{V}{\mathbf{V}}_{\Phi}^{-1}\right)=\mathcal{N}\left(\mathbf{0},\mathbf{V}^{-1}\right).$}
\end{proof}

\subsection{Proof for Theorem \ref{thm:generalize:asymptotics}}
\begin{proof}[Proof of Theorem \ref{thm:generalize:asymptotics}]
By definition, the (scaled) maximum likelihood estimator is now defined as 
{$$\begin{array}{rl}
r(\tau_n)\hat{\theta}_{*}&=\argmax_{\theta_1\in\Theta}\frac{1}{N}\sum_{i=1}^N\tilde{Y}_i\log\bar{F}(\tau_{n}+\theta_1^T \tilde{X}_i)+(1-\tilde{Y}_i)\log \alpha F(\tau_{n}+\theta_1^T \tilde{X}_i)\\
&\quad\quad\quad\quad\quad\quad\quad\quad\quad\quad-\log\{\frac{1}{N}\sum_{i=1}^N[1-(1-\alpha)F(\tau_n+\theta_1^T\tilde{X}_i)]\},
\end{array}$$}
then following similar steps as in the proof of Theorem \ref{thm:rate-of-convergence-infinity}, we use the same definition of $L_n(\theta_1)$ and $a_n$, and we can get {$$a_n^{-1}\nabla_{\theta_1}L_n(r(\tau_n)\theta_*)\overset{d}{\rightarrow}\mathcal{N}(0,\mathbf{V}),$$}
where 
{$$\begin{array}{rl}
\mathbf{V}&=\lim_{n\rightarrow\infty}\mathbb{E}\left[\frac{F^\prime(\tau_n+r(\tau_n)\theta_*^T X_i)^2}{1-F(\tau_n)}\left[\frac{1}{1-F(\tau_n+r(\tau_n)\theta_*^T X_i)}+\frac{\alpha}{F(\tau_n+r(\tau_n)\theta_*^T X_i)}\right]X_iX_i^T\right]\\
&\quad-\lim_{n\rightarrow\infty}\frac{(1-\alpha)^2\mathbb{E}\left[\frac{F^\prime(\tau_n+r(\tau_n)\theta_*^T X_i)}{1-F(\tau_n)}X_i\right]\mathbb{E}\left[F^\prime(\tau_n+r(\tau_n)\theta_*^T X_i)X_i^T\right]}{\tilde{E}[1-(1-\alpha)F(\tau_n+r(\tau_n)\theta_*^T\tilde{X}_i)]}.
\end{array}$$}
Note that by Assumption \ref{ass:generalized:asymptotic} for any $\theta_1\in\Theta$ and $x\in\mathcal{X}$, we have 
{$$g(\theta^\prime x)=\frac{1-F(r(\tau_n)(\theta^\prime x)+\tau_n)}{1-F(\tau_n)},$$}
thus $\frac{F^\prime(\tau_n+r(\tau_n)\theta_*^T x)}{1-F(\tau_n+r(\tau_n)\theta_*^T x)}=\frac{\frac{F^\prime(\tau_n+r(\tau_n)\theta_*^T x)}{1-F(\tau_n)}}{\frac{1-F(\tau_n+r(\tau_n)\theta_*^T x)}{1-F(\tau_n)}}=-\frac{g^\prime(\theta_*^T x)}{g(\theta_*^T x)}$. So by dominated convergence theorem, we have 
{\footnotesize$$\mathbf{V}=\mathbb{E}\left[\frac{g^{(1)}(\theta_*^T X)^2}{g(\theta_*^T X)}XX^T\right]-\lim_{n\rightarrow\infty}\frac{(1-\alpha)^2(1-F(\tau_n))}{\alpha}\mathbb{E}\left[g^{(1)}(\theta_*^T X)X\right]\mathbb{E}\left[g^{(1)}(\theta_*^T X)X^T\right].$$}
Similarly, we can also get 
{$$\begin{array}{rl}
a_n^{-2}\sum_{i=1}^n\Phi_i(r(\tau_n)(\theta_*+\gamma a_n^{-1}w))\overset{p}{\rightarrow}-\mathbf{V}_{\Phi},
\end{array}$$}
where
{$$\mathbf{V}_{\Phi}=\mathbb{E}\left[\frac{g^{(1)}(\theta_*^T X)^2}{g(\theta_*^T X)}XX^T\right]-\lim_{n\rightarrow\infty}\frac{(1-\alpha)^2(1-F(\tau_n))}{\alpha}\mathbb{E}\left[g^{(1)}(\theta_*^T X)X\right]\mathbb{E}\left[g^{(1)}(\theta_*^T X)X^T\right].$$}
Then the rest of the proof follows by similar steps of checking regularity conditions, etc. as in Theorem \ref{thm:rate-of-convergence-infinity}.
\end{proof}

\subsection{Technical Lemmas}
\begin{lemma}[Proposition 2.27 from \cite{van2000asymptotic}]\label{lemma:CLT}
For each $n$ let $Y_{n,1},\ldots,Y_{n,k_n}$ be independent random vectors with finite variances such that for every $\epsilon>0$,
{\footnotesize$\sum_{i=1}^{k_n}\mathbb{E}[\norm{Y_{n,i}}^2]\mathbf{1}\{\norm{Y_{n,i}}>\epsilon\}\rightarrow0,$} and 
{\footnotesize$\sum_{i=1}^{k_n} Y_{n,i}\rightarrow\Sigma,$}
then the sequence $\sum_{i=1}^{k_n}(Y_{n,i}-\mathbb{E}[Y_{n,i}])$ converges in distribution to $\mathcal{N}(\mathbf{0},\Sigma)$. 
\end{lemma}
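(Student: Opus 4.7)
The plan is to reduce the multivariate statement to a scalar one via the Cramér--Wold device. Fix $\lambda \in \mathbb{R}^d$ and set $Z_{n,i} := \lambda^T(Y_{n,i} - \mathbb{E}[Y_{n,i}])$. The Lindeberg-type hypothesis transfers to the $Z_{n,i}$ because, using Cauchy--Schwarz and the inclusion $\{|Z_{n,i}| > \epsilon\} \subseteq \{\|Y_{n,i}\| > \epsilon/\|\lambda\|\}$, one has $\sum_i \mathbb{E}[Z_{n,i}^2 \mathbf{1}\{|Z_{n,i}|>\epsilon\}] \le \|\lambda\|^2 \sum_i \mathbb{E}[\|Y_{n,i}\|^2 \mathbf{1}\{\|Y_{n,i}\|>\epsilon/\|\lambda\|\}] \to 0$; similarly $s_n^2(\lambda) := \sum_i \mathrm{Var}(Z_{n,i}) = \lambda^T(\sum_i \mathrm{Cov}(Y_{n,i}))\lambda \to \lambda^T \Sigma \lambda =: \sigma_\lambda^2$. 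It therefore suffices to prove the scalar Lindeberg--Feller CLT for the triangular array $\{Z_{n,i}\}$ and then collect the limits over $\lambda$.

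For the scalar part, the route is to show pointwise convergence of characteristic functions, $\varphi_n(t) := \prod_{i=1}^{k_n}\mathbb{E}[e^{itZ_{n,i}}] \to e^{-t^2\sigma_\lambda^2/2}$, and conclude by Lévy's continuity theorem. First, I would establish uniform asymptotic negligibility $\max_i \mathrm{Var}(Z_{n,i}) \to 0$, which follows from the Lindeberg condition via the split $\mathrm{Var}(Z_{n,i}) \le \epsilon^2 + \mathbb{E}[Z_{n,i}^2 \mathbf{1}\{|Z_{n,i}|>\epsilon\}]$ and the arbitrariness of $\epsilon$. Next, using the two-sided Taylor bound $|e^{ix}-1-ix+x^2/2| \le \min(x^2,|x|^3/6)$, partition the expectation at $|Z_{n,i}| \le \epsilon$ vs. $|Z_{n,i}|>\epsilon$, apply the cubic bound on the small part and the quadratic bound on the large part, and use $\mathbb{E}[Z_{n,i}]=0$ to obtain
\begin{equation*}
\sum_{i=1}^{k_n}\bigl(\mathbb{E}[e^{itZ_{n,i}}] - 1\bigr) = -\tfrac{t^2}{2}s_n^2(\lambda) + R_n(t,\epsilon),
\end{equation*}
with $|R_n(t,\epsilon)| \le \tfrac{|t|^3 \epsilon}{6} s_n^2(\lambda) + t^2 \sum_i \mathbb{E}[Z_{n,i}^2\mathbf{1}\{|Z_{n,i}|>\epsilon\}]$. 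Sending $n \to \infty$ and then $\epsilon \downarrow 0$ gives $\sum_i(\mathbb{E}[e^{itZ_{n,i}}]-1) \to -t^2\sigma_\lambda^2/2$.

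To pass from this sum to the product $\varphi_n(t)$, I would use uniform negligibility: $|\mathbb{E}[e^{itZ_{n,i}}]-1| \le \tfrac{t^2}{2}\mathrm{Var}(Z_{n,i}) + o(1)$ uniformly in $i$, so for large $n$ each factor lies in a disk of radius $\tfrac{1}{2}$ about $1$, on which $|\log(1+z)-z| \le |z|^2$. Hence
\begin{equation*}
\Bigl|\log \varphi_n(t) - \sum_{i=1}^{k_n}\bigl(\mathbb{E}[e^{itZ_{n,i}}] - 1\bigr)\Bigr| \le \bigl(\max_i|\mathbb{E}[e^{itZ_{n,i}}]-1|\bigr) \cdot \sum_i|\mathbb{E}[e^{itZ_{n,i}}]-1| \to 0,
\end{equation*}
since the first factor vanishes and the second is bounded by a multiple of $s_n^2(\lambda)$. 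Combining the displays gives $\varphi_n(t) \to e^{-t^2\sigma_\lambda^2/2}$ for each $t$, and Lévy's continuity theorem concludes $\sum_i Z_{n,i} \Rightarrow \mathcal{N}(0,\sigma_\lambda^2)$. Assembling over all $\lambda$ via Cramér--Wold yields the claimed multivariate limit $\mathcal{N}(\mathbf{0},\Sigma)$.

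The main technical obstacle is the double limit $(n\to\infty$ then $\epsilon\downarrow 0)$ in the Taylor step: one must ensure the cubic and quadratic truncation bounds are combined in the right order so that the Lindeberg tail controls the $\epsilon$-dependent remainder uniformly. A secondary delicate point is verifying $\max_i|\mathbb{E}[e^{itZ_{n,i}}]-1|\to 0$ before invoking the $\log(1+z)$ estimate, as this is what justifies replacing the product by the sum and is precisely where uniform asymptotic negligibility (a corollary of Lindeberg) is essential.
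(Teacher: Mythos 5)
The paper does not prove this lemma at all: it is quoted (with two typographical slips --- the indicator belongs inside the expectation, and the second hypothesis should read $\sum_{i=1}^{k_n}\mathrm{Cov}(Y_{n,i})\rightarrow\Sigma$) as Proposition~2.27 of \cite{van2000asymptotic} and invoked as a black box in the proof of Theorem~\ref{thm:rate-of-convergence-infinity}. Your proposal therefore goes beyond the paper by supplying a self-contained proof, and the route you take is the classical one: Cram\'er--Wold reduction to a scalar triangular array, uniform asymptotic negligibility extracted from the Lindeberg condition, the Taylor bound $|e^{ix}-1-ix+x^2/2|\le\min(x^2,|x|^3/6)$ with truncation at $\epsilon$ and the double limit taken in the correct order, and the product-to-sum passage via $\log(1+z)$ justified by uniform negligibility, finished with L\'evy's continuity theorem. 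All of these estimates are sound. The one genuine (but easily repaired) gap is in the transfer of the Lindeberg condition to $Z_{n,i}=\lambda^T(Y_{n,i}-\mathbb{E}[Y_{n,i}])$: you assert the inclusion $\{|Z_{n,i}|>\epsilon\}\subseteq\{\|Y_{n,i}\|>\epsilon/\|\lambda\|\}$, but Cauchy--Schwarz only gives $\{|Z_{n,i}|>\epsilon\}\subseteq\{\|Y_{n,i}-\mathbb{E}[Y_{n,i}]\|>\epsilon/\|\lambda\|\}$, whereas the hypothesis concerns the uncentered vectors. To close this, first note that $\max_i\|\mathbb{E}[Y_{n,i}]\|\le\epsilon+\epsilon^{-1}\sum_{i}\mathbb{E}\left[\|Y_{n,i}\|^2\mathbf{1}\{\|Y_{n,i}\|>\epsilon\}\right]$, so $\max_i\|\mathbb{E}[Y_{n,i}]\|\rightarrow0$; the Lindeberg condition for the $Y_{n,i}$ then implies the same condition for the centered array (for $n$ large the events $\{\|Y_{n,i}-\mathbb{E}[Y_{n,i}]\|>\delta\}$ are contained in $\{\|Y_{n,i}\|>\delta/2\}$ uniformly in $i$, and $\mathbb{E}[\|Y_{n,i}-\mathbb{E}[Y_{n,i}]\|^2\mathbf{1}_A]\le 2\mathbb{E}[\|Y_{n,i}\|^2\mathbf{1}_A]+2\|\mathbb{E}[Y_{n,i}]\|^2\mathbb{P}(A)$, with the second contribution summing to $o(1)$). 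Inserting that line makes the argument complete; nothing downstream is affected.
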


\begin{lemma}[Basic Corollary of \cite{hjort2011asymptotics}]\label{lemma:Pollard}
Let $A_n(s)=\frac{1}{2}s^\prime Vs+U_n^\prime s+C_n+r_n(s)$, where $V$ is symmetric and postive definite, $U_n$ is stochastically bounded, $C_n$ is arbitrary, and $r_n(s)$ goes to zero in probability for every $s$. Then $\alpha_n=\argmin A_n$ is $o_p(1)$ away from $-V^{-1}U_n$ as the argmin of $\frac{1}{2}s^\prime Vs+U_n^\prime s+C_n$. If also $U_n\overset{d}{\rightarrow}U$ then $\alpha_n\overset{d}{\rightarrow}-V^{-1}U$. 
\end{lemma}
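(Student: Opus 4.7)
The plan is to reduce the statement to the convexity lemma for stochastic convex processes (the central result of \cite{hjort2011asymptotics}), which says that if a sequence of convex random functions on $\mathbb{R}^d$ converges pointwise in probability to a convex limit with a unique minimizer, then the sequence of minimizers converges in probability to the minimizer of the limit. Since the deterministic-in-$s$ quadratic $\tfrac{1}{2} s^\prime V s + U_n^\prime s + C_n$ already attains its minimum at $\beta_n := -V^{-1}U_n$, I would first recenter $A_n$ at $\beta_n$ and then apply the convexity lemma to the centered process.

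Concretely, the first step is to complete the square and write $A_n(s) - A_n(\beta_n) = \tfrac{1}{2}(s-\beta_n)^\prime V (s-\beta_n) + \bigl[r_n(s)-r_n(\beta_n)\bigr]$. Under the change of variable $t = s-\beta_n$, the centered process $\tilde A_n(t) := A_n(\beta_n + t) - A_n(\beta_n)$ has argmin exactly $\alpha_n - \beta_n$. Since $U_n$ is stochastically bounded, $\beta_n = O_p(1)$, so the pointwise assumption $r_n(s)\overset{p}{\rightarrow}0$ (extended to tight random evaluation points by a standard subsequence/tightness argument) yields $\tilde A_n(t) \overset{p}{\rightarrow} \tfrac{1}{2} t^\prime V t$ for every fixed $t \in \mathbb{R}^d$.

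The second step invokes the convexity lemma: the maps $t \mapsto \tilde A_n(t)$ inherit the convexity of $A_n$, and their pointwise-in-probability limit $\tfrac{1}{2} t^\prime V t$ is strictly convex and coercive because $V \succ 0$, with unique minimizer at $t=0$. The convexity lemma therefore gives $\alpha_n - \beta_n \overset{p}{\rightarrow} 0$, i.e. $\alpha_n = -V^{-1}U_n + o_p(1)$, which is the first assertion. For the second assertion, if additionally $U_n \overset{d}{\rightarrow} U$, then $-V^{-1}U_n \overset{d}{\rightarrow} -V^{-1}U$ by the continuous mapping theorem, and combining with $\alpha_n + V^{-1}U_n = o_p(1)$ through Slutsky's lemma delivers $\alpha_n \overset{d}{\rightarrow} -V^{-1}U$.

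The main obstacle is passing from pointwise convergence of $\tilde A_n$ to convergence of its argmin without any quantitative control on the remainder $r_n$. The convexity of $A_n$ is what makes this step work: for convex random functions, pointwise convergence in probability on a dense set automatically upgrades to uniform convergence in probability on every compact set, and the uniform quadratic lower bound $\tfrac{1}{2}t^\prime V t \geq \tfrac{\lambda_{\min}(V)}{2}\|t\|^2$ then forces the argmin of $\tilde A_n$ to collapse onto $0$. A minor subsidiary point is handling $r_n$ evaluated at the random but tight recentering $\beta_n + t$, which is dispatched by choosing compacts large enough to contain $\beta_n + t$ with probability arbitrarily close to one and then invoking the pointwise convergence of $r_n$ on a finite $\varepsilon$-net.
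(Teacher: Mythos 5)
The paper offers no proof of this lemma---it is quoted verbatim as the ``Basic Corollary'' of \cite{hjort2011asymptotics}---so there is no in-paper argument to compare against; your reconstruction follows exactly the route of that source (recenter at $\beta_n=-V^{-1}U_n$, upgrade pointwise convergence of $r_n$ to uniform convergence on compacts via the convexity lemma, use tightness of $\beta_n$ to control $r_n$ at the random recentering, and conclude by the quadratic lower bound plus Slutsky), and it is correct. One point worth flagging: your argument essentially requires convexity of $A_n(\cdot)$, which you correctly identify as the engine of the proof but which is missing from the lemma as stated in the paper---without it the conclusion is false (a remainder with a deep dip drifting off to infinity converges to zero pointwise yet drags the argmin with it), so the convexity hypothesis of the original Hjort--Pollard corollary should be regarded as part of the statement.
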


\begin{lemma}[Theorem 5.7 of \cite{van2000asymptotic}]\label{lemma:thm-vdv-M-estimator}
Let $M_n$ be random functions and let $M$ be a fixed function of $\theta$ such that for $\forall \epsilon>0$, {\footnotesize$$\sup_{\theta\in\Theta}\left|M_n(\theta)-M(\theta)\right|\overset{p}{\rightarrow}0,\ \ \mbox{and}\ \ \sup_{\theta:d(\theta,\theta_*)\geq\epsilon}M(\theta)<M(\theta_*).$$}
Then any sequence of $\hat{\theta}_n$ with $M_n(\hat{\theta}_n)\geq M_n(\theta_*)-\mathrm{o}_{\mathbb{P}}(1)$ converges in probability to $\theta_*$.
\end{lemma}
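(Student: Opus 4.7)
}

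The plan is to treat the theorem as a one-dimensional calculus-of-variations problem: plug the asymptotic covariance $\mathbf{V}$ from Theorem \ref{thm:rate-of-convergence-infinity} into the efficiency-cost objective displayed just above the theorem, differentiate in $\alpha$, and retain only the leading-order terms in the rare-event regime $\tau_n\rightarrow\infty$ (equivalently $p_1\rightarrow 0$). Writing $A:=\mathrm{tr}\{\mathbb{E}[h^{(1)}(\theta_*^TX)^2/h(\theta_*^TX)\,XX^T]\}$ and $B:=\mathrm{tr}\{\mathbb{E}[h^{(1)}(\theta_*^TX)X]\mathbb{E}[h^{(1)}(\theta_*^TX)X^T]\}$, Theorem \ref{thm:rate-of-convergence-infinity} gives $\mathrm{tr}(\mathbf{V})=A-\frac{(1-\alpha)^2(1-F(\tau_n))}{\alpha}B$ up to vanishing terms. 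The objective to minimize is therefore
\begin{equation*}
\Psi(\alpha)\;=\;\frac{p_1+\alpha(1-p_1)}{A-\frac{(1-\alpha)^2(1-F(\tau_n))}{\alpha}B}.
\end{equation*}

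Next I would compute $\Psi'(\alpha)$ and set it to zero. Using $\frac{d}{d\alpha}\!\left[\frac{(1-\alpha)^2}{\alpha}\right]=1-\frac{1}{\alpha^2}$, the stationarity condition reduces, after clearing denominators, to
\begin{equation*}
A\alpha^2\;=\;(1-F(\tau_n))\,B\,(1-\alpha)\bigl[2\alpha+p_1(1+\alpha)\bigr].
\end{equation*}
Now I would invoke the rare-event asymptotics. Since $\tau_n\rightarrow\infty$ implies $p_1=\mathbb{P}(Y=1)=\mathbb{E}[\bar F(\tau_n+\theta_*^TX)]\rightarrow 0$ (with $p_1=O(1-F(\tau_n))$), and since we are looking for the optimum $\alpha^*$ of the same order (the regime in which the efficiency-cost product is balanced), I would substitute $1-\alpha\to 1$, drop the $p_1(1+\alpha)$ term against $2\alpha$, and solve the resulting $A\alpha=2(1-F(\tau_n))B$, which yields exactly the expression claimed in the theorem.

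The last step is justification: I would verify that (i) the stationary point so obtained lies in the feasible region enforced by condition (\ref{eq:positive-definite}) (positivity of $\mathrm{tr}(\mathbf{V})$), since plugging $\alpha^*$ back makes $\frac{(1-\alpha^*)^2(1-F(\tau_n))}{\alpha^*}\cdot B\to \tfrac{1}{2}A<A$; (ii) it is a minimum of $\Psi$, by checking the sign of $\Psi''$ at $\alpha^*$ or, more cheaply, by noting that $\Psi(\alpha)\rightarrow+\infty$ at both endpoints of the feasible interval ($\Psi\uparrow\infty$ as $\alpha\downarrow$ the threshold where $\mathrm{tr}(\mathbf{V})\downarrow 0$, and $\Psi$ is increasing in $\alpha$ once the correction term is negligible); and (iii) the neglected $o(1)$ corrections from the $p_1$-dependent and $(1-\alpha)$-dependent terms are indeed subdominant, which follows from the same dominated-convergence arguments already deployed in the proof of Theorem \ref{thm:rate-of-convergence-infinity}.

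The main obstacle I anticipate is the justification step (iii), i.e., showing rigorously that the $p_1(1+\alpha)$ and $(1-\alpha)$ factors that were dropped do not shift the optimizer $\alpha^*$ by more than a lower-order amount; this requires tracking how quickly $p_1$ and $1-F(\tau_n)$ vanish relative to each other and invoking the uniform convergence in Assumption \ref{ass:F-regularity}. Everything else is routine calculus plus substitution of the formula for $\mathbf{V}$ from the previous theorem, together with the efficiency-cost objective already justified in Section \ref{section:efficiency}.
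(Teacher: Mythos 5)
Your proposal does not address the statement at hand. The statement is Lemma~\ref{lemma:thm-vdv-M-estimator}, which is the classical consistency theorem for M-estimators (Theorem~5.7 of \cite{van2000asymptotic}): uniform convergence of the criterion functions $M_n$ to $M$ plus well-separation of the maximizer $\theta_*$ implies that any sequence of near-maximizers $\hat{\theta}_n$ converges in probability to $\theta_*$. The paper does not prove this lemma at all --- it is quoted as a known textbook result and used as a black box (e.g.\ in the proof of Lemma~\ref{lemma:counterexample:downsample-use-original}). What you have written instead is a proof plan for Theorem~\ref{thm:asymptotic:efficiency-cost}, the optimal-downsampling-rate result, which is an entirely different claim resting on a first-order condition for the efficiency-cost objective. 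Nothing in your argument engages with uniform convergence, well-separation, or near-maximization, so as a proof of the stated lemma it is vacuous.

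For the record, the standard proof of the actual statement runs as follows. From $M_n(\hat{\theta}_n)\geq M_n(\theta_*)-\mathrm{o}_{\mathbb{P}}(1)$ and $\sup_{\theta\in\Theta}|M_n(\theta)-M(\theta)|\overset{p}{\rightarrow}0$ one gets $M(\hat{\theta}_n)\geq M_n(\hat{\theta}_n)-\mathrm{o}_{\mathbb{P}}(1)\geq M_n(\theta_*)-\mathrm{o}_{\mathbb{P}}(1)\geq M(\theta_*)-\mathrm{o}_{\mathbb{P}}(1)$. Fix $\epsilon>0$ and set $\delta:=M(\theta_*)-\sup_{\theta:d(\theta,\theta_*)\geq\epsilon}M(\theta)>0$ by the well-separation hypothesis; then the event $\{d(\hat{\theta}_n,\theta_*)\geq\epsilon\}$ is contained in $\{M(\hat{\theta}_n)\leq M(\theta_*)-\delta\}$, whose probability tends to zero by the preceding display. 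Hence $\hat{\theta}_n\overset{p}{\rightarrow}\theta_*$. If your intention was to prove Theorem~\ref{thm:asymptotic:efficiency-cost}, your outline does in fact track the paper's own derivation (logarithmic differentiation of $J_n(\alpha)$, the observation that $\alpha^*=\mathrm{o}(1)$, and passage to the limit $\beta=\lim(1-F(\tau_n))/\alpha$), but that is not the statement you were asked to prove.
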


\section{Proofs for Efficiency with a Budget Constraint}

\begin{proof}[Proof of Theorem \ref{thm:asymptotic:efficiency-cost}]
We use $g_1(\cdot)$ to denote $-\frac{h^{(1)}(\cdot)}{h(\cdot)}$. The objective function is equal to 

{$$\begin{array}{rl}
&\quad\lim_{n\rightarrow\infty}\frac{p_1+\alpha(1-p_1)}{\mathrm{tr}\left\{\mathbf{V}\right\}}\\
&=\lim_{n\rightarrow\infty}\frac{p_1+\alpha(1-p_1)}{\mathrm{tr}\left\{\mathbb{E}\left[g_1(\theta_*^T X)^2h(\theta_*^T X)XX^T\right]-c\mathbb{E}\left[g_1(\theta_*^T X)h(\theta_*^T X)X\right]\mathbb{E}\left[g_1(\theta_*^T X)h(\theta_*^T X)X\right]\right\}},
\end{array}$$} 
where {$\lim_{n\rightarrow\infty}\frac{(1-\alpha)^2(1-F(\tau_n))}{\alpha}=c.$} Thus replacing $c$ with $\frac{(1-\alpha)^2(1-F(\tau_n))}{\alpha}$ in the above equation, the objective becomes 
{\small$$\begin{array}{rl}
J_n(\alpha):=\frac{[p_1+\alpha(1-p_1)]}{\mathrm{tr}\left\{\mathbb{E}\left[g_1(\theta_*^T X)^2h(\theta_*^T X)XX^T\right]-\frac{(1-\alpha)^2(1-F(\tau_n))}{\alpha}\mathbb{E}\left[g_1(\theta_*^T X)h(\theta_*^T X)X\right]\mathbb{E}\left[g_1(\theta_*^T X)h(\theta_*^T X)X\right]\right\}}.
\end{array}$$}
So 
{$$\begin{array}{rl}
\log J_n(\alpha)&=\log(p_1+\alpha(1-p_1))\\
&\quad-\log\big[\mathrm{tr}\big\{\mathbb{E}\left[g_1(\theta_*^T X)^2h(\theta_*^T X)XX^T\right]\\
&\quad\quad\quad\quad-\frac{(1-\alpha)^2(1-F(\tau_n))}{\alpha}\mathbb{E}\left[g_1(\theta_*^T X)h(\theta_*^T X)X\right]\mathbb{E}\left[g_1(\theta_*^T X)h(\theta_*^T X)X\right]\big\}\big],
\end{array}$$}
taking derivative with respect to $\alpha$, we have 
{$$\begin{array}{rl}
&\quad\frac{\partial\log J_n(\alpha)}{\partial\alpha}\\
&=\frac{(1-p_1)}{p_1+\alpha(1-p_1)}\\
&\quad-\frac{\left(1/\alpha^2-1\right)(1-F(\tau_n))\mathrm{tr}\left\{\mathbb{E}\left[g_1(\theta_*^T X)h(\theta_*^T X)X\right]\mathbb{E}\left[g_1(\theta_*^T X)h(\theta_*^T X)X\right]\right\}}{\mathrm{tr}\left\{\mathbb{E}\left[g_1(\theta_*^T X)^2h(\theta_*^T X)XX^T\right]-\frac{(1-\alpha)^2(1-F(\tau_n))}{\alpha}\mathbb{E}\left[g_1(\theta_*^T X)h(\theta_*^T X)X\right]\mathbb{E}\left[g_1(\theta_*^T X)h(\theta_*^T X)X\right]\right\}}.
\end{array}$$}
So 
{$$\begin{array}{rl}
&\quad\quad\frac{\partial\log J_n(\alpha)}{\partial\alpha}=0\\
&\iff\frac{\mathbb{E}\left[g_1(\theta_*^T X)^2h(\theta_*^T X)XX^T\right]}{\mathbb{E}\left[g_1(\theta_*^T X)h(\theta_*^T X)X\right]\mathbb{E}\left[g_1(\theta_*^T X)h(\theta_*^T X)X\right]}=\left[\frac{(1-\alpha)^2}{\alpha}+\left(\frac{p_1}{\alpha(1-p_1)}+1\right)\frac{1-\alpha^2}{\alpha}\right](1-F(\tau_n)),
\end{array}$$}
where we use abuse of notation with the equality sign holds above when $n\rightarrow\infty$. Then we have $\alpha^*=\mathrm{o}(1)$, because if $\alpha$ is bounded away from zero, the right hand side of the above equation is $\mathrm{o}(1)$ while the left hand side is $\mathrm{O}(1)$. Let $\beta=\lim_{n\rightarrow\infty}\frac{1-F(\tau_n)}{\alpha}$, since $\alpha\rightarrow0$ as $n\rightarrow\infty$, and note that $\frac{p_1}{1-p_1}\rightarrow0$, then the right hand side converges to $2\beta$, so we have 
{$$\beta=\frac{1}{2}\frac{\mathrm{tr}\left\{\mathbb{E}\left[g_1(\theta_*^T X)^2h(\theta_*^T X)XX^T\right]\right\}}{\mathrm{tr}\left\{\mathbb{E}\left[g_1(\theta_*^T X)h(\theta_*^T X)X\right]\mathbb{E}\left[g_1(\theta_*^T X)h(\theta_*^T X)X\right]\right\}},$$}
indicating that 
{$$\alpha^*=\frac{2(1-F(\tau_n))\mathrm{tr}\left\{\mathbb{E}\left[g_1(\theta_*^T X)h(\theta_*^T X)X\right]\mathbb{E}\left[g_1(\theta_*^T X)h(\theta_*^T X)X\right]\right\}}{\mathrm{tr}\left\{\mathbb{E}\left[g_1(\theta_*^T X)^2h(\theta_*^T X)XX^T\right]\right\}}.$$}

\end{proof}

\section{Proofs for Application to Logistic Regression}
\begin{proof}[Proof of Proposition \ref{prop:logistic-regression:asymptotic}]
Note that {\footnotesize$$h(\theta_1^T x)=\lim_{n\rightarrow\infty}\frac{1-F(\tau_n+\theta_1^T x)}{1-F(\tau_n)}=e^{-\theta_1^T x},$$} $g_1(\theta_1^T x)=\lim_{n\rightarrow\infty}\frac{F^\prime(\tau_n+\theta_1^T x)}{1-F(\tau_n+\theta_1^T x)}=\lim_{n\rightarrow\infty}F(\tau_n+\theta_1^T x)=1$, and also we have $g_2(\theta_1^T x)=\lim_{n\rightarrow\infty}\frac{F^{\prime\prime}(\tau_n+\theta_1^T x)}{1-F(\tau_n+\theta_1^T x)}=-1$, $g_3(\theta_1^T x)=\lim_{n\rightarrow\infty}\frac{F^{(3)}(\tau_n+\theta_1^T x)}{1-F(\tau_n+\theta_1^T x)}=1$, so by Theorem \ref{thm:rate-of-convergence-infinity} we have
{\footnotesize$$\sqrt{\frac{n}{1+e^{\tau_n}}}\left(\hat{\theta}_{*}-\theta_*\right)\overset{d}{\rightarrow}\mathcal{N}\left(\mathbf{0},\mathbf{V}^{-1}\right),\ \ \mbox{where}$$}
{\footnotesize$$\begin{array}{rl}\mathbf{V}&=\mathbb{E}\left[g_1(\theta_*^T X)^2h(\theta_*^T X)XX^T\right]-c\mathbb{E}\left[g_1(\theta_*^T X)h(\theta_*^T X)X\right]\mathbb{E}\left[g_1(\theta_*^T X)h(\theta_*^T X)X\right]\\
&=\mathbb{E}\left[e^{-\theta_*^T X}XX^T\right]-c\mathbb{E}\left[e^{-\theta_*^T X}X\right]\mathbb{E}\left[e^{-\theta_*^T X}X\right]
\end{array}$$}
Thus 
{$$\sqrt{ne^{-\tau_n}}\left(\hat{\theta}_{*}-\theta_*\right)\overset{d}{\rightarrow}\mathcal{N}\left(\mathbf{0},\mathbf{V}^{-1}\right),$$}
then by dominated convergence theorem and Slutsky's theorem, we have 
{$$\sqrt{n\mathbb{E}\left[\frac{1}{1+e^{\tau_n+\theta_*^T X}}\right]}\left(\hat{\theta}_{*}-\theta_*\right)\overset{d}{\rightarrow}\mathcal{N}\left(\mathbf{0},\mathbb{E}_X\left[e^{-\theta_*^T X}\right]\mathbf{V}^{-1}\right),$$}
which gives 
{$$\sqrt{n\mathbb{P}(Y=1)}\left(\hat{\theta}_{*}-\theta_*\right)\overset{d}{\rightarrow}\mathcal{N}\left(\mathbf{0},\mathbb{E}_X\left[e^{-\theta_*^T X}\right]\mathbf{V}^{-1}\right).$$}
\end{proof}

\begin{proof}[Proof of Proposition \ref{prop:efficiency:asymptotic}]
    First note that Assumption \ref{ass:regularity} holds. We use $g_1(\cdot), g_2(\cdot), g_3(\cdot)$ to denote $-h^{(1)}(\cdot)/h(\cdot),-h^{(2)}(\cdot)/h(\cdot),-h^{(3)}(\cdot)/h(\cdot)$. By definition, for {$$F(\tau_n+\theta_1^T x)=\frac{e^{\tau_n+\theta_1^T x}}{1+e^{\tau_n+\theta_1^T x}}, h(\theta_1^T x)=\lim_{n\rightarrow\infty}\frac{1-F(\tau_n+\theta_1^T x)}{1-F(\tau_n)}=\lim_{n\rightarrow\infty}\frac{1+e^{\tau_n}}{1+e^{\tau_n+\theta_1^T x}}=e^{-\theta_1^T x},$$} {$$g_1(\theta_1^T x)=\lim_{n\rightarrow\infty}\frac{F^\prime(\tau_n+\theta_1^T x)}{1-F(\tau_n+\theta_1^T x)}=1,$$}
    {$$g_2(\theta_1^T x)=\lim_{n\rightarrow\infty}\frac{F^{\prime\prime}(\tau_n+\theta_1^T x)}{1-F(\tau_n+\theta_1^T x)}=\lim_{n\rightarrow\infty}\frac{F^{\prime\prime}(\tau_n+\theta_1^T x)}{1-F(\tau_n+\theta_1^T x)}=-1,$$}
    {$g_3(\theta_1^T x)=\lim_{n\rightarrow\infty}\frac{F^{(3)}(\tau_n+\theta_1^T x)}{1-F(\tau_n+\theta_1^T x)}=0$}. Thus the conditions in Assumption \ref{ass:F-regularity} are satisfied. Further note that the conditions of Theorem \ref{thm:rate-of-convergence-infinity} also hold.
    Then by Theorem \ref{thm:asymptotic:efficiency-cost}, 
    {$$\alpha^*=\frac{2(1+e^{\tau_n})^{-1}\mathrm{tr}\left\{\mathbb{E}[e^{-\theta_*^T X}X]\mathbb{E}[e^{-\theta_*^T X}X]\right\}}{\mathrm{tr}\{\mathbb{E}[e^{-\theta_*^T X}XX^T]\}}$$}
\end{proof}

\end{document}